\DeclareBoldMathCommand{\one}{1}
\newcommand{\R}{\mathbb{R}}
\newcommand{\reals}{\mathbb{R}}
\newcommand{\NN}{\mathbb{N}}
\renewcommand{\epsilon}{\varepsilon}
\newcommand{\eps}{\varepsilon}
\newcommand{\NORM}[1]{\lVert #1 \lVert}
\newcommand{\actions}{\mathcal{A}}
\newcommand{\IP}[2]{\langle #1 , #2 \rangle}
\newcommand{\domX}{\mathcal{X}}
\newcommand{\domY}{\mathcal{Y}}
\newcommand{\sample}{\mathcal{S}}
\newcommand{\funclass}{\mathcal{F}}
\newcommand{\ball}{\mathcal{B}}
\newcommand{\loss}{\ell}
\newcommand{\logloss}{\ell_\textnormal{log}}
\newcommand{\der}{\mathrm{d}}
\newcommand{\intder}{\,\der}
\newcommand{\union}{\cup}
\newcommand{\bmax}{\vee}   
\newcommand{\bmin}{\wedge} 
\newcommand{\pnml}{p_\textnormal{nml}} 
\newcommand{\mlmu}{\hat{\mu}}       
\newcommand{\mltheta}{\hat{\theta}} 
\newcommand{\normaldist}{\mathcal{N}} 
\newcommand{\bernoullidist}{\mathcal{B}} 
\newcommand{\alphamax}{\alpha_\textnormal{max}} 
\newcommand{\alphamin}{\alpha_\textnormal{min}} 
\newcommand{\Xmin}{X_\textnormal{min}} 
\newcommand{\ind}[1]{1_{[#1]}} 
\newcommand{\ceil}[1]{\left\lceil #1 \right\rceil}
\newcommand{\floor}[1]{\left\lfloor #1 \right\rfloor}
\renewcommand{\leq}{\leqslant}
\renewcommand{\geq}{\geqslant}
\DeclareMathOperator*{\E}{\mathbb{E}}
\DeclareMathOperator{\Ber}{Ber}
\DeclareMathOperator{\Vol}{Vol}
\DeclareMathOperator{\Tr}{Tr}
\DeclareMathOperator{\KL}{KL}
\DeclareMathOperator{\argmin}{argmin}
\DeclareMathOperator{\argmax}{argmax}
\DeclareMathOperator{\sign}{sign}
\DeclareMathOperator{\Proj}{Proj}
\newcommand{\algname}[1]{\rmfamily{#1}}
\newtheorem*{theorem*}{Theorem}
\newtheorem*{lemma*}{Lemma}
\title{Scale-free Unconstrained Online Learning for Curved Losses}
\begin{document}

\maketitle

\begin{abstract}
A sequence of works in unconstrained online convex optimisation have
investigated the possibility of adapting simultaneously to the norm $U$
of the comparator and the maximum norm $G$ of the gradients. In full
generality, matching upper and lower bounds are known which show that
this comes at the unavoidable cost of an additive $G U^3$, which is not
needed when either $G$ or $U$ is known in advance. Surprisingly, recent
results by \citet{kempka2019adaptive} show that no such price for
adaptivity is needed in the specific case of $1$-Lipschitz losses like
the hinge loss. We follow up on this observation by showing that there
is in fact never a price to pay for adaptivity if we specialise to any
of the other common supervised online learning losses: our results cover
log loss, (linear and non-parametric) logistic regression, square loss
prediction, and (linear and non-parametric) least-squares regression. We also fill in several gaps in the literature
by providing matching lower bounds with an explicit dependence on $U$.
In all cases we obtain scale-free algorithms, which are suitably
invariant under rescaling of the data. Our general goal is to
establish achievable rates without concern for computational efficiency,
but for linear logistic regression we also provide an adaptive method
that is as efficient as the recent non-adaptive algorithm by
\citet{AgarwalKaleZimmert2021}.
\end{abstract}

\begin{keywords}%
  Online convex optimisation, supervised online learning,
  comparator-adaptive, Lipschitz-adaptive, mixable loss
\end{keywords}

\section{Introduction}

The problem of hyperparameter tuning is ubiquitous across machine
learning. We study it in the context of \emph{online supervised
learning} (see e.g.\ \citep{RakhlinSridharanTewari2015}), in which a
learner needs to issue sequential predictions over the course of $T$
rounds. At the start of each round $t$, the learner first receives a
feature vector $x_t$, and then issues a prediction $a_t$ of the
corresponding response $y_t$. Performance is measured by the
\emph{regret}, which is the difference between the sum of the learner's
losses $\loss(a_t,y_t)$ and the sum of the losses
$\loss(f_\theta(x_t),y_t)$ suffered by the best comparator function
$f_\theta$ from a function class $\funclass$ indexed by parameters
$\theta$.

There are two main types of hyperparameters: on the one hand it is
desirable to adapt automatically to the norm $\|\theta\|$ of the
optimal comparator parameters; on the other hand we want algorithms that
do not have to know the scale of the data $x_t$ and $y_t$ beforehand.
These issues have frequently been studied in the context of \emph{online
convex optimisation} (OCO) \citep{HazanOCO}, where underlying
details of the setup are abstracted away by assuming only that the
functions $\loss_t(\theta) := \loss(f_\theta(x_t),y_t)$ are convex and
requiring predictions to be of the form $a_t = f_{\theta_t}(x_t)$ for
some $\theta_t$. The scale of the data then comes in through the maximum
length $G$ of the gradients $g_t := \nabla \loss_t(\theta_t)$. Given $G$
and an upper bound $U \geq \|\theta\|$ for the optimal parameters
$\theta$, the best regret that can be guaranteed is $O(UG\sqrt{T})$
\citep{Zinkevich03}. This is even possible when only $U$ is known, but
not $G$ \citep{dutchietal}. Conversely, given $G$ but not $U$, it has
been found that the optimal rate is
$O(\|\theta\|G\sqrt{T\log(1+\|\theta\|)T})$
\cite{mcmahanstreeter,mcmahanabernethy,pmlr-v75-cutkosky18a}, so then
the price of adaptivity is a mere logarithmic factor in $\|\theta\|$.
Simultaneous adaptivity to both $G$ and $\|\theta\|$, however, has been
shown to be impossible without a worse dependence on $\|\theta\|$
\citep{Cutkosky2017} and comes at the non-negligible cost of an additive
$G \|\theta\|^3$, with matching upper and lower bounds establishing the
rate to be $O(\|\theta\| G\sqrt{T\log(1+\|\theta\|)}) + G \|\theta\|^3)$
\citep{cutkosky2019artificial,mhammedi2020lipschitz}. Alternatively,
$O((\|\theta\|^2 + 1)G\sqrt{T})$ is also possible
\citep{orabona2018scale}. This fully settles the issue of simultaneous
adaptivity, but only for the OCO setting.

Since we have more information available in online supervised learning,
the OCO lower bounds do not apply, and indeed
\citet{kempka2019adaptive,mhammedi2020lipschitz} obtain upper bounds of
order $O(U X \sqrt{T \log(1+UX T)})$ for linear models
$f_\theta(x_t) = \theta^\top x_t$ and losses of the form
$\loss_t(\theta) = h_t(\theta^\top x_t)$, where $h_t$ is $1$-Lipschitz
and $X = \max_{t\leq T} \|x_t\|_2$. $1$-Lipschitzness is satisfied by
important practical cases like the hinge loss $h_t(z) = \max\{1-y_t
z,0\}$, the two-class logistic loss $h_t(z) = \ln(1+e^{-y_t z})$ and the
absolute loss $h_t(z) = |y_t - z|$. The key feature of this bound is
that it depends on $U$ and $X$ only via their product $UX$, without
having to know either hyperparameter in advance. It is therefore both
adaptive to the norm of the comparator and scale-free: if all $x_t$ get
scaled by the same constant, then the optimal parameters $\theta$ undo
this scaling, and the bound remains unchanged. In fact, even the
algorithms are scale-free: scaling all $x_t$ does not affect the
predictions $a_t$ at all. This tantalising possibility of circumventing
lower bounds prompts us to ask the following general question:
\begin{quote}
  \textit{Given a specific loss $\loss$ and function class $\funclass$ in online
  supervised learning, what is the price of adapting to $\|\theta\|$
  while being at the same time scale-free?}
\end{quote}
We focus on answering this question for two major classes of losses
$\loss$: the first is the logarithmic loss $\loss(p,y) = -\ln p(y)$
where predictions $a \equiv p$ are densities or probabilities, with
(multiclass) logistic regression as its main special case; the second is
the square loss $\ell(a,y) = \|y-a\|^2$, which pertains to least-squares
regression. In the latter case, scale-freeness also requires the
predictions $a$ to scale linearly with the $y_t$, and the bounds also
depend on $Y = \max_{t \leq T} \|y_t\|$. Prior work and our
contributions are summarised in Table~\ref{tab:summary}.

The main observation from Table~\ref{tab:summary} is that there is \emph{never}
a price to pay in the rates for adapting to $\|\theta\|$ with a
scale-free algorithm, except possibly in the case that we do not study
here: for the hinge loss there exists a gap between known upper and
lower bounds in the regime where $\|\theta\| X > 1$, which we leave as
an open issue.

\paragraph{Approach}

Our main technical tool in obtaining upper bounds is mixability of the
logarithmic and square loss, which implies that we can aggregate over an
exponentially spaced grid of hyperparameters $\alpha \geq \alphamin > 0$
at the cost of a mere additive $O(\frac{1}{\eta} \log \log
(\alpha/\alphamin))$ term in the bound. This may be interpreted as the
number of bits to encode $\alpha$ rounded up to the nearest grid point.
A technical complication that requires considerable care is to specify a
minimum value $\alphamin$ without breaking either scale-freeness or
paying a non-negligible price in the bound. This is related to the
range-ratio problem of \citet{mhammedi2020lipschitz}. As a consequence,
we do end up with a dependence on the feature vector ratio $X_T/X_{t^*}$
in some cases, where $X_t = \max_{s \leq t} \|x_s\|$ and $t^*$ is the
smallest $t$ for which $\|x_t\|>0$. A logarithmic dependence on this
ratio has previously been considered acceptable by
\citet{kempka2019adaptive,ross2013normalized,wintenberger2017optimal,kotlowski2017scale}. In our case the term
appears inside an even smaller double logarithm, which means that it can
be neglected simply based on the range of numbers representable on a
computer as double precision floating point numbers. A similar doubly
logarithmic dependence was encountered by \citet{gerchinovitz11a}.
Mixability further depends on a parameter $\eta$, which is $\eta = 1$
for log loss and $\eta \propto 1/Y^2$ for square loss. In case of the
square loss, the fact that $Y^2$ is unknown in advance introduces the
need for online clipping and projecting of predictions to the range
$Y_t$, where $Y_t = \max_{s\leq t} \|y_s\|$. Similar approaches have
previously been used by \citet{gerchinovitz11a,cutkosky2019artificial}.

\newcommand{\mystrut}[1]{\rule{0mm}{#1}}
\begin{table}[h!]\label{tab:summary}
\scriptsize
\centering
\begin{tabular}{|c|c|c|c|}
\hline
\mystrut{2.5ex}\footnotesize \textbf{Loss} & \footnotesize \textbf{Function Class} & \footnotesize \textbf{Non-adaptive Rate}   &
\footnotesize \textbf{Adaptive Rate} \\
\hhline{|=|=|=|=|} 
\makecell{Logarithmic\\
$-\ln p(y)$}  & normal location & \makecell{\rule{0mm}{2.5ex} $
d\ln\frac{UT}{\sigma}$ \\ \citep{BarronRissanenYu1998}\\\citep{StineFoster2000}, Thm.~\ref{thm:normallocation} }   &  \makecell{\rule{0mm}{2.5ex} $d\ln\frac{\NORM{\theta}T}{\sigma}$  \\  \citep{grunwald2007minimum}, Thm.~\ref{adaptiveNormallocation}}  \\ 
\hline \multirow{3}{*}{ 
\makecell{Multiclass logistic \\ regression\\ (K classes)}}  &
linear &\makecell{\rule{0mm}{2.7ex}$\leq dK\ln\frac{UXT}{dK}$  \\
$UX = \Omega(\sqrt{d}\ln T)$:\; $\geq d\ln\frac{UX}{\sqrt{d}\ln T}$ \\
\citep{foster2018logistic}\\
$UX \leq 2\sqrt{d}$ :\;  $\geq d \ln \frac{UX T}{d}$\\
Thm.~\ref{thm:lower_bound_logreg}} & \makecell{ \rule{0mm}{2.7ex}
$\leq X\NORM{\theta} \sqrt{{T\ln(X\NORM{\theta} T) }}$
\\  \citep{mhammedi2020lipschitz}\\
$\leq dK\ln\frac{{\|\theta\|}XT}{dK}$ \\  Thm.~\ref{adaptivelogisticregression}, Thm.~\ref{thm:adaptivelogistic_paramfree}} \\ \cline{2-4}
& linear (efficient alg) & \makecell{\rule{0mm}{2.5ex}$\leq (UX+\ln K)dK\ln T$ \\  \citep{AgarwalKaleZimmert2021} }& \makecell{ \rule{0mm}{2.5ex}$\leq \left(\NORM{\theta}X+\ln K \right)dK\ln T$ \\  Thm.~\ref{thm:logistic_eff_ad_sf}} \\
\cline{2-4}
& Besov & \makecell{\mystrut{2.5ex} $\leq \tilde{O}(U^\beta T^\gamma)$
\citep{foster2018logistic}} & \makecell{\mystrut{2.5ex}
$\leq \tilde{O}(\|\theta\|^\beta T^\gamma)$ Thm.~\ref{thm:logistic_Besov}}\\
\hline
\makecell{Square \\ $\frac{1}{2}(y - a)^{2}$} & square loss prediction &
\makecell{\rule{0mm}{2.7ex}$Y^{2}\ln \frac{(U\bmin Y)T}{Y}$ \\
\citep{van-der-hoeven2018the-many}, Thm.~\ref{thm:sq_lower_bound}} &
\makecell{\rule{0mm}{2.7ex}$Y^{2}\ln\frac{(\NORM{\theta}\bmin Y)T}{Y}$ \\  Thm.~\ref{thm:sq_loss_full_bound}}\\
\hline
\multirow{2}{*}{\makecell{Least-squares\\regression }} & linear & 
\makecell{\rule{0mm}{2.5ex} $dY^{2}\ln\frac{UXT}{dY}$
\\  \citep{Vovk98competitiveregression,AzouryWarmuthRelativelossODE}\\
Thm.~\ref{thm:sq_lower_bound}}&
\makecell{$dY^{2}\ln\frac{ \NORM{\theta}X T}{dY}$ 
\\   Thm.~\ref{thm:regression-finite-dim}}\\
\cline{2-4}
& Sobolev, $s \geq d/2$ 
& \makecell{ \rule{0mm}{2.7ex} $ \tilde O(T^{d / (2s  + d)})$ \\ \citep{zadorozhnyi2021online}}
& \makecell{\rule{0mm}{2.7ex} $\leq \tilde O(\| \theta \|^{s / (2s + d)} T^{d / (2s  + d)})$ \\ Thm.~\ref{thm:kaar-sobolev}}
\\
\hline
\makecell{Hinge \\ $\max\ \! \{0, 1-ya \}$ } 
& linear 
& \makecell{\rule{0mm}{3ex}$\leq UX\sqrt{T}$\\
$\geq(UX\bmin 1)
\sqrt{T}$\tablefootnote{\label{foot:tolinear}By
reduction to linear loss, which works only when $UX \leq 1$; see Appendix~\ref{app:hinge} for details}}
& \makecell{$\rule{0mm}{3ex} \leq \NORM{\theta}X
\sqrt{{T\ln(\NORM{\theta}X T) }}$ \\  \citep{mhammedi2020lipschitz}\\
$\geq (\|\theta\|X\bmin 1) \sqrt{T\ln((\|\theta\|X \bmin
1)T)}$\textsuperscript{\ref{foot:tolinear}} \\
\citep{mcmahanstreeter}
} \\
\hline
\end{tabular}
\caption{Comparison of non-adaptive and adaptive rates for frequently
used losses. All adaptive rates are achieved by scale-free algorithms,
with no prior knowledge about the data.
} 

\end{table}
\paragraph{Types of Scale-freeness}

Finally, we remark that the appropriate definition of `scale-free'
depends on the loss and setting. In OCO, the focus has been on
algorithms whose predictions $\theta_t$ are invariant under scaling of
the gradients $g_t$. Since this does not imply that $\theta_t^\top x_t$ is
invariant under rescaling of $x_t$,
\citet{kempka2019adaptive,mhammedi2020lipschitz} add a post-processing
step to scale $\theta_t$ to the range of $1/X_t$. While we consider only
scale-freeness with respect to lengths of the whole vectors $x_t$,
refined invariances with respect to the scale of individual features
\citep{kempka2019adaptive, orabona2015generalized} or rotations
\citep{mhammedi2020lipschitz} have also been studied.
For the square loss, we also consider scale-freeness with respect to 
the data $y_t$. In non-parametric regression, the range of $x_t$ is
always assumed known, so we do not need to adapt to it.

\paragraph{Outline}

After preliminary definitions, we first study the logarithmic loss and
logistic regression in Section~\ref{sec:loglossandregression}. Then we
consider the square loss and least-squares regression in Section~\ref{sec:slls}.

\paragraph{Setting and Preliminaries}

We consider supervised online learning, in which the learner needs to
issue a prediction $a_t \in \actions$ for $y_t \in \domY$ at time $t$
based on all the previous observations $\sample_{t-1} =
(x_1,y_1),\ldots,(x_{t-1},y_{t-1}) \in (\domX \times \domY)^{t-1}$ as
well as the features $x_t \in \domX$. Given a loss function $\loss :
\actions \times \domY \to \reals \union \{\infty\}$, the performance of the learner after
$T$ time steps relative to a class of functions $\funclass = \{f_\theta :
\domX \to \actions \mid \theta \in \Theta\}$ is evaluated by the regret
\[
  R_T(\theta) =
  \sum_{t=1}^{T}\ell(a_{t},y_{t})
    - \sum_{t=1}^{T}\ell(f_\theta(x_{t}),y_{t})
  \qquad
  \text{for $\theta \in \Theta$.}
\]
For logistic and square loss, define the running maximum of the
feature norms $X_t = \max_{s \leq t} \|x_s\|$ and the responses
$Y_t = \max_{s \leq t} \|y_s\|$, where the relevant norm will be clear
from context.

\section{Logarithmic Loss and Logistic Regression} \label{sec:loglossandregression}

For the log(arithmic) loss, the set of allowed predictions $\actions$
corresponds to all probability density functions over $\domY$ with
respect to some common $\sigma$-finite measure $\nu$. Given a density
$p \in \actions$ and observation $y \in \domY$, the log loss is 
$
  \logloss(p,y) = -\ln p(y).
$

To emphasize that predictions are densities (or probability mass
functions if $\nu$ is the counting measure), we will write $p_t$ instead
of $a_t$ and $p_{\theta,t}$ for $f_\theta(x_t)$. We will consider the
log loss with respect to the normal location family and with respect to
the multiclass logistic regression probability model, building our
results on the Bayesian prediction strategy in both cases. Given a prior
distribution $\pi$ on $\Theta$, the Bayesian prediction strategy
predicts according to
\begin{equation}\label{eqn:logloss_substitution}
  p_t(y) = \int p_{\theta,t}(y) \intder \pi(\theta \mid \sample_{t-1}),
\quad \text{where } \quad 
  \der \pi(\theta \mid \sample_{t-1}) =
    \frac{\prod_{s=1}^{t-1} p_{\theta,s}(y_s) \intder \pi(\theta)}
         {\int \prod_{s=1}^{t-1} p_{\theta',s}(y_s) \intder
         \pi(\theta')},
\end{equation}
for which we assume throughout that the denominator is non-zero and
finite. We also note that these definitions presume that the map
$(\theta,y) \mapsto p_{\theta,t}(y)$ is measurable. 
%

\paragraph{Adapting to a Hyperparameter}
The Bayesian prediction strategy can be applied directly to the normal
location family or the logistic loss, but it may also be used to
aggregate a finite or countable number of experts indexed by $\theta \in
\Theta \subset \{0,1,2,\ldots\}$, whose predictions $p_{\theta,t}$ may
vary arbitrarily over time. This fits into the general setting by
letting $\domX = \actions^\Theta$ and $f_\theta(x) = x(\theta)$, with
the interpretation that $p_{\theta,t} = f_\theta(x_t) = x_t(\theta)$ is
the prediction of expert $\theta$ at time $t$. By making each expert
correspond to a specific setting of a hyperparameter $\alpha$, it then
follows from Lemma~\ref{lem:logloss} (Appendix~\ref{app:logisticloss}) that we can adapt to $\alpha$ with
an overhead that is of order $O(\log \log \alpha)$:
\begin{lemma}
\label{lem:aggregate}
Suppose that $A(\alpha)$ is an algorithm for the log loss that depends on 
hyperparameter $\alpha \geq 0$ and achieves a regret bound
$B_T(\theta,\alpha) \geq R_T(\theta)$ for any $\theta \in
\Theta_{\alpha}$, where $\Theta_\alpha \subseteq \Theta_\beta$ for
$\alpha \leq \beta$. Then, for any $0 < \alphamin < \alphamax \leq
\infty$, it is possible to adapt to $\alpha \leq \alphamax$ with regret
bounded by
\begin{equation}\label{eqn:aggregate}
  R_T(\theta)
    < \max_{\alpha' \in [\alpha,2\alpha \bmax \alphamin]} B_T(\theta,\alpha')
    + 2\ln \log_2\Big(\frac{8\alpha}{\alphamin} \bmax 4\Big)
  \quad
  \text{for all $\alpha \in [0,\alphamax]$ and $\theta \in \Theta_{\alpha}$}
\end{equation}
by aggregating experts $A(\alpha)$ for $\alpha$ in the exponential grid
$\{\alphamin 2^m \mid m = 0,1,\ldots,M\}$ using the Bayesian prediction
strategy with prior $\pi(m) = \frac{M+2}{(M+1)(m + 1)(m+2)}$, where $M =
\ceil{\log_2(\alphamax/\alphamin)}$.
\end{lemma}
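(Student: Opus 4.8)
The backbone is the standard Bayesian mixture bound for the log loss, supplied by Lemma~\ref{lem:logloss}: when a countable collection of experts is aggregated with the Bayesian prediction strategy under a prior $\pi$, the cumulative log loss of the mixture exceeds that of any single expert $m$ by at most $-\ln\pi(m)$ (this is exactly mixability with $\eta = 1$). I would instantiate this with the experts $A(\alphamin 2^m)$, $m = 0,\dots,M$, so that for every comparator $\theta$ and every grid index $m$ with $\theta\in\Theta_{\alphamin 2^m}$, splitting $R_T(\theta)$ into (mixture minus expert $m$) plus (expert $m$ minus comparator) and invoking the per-expert guarantee $B_T(\theta,\alphamin 2^m)$ yields $R_T(\theta)\leq B_T(\theta,\alphamin 2^m)-\ln\pi(m)$.

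It then remains to choose the grid index well and to control $-\ln\pi(m)$. Given $\alpha\in[0,\alphamax]$ and $\theta\in\Theta_\alpha$, I would round $\alpha$ up onto the grid: take $m=0$ if $\alpha\leq\alphamin$, and $m=\ceil{\log_2(\alpha/\alphamin)}$ otherwise, so that $\alpha':=\alphamin 2^m$ is the smallest grid point that is at least $\max\{\alpha,\alphamin\}$. Two facts must be checked. First, $m\leq M$, which follows from $\alpha\leq\alphamax$ and the definition of $M$, so the grid point exists. Second, by the nestedness assumption $\Theta_\alpha\subseteq\Theta_{\alpha'}$ (valid since $\alpha'\geq\alpha$), the comparator $\theta$ is admissible for expert $A(\alpha')$, so its bound applies; and because $\alpha\leq\alpha'\leq 2\alpha\bmax\alphamin$, we get $B_T(\theta,\alpha')\leq\max_{\alpha'\in[\alpha,2\alpha\bmax\alphamin]}B_T(\theta,\alpha')$, which matches the first term of \eqref{eqn:aggregate}.

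For the overhead, the prior is engineered so that $-\ln\pi(m)$ is doubly logarithmic. A telescoping computation, $\sum_{m=0}^{M}\frac{1}{(m+1)(m+2)}=\frac{M+1}{M+2}$, confirms that $\pi$ is a probability vector and gives $\frac{1}{\pi(m)}=\frac{(M+1)(m+1)(m+2)}{M+2}<(m+2)^2$, hence $-\ln\pi(m)<2\ln(m+2)$. The remaining step is the elementary bound $m+2\leq\log_2\!\big(\tfrac{8\alpha}{\alphamin}\bmax 4\big)$: in the case $\alpha\leq\alphamin$ we have $m=0$ and $m+2=2=\log_2 4$, which is exactly why the $\bmax 4$ appears; in the case $\alpha>\alphamin$ the ceiling gives $m\leq\log_2(\alpha/\alphamin)+1$, so $m+2\leq\log_2(8\alpha/\alphamin)$, the factor $8=2^3$ absorbing the ``$+2$'' together with the ceiling's ``$+1$''. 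Chaining $-\ln\pi(m)<2\ln(m+2)\leq 2\ln\log_2(\tfrac{8\alpha}{\alphamin}\bmax 4)$ and adding the first term produces \eqref{eqn:aggregate}, with the strict inequality inherited from the strict bound on $\frac{1}{\pi(m)}$.

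The proof is mostly bookkeeping once the mixture bound is in hand, and I expect the only real difficulty to be getting the constants right. The light tail $\pi(m)\propto\frac{1}{(m+1)(m+2)}$ is what keeps the overhead at $2\ln\log_2(\cdot)$ rather than something heavier, and the two boundary effects — the degenerate regime $\alpha\leq\alphamin$ and the off-by-one introduced by rounding up through the ceiling — are precisely what the ``$\bmax 4$'' and the factor $8$ inside the double logarithm are designed to absorb. The fiddliest part will be verifying both boundary cases simultaneously so that they fit under the single clean expression claimed in the statement.
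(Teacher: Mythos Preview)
Your proposal is correct and follows the same approach as the paper's proof: apply Lemma~\ref{lem:logloss} with $\gamma$ a point-mass on the smallest grid point $\alpha^\star \geq \alpha$, use the regret guarantee of $A(\alpha^\star)$ together with $\Theta_\alpha \subseteq \Theta_{\alpha^\star}$, bound $-\ln\pi(m^\star) < 2\ln(m^\star+2)$, and then control $m^\star+2$ via $\alpha^\star \leq 2\alpha \bmax \alphamin$. Your write-up is in fact more explicit than the paper's on several bookkeeping points (the verification that $\pi$ sums to one, the check $m \leq M$, and the two boundary cases for $m+2 \leq \log_2(8\alpha/\alphamin \bmax 4)$), but the argument is the same.
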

In Appendix~\ref{app:logtomix}, we recall how the Bayesian prediction strategy generalises to the Exponential Weights (EW) algorithm when replacing the log loss by \emph{mixable} losses \citep{Vovk01CompetitiveOS}. We apply this to the central case of the square loss in Section~\ref{sec:slls}.

\subsection{Warm-up: Normal Location Family}\label{sec:normallocationfam}

We start with the normal location family, which is simple
because there are no features (i.e.\ $\domX$ is a singleton), so
scale-freeness is not an issue, and we can study comparator-adaptivity
by itself. It also has the advantage that the non-adaptive minimax
regret can be calculated in closed form. In this case $\nu$ is the
Lebesgue measure, $\domY = \reals^d$, and
\[
    p_{\theta,t}(y) = \frac{\exp\big(-\|y -
    \theta\|_2^2 / (2\sigma^2)\big)}{(2 \pi \sigma^2)^{d/2}},
\]
with a fixed, known choice $\sigma > 0$ and $\Theta \subseteq \reals^d$.
We start with the exact minimax regret in the non-adaptive case, when
$\theta$ is constrained to a ball of known radius $U$, and then apply
Lemma~\ref{lem:aggregate} to adapt to $U$.
As observed by \citet{StineFoster2000,BarronRissanenYu1998}, the minimax
regret for $d=1$ can be computed exactly when $\Theta = \{\theta :
|\theta| \leq U\}$. The generalisation of their approach to higher
dimensions gives the following:
\begin{theorem}[Non-adaptive Minimax Rate]\label{thm:normallocation}
For any $U > 0$, the minimax regret for the log loss with respect to the
normal location family with $\Theta = \ball(0,U)$ equals
\begin{equation}
  \min_{\mathbf{Algs}} \max_\sample \max_{\theta \in \ball(0,U)} R_T(\theta)
    = \frac{d}{2} \ln \frac{T
    U^2/\sigma^2}{2\Gamma(\frac{d}{2}+1)^{2/d}} + V(U, T) \, ,
      \label{eqn:logminmax}
\end{equation}
where $V(U,T) = \ln \Big(1 + \frac{d}{\sqrt{T}U/\sigma} \int_0^\infty
\Big( 1+\frac{r}{\sqrt{T} U/\sigma}\Big)^{d-1} e^{-r^2/2}
\intder r\Big) = O(1/\sqrt{T})$.
\end{theorem}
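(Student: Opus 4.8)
The plan is to reduce the minimax regret to the logarithm of the Shtarkov (normalised maximum likelihood) constant and then evaluate that constant in closed form, generalising the one-dimensional computation of \citet{StineFoster2000,BarronRissanenYu1998} to $d$ dimensions. Since $\domX$ is a singleton, the comparator's cumulative loss is $-\sum_t \ln p_{\theta,t}(y_t)$ and its best value is attained by the maximum-likelihood estimator constrained to $\ball(0,U)$. Any sequential strategy corresponds to a joint density $q(y_1,\dots,y_T)=\prod_t p_t(y_t)$ through the chain rule, so by the classical minimax identity for the log loss the value equals $\ln C_T$, attained by the NML density $\pnml$, where
\[
  C_T=\int_{\reals^{dT}}\ \sup_{\theta\in\ball(0,U)}\ \prod_{t=1}^T p_{\theta,t}(y_t)\intder y_1\cdots\intder y_T .
\]
Here one should verify that $C_T<\infty$ (restricting $\theta$ to the ball is precisely what renders this finite) and that the supremum is attained, so that $\pnml$ is well defined and the minimax value is genuinely achieved.

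Next I would evaluate the inner supremum. Writing $\bar y=\tfrac1T\sum_t y_t$ and using $\sum_t\|y_t-\theta\|^2=\sum_t\|y_t-\bar y\|^2+T\|\bar y-\theta\|^2$, the constrained MLE is the Euclidean projection of $\bar y$ onto $\ball(0,U)$, so the supremum equals $(2\pi\sigma^2)^{-dT/2}\exp\!\big(-\tfrac1{2\sigma^2}[\sum_t\|y_t-\bar y\|^2+T(\|\bar y\|-U)_+^2]\big)$, where $(z)_+=\max\{z,0\}$.

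Then I apply a Helmert-type orthogonal change of variables, i.e.\ an $O(T)$ rotation acting on $y_1,\dots,y_T$ (viewed as rows of a $T\times d$ matrix) mapping them to $w_0,\dots,w_{T-1}$ with $w_0=\sqrt T\,\bar y$ and $\sum_t\|y_t-\bar y\|^2=\sum_{j=1}^{T-1}\|w_j\|^2$. The Jacobian is $1$, the residual directions $w_1,\dots,w_{T-1}$ integrate out as $T-1$ independent $\reals^d$-Gaussians contributing $(2\pi\sigma^2)^{d(T-1)/2}$, and with $\intder w_0=T^{d/2}\intder\bar y$ one is left with $C_T=(2\pi\sigma^2)^{-d/2}T^{d/2}\,I$, where $I=\int_{\reals^d}\exp\!\big(-\tfrac{T}{2\sigma^2}(\|m\|-U)_+^2\big)\intder m$. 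I split $I$ over the ball $\{\|m\|\le U\}$, where the integrand is $1$ and the contribution is the volume $V_d U^d$ with $V_d=\pi^{d/2}/\Gamma(\tfrac d2+1)$, and over the exterior shell, which I evaluate in spherical coordinates using $S_{d-1}=d\,V_d$ for the sphere's surface area and the substitution $\rho=U+\tfrac{\sigma}{\sqrt T}r$. This yields exactly $I=V_dU^d\,e^{V(U,T)}$; substituting back and collecting the powers of $2\pi\sigma^2$, of $T$, and the $\Gamma(\tfrac d2+1)^{2/d}$ arising from $V_d$ produces the claimed closed form, while $V(U,T)=O(1/\sqrt T)$ follows from $\ln(1+x)\le x$ and boundedness of $\int_0^\infty e^{-r^2/2}\intder r$.

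The routine parts are the Gaussian residual integrals and the spherical-shell substitution; the main work is the careful bookkeeping of the dimension-dependent constants, so that the powers of $2$, of $\sigma$, and the Gamma factor combine into the normaliser $2\Gamma(\tfrac d2+1)^{2/d}$ — this is where the $d>1$ case genuinely departs from the scalar computation in the cited references. A secondary point demanding care is justifying the minimax identity and the attainment of the optimum by $\pnml$ in this continuous, constrained setting.
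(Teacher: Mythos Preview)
Your proposal is correct and follows the same overall architecture as the paper: reduce the minimax regret to the Shtarkov/NML constant $\ln Z$, identify the constrained MLE as the Euclidean projection of the sample mean onto $\ball(0,U)$, reduce the $dT$-dimensional integral to a single $d$-dimensional integral over the mean, and evaluate that in spherical coordinates.

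The one methodological difference is in how you collapse the $dT$-dimensional integral. You perform an explicit Helmert-type rotation that separates $\bar y$ from the residuals and integrates the latter out as free Gaussians. The paper instead invokes that $\hat\mu=\bar y$ is a sufficient statistic for $\theta$, so $p_{\hat\theta(y^T)}(y^T)=\bar p(y^T\mid\hat\mu)\,p_{\hat\theta(\hat\mu)}(\hat\mu)$, and then disintegrates Lebesgue measure along $y^T\mapsto\hat\mu$, noting that $\bar p(\cdot\mid\hat\mu)$ integrates to one on each level set. Both routes land on exactly the same integral $\int_{\reals^d}\exp\!\big(-\tfrac{T}{2\sigma^2}(\|m\|-U)_+^2\big)\intder m$. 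Your Helmert argument is more elementary and self-contained; the paper's sufficiency argument is slicker and generalises immediately to any exponential family with a sufficient statistic whose distribution is known in closed form, at the cost of a slightly delicate ``law of total probability for Lebesgue measure'' step. From that point on the two computations (ball volume plus spherical shell with the substitution $\rho=U+\tfrac{\sigma}{\sqrt T}r$) are identical, and your bookkeeping of the constants matches the paper's.
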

To interpret these expressions, note that $\Gamma(\frac{d}{2}+1)^{2/d}
\approx \frac{d}{2e}$ by Stirling's approximation; in addition, $V(U,T)$ is a
lower-order term, which simplifies to $V(U,T) = \ln(1 +
\frac{\sqrt{\pi}}{\sqrt{2T}U/\sigma})$ for $d=1$. We see here that the
dependence on $U$ is only logarithmic, rather than linear, which turns
out to be common for curved losses when combined with parametric models. Adapting to $U$ in Theorem~\ref{thm:normallocation} using
Lemma~\ref{lem:aggregate} gives the following adaptive result, which may
be viewed as the straightforward generalisation of \citet[Examples~11.1
and 11.5]{grunwald2007minimum} to $d > 1$:
\begin{theorem}[Adaptive Rate]\label{adaptiveNormallocation}
  There exists a learner whose regret for the log loss with respect to
  the normal location family with $\Theta = \reals^d$ is at most
  \begin{equation*}
    R_T(\theta)
      \leq 
      \frac{d}{2} \ln \frac{2 T
      \|\theta\|^2/\sigma^2 + 1}{2\Gamma(\frac{d}{2}+1)^{2/d}} 
      + 2\ln \log_2\Big(\frac{8 T \|\theta\|^2}{\sigma^2} +
      4\Big)
      + V(\|\theta\|,T)
      = O\!\left(\frac{d}{2} \ln \frac{T\|\theta\|^2}{\sigma^2}\right),
  \end{equation*}
  for all $\theta \in \reals^d$, where $V(U,T) = O(1/\sqrt{T})$ is as in
  Theorem~\ref{thm:normallocation}.
\end{theorem}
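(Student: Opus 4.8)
The plan is to obtain the adaptive bound as a corollary of Theorem~\ref{thm:normallocation} and Lemma~\ref{lem:aggregate}, the only real work being a convenient reparametrisation so that the advertised constants line up. I would index the non-adaptive minimax strategies by $\alpha = T U^2/\sigma^2$ rather than by the radius $U$ itself; equivalently, $A(\alpha)$ is the minimax-optimal predictor of Theorem~\ref{thm:normallocation} for the ball $\Theta_\alpha = \ball\big(0,\sigma\sqrt{\alpha/T}\big)$. These sets are nested ($\Theta_\alpha \subseteq \Theta_\beta$ for $\alpha \le \beta$), and Theorem~\ref{thm:normallocation} guarantees the uniform bound $B_T(\theta,\alpha) = \frac{d}{2}\ln\frac{\alpha}{2\Gamma(\frac{d}{2}+1)^{2/d}} + V\big(\sigma\sqrt{\alpha/T},T\big)$ for every $\theta\in\Theta_\alpha$ and every realised sequence. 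The reason for this reparametrisation is that doubling $\alpha$ increases the leading term by exactly the constant $\frac{d}{2}\ln 2$, so an exponential grid in $\alpha$ is precisely what Lemma~\ref{lem:aggregate} can exploit cheaply.

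Next I would apply Lemma~\ref{lem:aggregate} with $\alphamin = 1$ and $\alphamax = \infty$; the latter is permitted and produces the infinite grid $\{2^m : m\ge 0\}$ with the limiting prior $\pi(m) = \tfrac{1}{(m+1)(m+2)}$ (the $M\to\infty$ limit of the stated prior). Fixing a comparator $\theta\neq 0$, I would instantiate the lemma at the smallest admissible value $\alpha = T\|\theta\|^2/\sigma^2$, i.e.\ the least $\alpha$ with $\theta\in\Theta_\alpha$. The additive overhead then immediately becomes $2\ln\log_2\big(\tfrac{8T\|\theta\|^2}{\sigma^2}\bmax 4\big) \le 2\ln\log_2\big(\tfrac{8T\|\theta\|^2}{\sigma^2}+4\big)$, which is the middle term of the claimed bound.

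It remains to control the interval maximum $\max_{\alpha'\in[\alpha,\,2\alpha\bmax 1]} B_T(\theta,\alpha')$, which I would split via $\max(f+g)\le \max f + \max g$. The leading term is increasing in $\alpha'$, so it is maximised at $\alpha' = 2\alpha\bmax 1$, giving $\frac{d}{2}\ln\frac{(2T\|\theta\|^2/\sigma^2)\bmax 1}{2\Gamma(\frac{d}{2}+1)^{2/d}} \le \frac{d}{2}\ln\frac{2T\|\theta\|^2/\sigma^2+1}{2\Gamma(\frac{d}{2}+1)^{2/d}}$, which yields the leading term of the theorem together with its $+1$. The correction $V\big(\sigma\sqrt{\alpha'/T},T\big)$ is decreasing in $\alpha'$, hence maximised at $\alpha' = \alpha$, where it equals $V(\|\theta\|,T)$. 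Summing the three pieces reproduces the stated bound, and the final $O\big(\tfrac{d}{2}\ln\tfrac{T\|\theta\|^2}{\sigma^2}\big)$ follows since the double logarithm and $V$ are lower order. The degenerate case $\theta = 0$ is vacuous because $V(0,T)=+\infty$ makes the right-hand side infinite.

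I do not expect a genuine obstacle: all the mathematical content already sits in Theorem~\ref{thm:normallocation} and Lemma~\ref{lem:aggregate}. The only points requiring care are choosing the reparametrisation $\alpha = TU^2/\sigma^2$ so that the geometric grid turns into constant increments of the leading term and the advertised constants (the factor $2$, the $+1$, and the replacement $\bmax 4 \to +4$) come out matching the stated form, verifying the two monotonicity directions of $B_T$ in $\alpha'$ used to evaluate the interval maximum, and interpreting the small-$\|\theta\|$ regime where $V$ diverges.
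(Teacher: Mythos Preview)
Your proposal is correct and follows essentially the same approach as the paper: both apply Lemma~\ref{lem:aggregate} to the minimax strategies of Theorem~\ref{thm:normallocation} and use that $V$ is decreasing in its first argument to control the interval maximum. The only difference is cosmetic: the paper parametrises by $\alpha = U^2$ with $\alphamin = \sigma^2/T$, whereas you parametrise by $\alpha = TU^2/\sigma^2$ with $\alphamin = 1$; these are related by a constant rescaling of $\alpha$, so the grids, priors, and resulting bounds coincide exactly.
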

Comparing to Theorem~\ref{thm:normallocation}, we see that the overhead
for adaptivity is negligible compared to the non-adaptive rate. Adaptivity to $T$ 
can be obtained by another application of Lemma~\ref{lem:aggregate} 
with $T = \alpha$.

\begin{proof}
  We apply Lemma~\ref{lem:aggregate} with $U = \sqrt{\alpha} =
  \|\theta\|$, $\alphamin=\sigma^2/T$ and $\alphamax = \infty$. The
  result then follows upon observing that $V(\sqrt{\alpha},T)$ is
  decreasing in $\alpha$.
\end{proof}

%

\subsection{Multiclass Logistic Regression}\label{sec:logreg}

We proceed with multiclass logistic regression, which corresponds to the
case where $\nu$ is the counting measure on $K$ classes $\domY =
\{1,\ldots,K\}$, and the corresponding probability mass functions are
\[
    p_{\theta,t}(y) = \frac{e^{h_\theta(x_t)_y}}{\sum_{y' \in \domY}
    e^{h_\theta(x_t)_{y'}}},
\]
where $h_\theta : \domX \to \reals^K$ are predictors that map inputs $x
\in \domX \subseteq \reals^d$ to vectors of class-scores. In particular,
linear predictors $h_\theta(x) = \theta x$ are parameterised by weight
matrices $\theta \in \Theta \subseteq \reals^{K \times d}$ and lead to
the multiclass logistic loss $\logloss(p_{\theta,t},y) = \ln(1 +
\sum_{y' \neq y} e^{(\theta x_t)_{y'}-(\theta x_t)_y})$ when combined
with the log loss. The standard definition for binary logistic
regression with a single vector $\theta' \in \reals^d$ is recovered by
setting $\Theta \subset \{\big(\begin{smallmatrix} \theta' \\ 0
\end{smallmatrix}\big) : \theta' \in \reals^d\}$. We call an algorithm
for the logistic loss \emph{scale-free} if scaling all $x_t$ by the same
positive constant does not change the predictions $p_t$.

We discuss linear predictors, both in terms of minimax rates and for the rates that are achievable by efficient algorithms. In Appendix~\ref{app:besov}, we also consider adapting to the Besov norm of functions in non-parametric logistic regression.

\subsubsection{Linear Predictors}

For linear predictors, there is a gap between the minimax rate and
the best known upper bound for which there exists an efficient
algorithm. We discuss the two cases in turn.

Let $\|\cdot\|$ be any norm, with corresponding dual norm $\|\cdot\|_*$,
and define the following induced matrix norm: $\|\theta\| = \max_k
\|\theta_k\|$, where $\theta_k$ is the $k$-th row of $\theta \in
\reals^{K \times d}$. Then \citet{foster2018logistic} provide the
following upper and lower bounds for the minimax rate:
\begin{theorem}[Non-adaptive Upper Bound,
\citet{foster2018logistic}]\label{thm:logisticupper}
  Suppose $\Theta \subseteq \{\theta \in \reals^{K \times d} :
  \|\theta\| \leq U\}$ is a non-empty convex set, and $\domX = \{x \in
  \reals^d : \|x\|_* \leq X\}$. Then the Bayesian prediction strategy
  with uniform prior $\pi$ on $\Theta$ satisfies
  \[
    R_T(\theta)
      \leq 5 d_\Theta \ln \Big(\frac{UXT}{d_\Theta} + e\Big)
      \qquad \text{for all $\theta \in \Theta$,}
  \]
  where $d_\Theta \leq d K$ is the linear-algebraic dimension of $\Theta$.
\end{theorem}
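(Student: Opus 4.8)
The plan is to exploit the standard telescoping identity for the Bayesian mixture. Writing $L_T(\theta) = \sum_{t=1}^T \logloss(p_{\theta,t},y_t)$ for the cumulative loss of parameter $\theta$, the chain rule for \eqref{eqn:logloss_substitution} gives $\prod_{t=1}^T p_t(y_t) = \int e^{-L_T(\theta')}\intder\pi(\theta')$, so the mixture's cumulative loss is $-\ln\int e^{-L_T(\theta')}\intder\pi(\theta')$ and the regret against any $\theta\in\Theta$ rearranges into
\[
  R_T(\theta) = -\ln\int_\Theta e^{-(L_T(\theta')-L_T(\theta))}\intder\pi(\theta').
\]
It therefore suffices to lower bound this integral, which I would do by restricting it to a well-chosen neighbourhood of $\theta$ on which $L_T(\theta')-L_T(\theta)$ stays small while the neighbourhood still carries a non-negligible amount of prior mass.

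The key step, and the main obstacle, is that $\theta$ may lie on the boundary of $\Theta$, so no ball around it fits inside and a naive localisation would pick up the (unbounded) ``fatness'' of $\Theta$. I avoid this by localising with a \emph{shrunken copy of $\Theta$ itself} rather than a ball. For $\eps\in(0,1]$ set $N_\eps = (1-\eps)\theta + \eps\Theta = \{(1-\eps)\theta + \eps\theta'' : \theta''\in\Theta\}$. Because $\Theta$ is convex and $\theta\in\Theta$, every point of $N_\eps$ is a convex combination of two points of $\Theta$, so $N_\eps\subseteq\Theta$; and since $N_\eps$ is a copy of $\Theta$ scaled by $\eps$ inside the affine hull of $\Theta$, the uniform prior assigns it mass exactly $\pi(N_\eps) = \Vol(N_\eps)/\Vol(\Theta) = \eps^{d_\Theta}$, with volumes taken in the $d_\Theta$-dimensional affine hull. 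This is precisely what makes the final bound independent of the shape of $\Theta$.

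On $N_\eps$ I bound the loss increment by Lipschitzness of the softmax cross-entropy in the margins. Each $\theta'\in N_\eps$ satisfies $\theta'-\theta = \eps(\theta''-\theta)$ with $\|\theta''-\theta\|\le 2U$, and convexity of $\theta\mapsto\logloss(p_{\theta,t},y_t)$ together with the gradient $\nabla_\theta\logloss(p_{\theta',t},y_t) = (\mathrm{softmax}(\theta'x_t)-e_{y_t})\,x_t^\top$ gives, via H\"older and $\|\mathrm{softmax}(\cdot)-e_{y_t}\|_1\le 2$,
\[
  \logloss(p_{\theta',t},y_t)-\logloss(p_{\theta,t},y_t) \le 2\,\|\theta'-\theta\|\,\|x_t\|_* \le 4\eps UX.
\]
Summing over the $T$ rounds yields $L_T(\theta')-L_T(\theta)\le 4\eps UXT$ uniformly on $N_\eps$, so discarding the rest of $\Theta$ in the integral gives $R_T(\theta)\le d_\Theta\ln(1/\eps) + 4\eps UXT$. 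Choosing $\eps = \min\{1,\ d_\Theta/(4UXT)\}$ balances the two terms: in the regime $UXT\ge d_\Theta/4$ this produces $d_\Theta(\ln(4UXT/d_\Theta)+1)$, while $\eps=1$ handles the small-$UXT$ regime with $R_T(\theta)<d_\Theta$. A routine simplification absorbing the constant $4e$ and the small-$UXT$ case into the $+e$ and the prefactor then yields the claimed $5\,d_\Theta\ln(UXT/d_\Theta + e)$.
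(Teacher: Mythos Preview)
Your proof is correct and follows essentially the same route as the paper's: both localise the Bayesian integral to the shrunken copy $(1-\eps)\theta + \eps\Theta$ (the paper parameterises with $a = 1-\eps$), use that its uniform-prior mass is $\eps^{d_\Theta}$, bound the per-round loss increment by $4\eps UX$ via the $2$-Lipschitzness of the softmax log-loss in the margins, and then optimise $\eps$. The only cosmetic differences are that the paper phrases the starting point as Lemma~\ref{lem:logloss} with $\gamma$ uniform on the shrunken set (your direct integral lower bound is the same thing) and cites \citet[Lemma~1]{foster2018logistic} for the Lipschitz constant where you compute it from the gradient.
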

Indeed, the proof of this result by \citeauthor{foster2018logistic} may
be viewed as a specialisation of Lemma~\ref{lem:logloss}. (See
Appendix~\ref{app:logisticloss}.) Many related results are shown by
\citet{shamir2020logistic}, who also obtains tighter constants for
binary logistic regression.

For $UX$ larger than $\Omega(\sqrt{d}\ln T)$, \citet{foster2018logistic}
further show a lower bound that matches their upper bound up to the
dependence on $T$ inside the logarithm, for the case of binary logistic
regression with the $L_2$-norm\footnote{In restating their result, we
add a minimum with $T$, which appears to be missing from
\citet[Lemma~4]{foster2018logistic}.}:
\begin{theorem}[Lower Bound, \citet{foster2018logistic}]
   Consider binary logistic regression with the $L_2$-norm, $\Theta =
   \{\big(\begin{smallmatrix} \theta' \\ 0 \end{smallmatrix}\big) :
   \|\theta'\|_2 \leq U\}$ with $U = \Omega(\sqrt{d}\ln(T))$, and $\domX
   = \{x \in \reals^d : \|x\|_2 \leq 1\}$. Then
   \[
      \min_{\mathbf{Algs}} \max_\sample \max_{\theta \in \Theta} R_T(\theta)
        = \Omega\Big(d \ln \Big(\frac{U}{\sqrt{d}\ln T} \Big) \wedge T\Big).
   \]
\end{theorem}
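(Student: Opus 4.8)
The plan is to lower-bound the minimax regret by the Bayesian redundancy, exploiting the special structure of the log loss. For any prior $\pi$ on $\Theta$ and any fixed feature sequence $x_1,\dots,x_T$ of our own choosing, I would generate $\theta \sim \pi$ and then labels $y_t \sim p_{\theta,t}$. Since the minimax strategy can only do worse in expectation than the Bayes-optimal mixture, and since the best comparator in hindsight is at least as good as the sampled $\theta$, the standard log-loss identity yields
\[ \min_{\mathbf{Algs}}\max_\sample \max_{\theta} R_T(\theta) \;\geq\; \min_{\mathbf{Algs}} \E_{\theta,Y^T}\!\big[R_T(\theta)\big] \;=\; I(\theta;Y^T), \]
where $I(\theta;Y^T)$ is the mutual information between the parameter and the labels $Y^T = (y_1,\dots,y_T)$ under the chosen features and prior. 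It therefore suffices to design a single hard instance for which this mutual information is $\Omega\big(d\ln(U/(\sqrt d\ln T)) \wedge T\big)$.

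For the instance I would split the $T$ rounds into $d$ equal blocks and, in the $k$-th block, use only features proportional to the standard basis vector $e_k$, so that the labels $Y^{(k)}$ of that block depend on $\theta$ only through its coordinate $\theta_k$ (recall that in binary logistic regression the label law is governed by the logistic link applied to $\langle\theta,x\rangle$). Taking $\pi$ to be a product over coordinates makes the pairs $(\theta_k,Y^{(k)})$ mutually independent, so mutual information \emph{tensorises}: $I(\theta;Y^T) \geq \sum_{k=1}^d I(\theta_k;Y^{(k)})$. This reduces everything to a one-dimensional logistic-regression sub-problem with $T' = T/d$ rounds, scalar features bounded by $1$, and $|\theta_k|\leq U/\sqrt d =: B$ (so that $\|\theta\|_2\leq U$), for which it remains to prove $I(\theta_k;Y^{(k)}) = \Omega(\ln(B/\ln T))$.

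The heart of the argument is this scalar sub-problem, where the $\ln T$ resolution enters through \emph{saturation} of the logistic link: a label generated with margin $|\langle\theta,x\rangle|$ exceeding order $\ln T'$ is deterministic up to probability $1/T'$, hence carries essentially no information over $T'$ rounds, so $\theta_k$ can be located only to an additive resolution of order $\ln T'$ regardless of how the rounds and feature scales are allocated. I would make this quantitative by supporting the prior on a grid $\{0, c\ln T', 2c\ln T', \dots\}$ inside $[0,B]$, giving $\sim B/\ln T'$ levels, and certifying statistical separation of the induced label laws by a pairwise KL (or total-variation) bound, probing each level with a feature magnitude scaled so that the relevant margin is of order $\ln T'$. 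Fano's inequality (or a direct mutual-information estimate) then gives $I(\theta_k;Y^{(k)}) = \Omega(\ln(B/\ln T'))=\Omega(\ln(U/(\sqrt d\ln T)))$; summing over the $d$ coordinates yields the claimed $\Omega(d\ln(U/(\sqrt d\ln T)))$. The $\wedge\, T$ is the trivial cap from the at-most-$T$ bits carried by $T$ binary labels, and the hypothesis $U = \Omega(\sqrt d\ln T)$ is exactly what guarantees the grid has at least two points, so the bound is non-vacuous.

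The main obstacle I anticipate is the saturation analysis in the scalar sub-problem: one must choose the feature magnitudes per grid level and control the pairwise divergences so as to simultaneously (i) exploit the full comparator range $[0,B]$ and (ii) keep each level informative despite the logistic probabilities being exponentially close to $0$ or $1$. Securing the correct additive-$\ln T$ resolution (rather than the parametric multiplicative $1/\sqrt T$), together with the right constants and a careful budgeting of the $T'$ rounds across levels, is the delicate part; by contrast, the reduction to mutual information and the tensorisation over coordinates are routine once the instance is fixed.
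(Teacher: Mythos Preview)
This theorem is not proved in the paper; it is quoted as a result of \citet{foster2018logistic} (their Lemma~4, with the present authors adding the $\wedge\,T$ cap that they note appears to be missing from the original statement). There is therefore no in-paper proof against which to compare your proposal.

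For what it is worth, your high-level plan---lower-bound the minimax regret by mutual information via the redundancy--capacity identity for the log loss, tensorise over the $d$ coordinates using axis-aligned features and a product prior, and then analyse a scalar logistic sub-problem with $T'=T/d$ rounds and range $B=U/\sqrt d$---is a standard and legitimate route for log-loss lower bounds. The step you flag as delicate is indeed the crux, and it is not yet resolved by your sketch. With a \emph{linear} grid of $\sim B/\ln T'$ levels in $[0,B]$ and features bounded by~$1$, to make level $m$ informative you must rescale the feature to roughly $1/(m\ln T')$, which shrinks the per-round KL between adjacent levels $m$ and $m{+}1$ to order $1/m^2$; a single budget of $T'$ rounds then cannot separate all levels simultaneously, and the naive Fano-on-a-grid argument stalls well before reaching $\Omega(\ln(B/\ln T'))$ bits. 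Turning your outline into a complete proof would require either a different prior/feature design or a sharper information bound than a uniform-grid Fano; as written, the proposal identifies the right ingredients but leaves the budgeting of rounds across saturation scales open.
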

We complement this by the following lower bound, which matches the upper
bound from Theorem~\ref{thm:logisticupper} for the regime where $UX$ is
smaller than $O(\sqrt{d})$:
\begin{theorem}[Lower Bound]\label{thm:lower_bound_logreg}
   Consider binary logistic regression with the $L_2$-norm, $\Theta =
   \{\big(\begin{smallmatrix} \theta' \\ 0 \end{smallmatrix}\big) :
   \|\theta'\|_2 \leq U\}$ and $\domX = \{x \in \reals^d : \|x\|_2 \leq
   X\}$ with $UX \leq 2 \sqrt{d}$ and $T > d$. Then the minimax regret
   is at least
   \begin{align}
     \min_{\mathbf{Algs}} \max_\sample \max_{\theta \in \Theta} R_T(\theta)
       &\geq d \ln \Big(\frac{UX\sqrt{T-d}}{4\sqrt{\pi}d} -
       \frac{2}{\sqrt{\pi (T/d-1)}}\Big)
       = \Omega\Big(d \ln \Big(\frac{UX\sqrt{T}}{d}\Big)\Big).
   \end{align}
\end{theorem}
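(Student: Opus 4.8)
The plan is to lower bound the worst-case minimax regret by a Bayesian mutual-information quantity and then reduce the logistic problem to $d$ independent one-dimensional Bernoulli prediction problems. Since we are free to choose the feature sequence when constructing a hard instance, I would fix it to cycle through the coordinate axes: split the $T$ rounds into $d$ blocks of $n = \floor{T/d} \ge T/d - 1$ rounds each, and in block $i$ set every feature to $x_t = X e_i$. For any prior $\pi$ on $\Theta$ we may replace the worst case over labels by labels drawn from $p_\theta$ with $\theta \sim \pi$, and then the Bayes-optimal (mixture) strategy shows that, for log loss, the expected regret of \emph{any} algorithm $q$ equals $\E_\theta \KL(p_\theta \| q) \ge I(\theta; y_{1:T})$, the mutual information under $\pi$. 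Hence $\min_{\mathbf{Algs}}\max_\sample\max_\theta R_T(\theta) \ge I(\theta; y_{1:T})$, where the only point needing care is that this redundancy bound remains valid for an adversarially \emph{fixed} feature sequence, which holds here because we choose the features and randomise only the labels.

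For the prior I would take a product distribution under which each logit $z_i := (\theta x_t)_1 - (\theta x_t)_2 = X\theta'_i$ is uniform on $[-a,a]$ with $a := UX/\sqrt d$, independently across coordinates. Two properties matter: since $\theta'_i \in [-U/\sqrt d, U/\sqrt d]$ we have $\NORM{\theta'}_2 \le U$ almost surely, so the prior is supported in $\Theta$; and within block $i$ the labels are i.i.d.\ Bernoulli with success probability $\sigma(z_i)$, where $\sigma(z) = 1/(1+e^{-z})$ is the logistic link, depending only on $z_i$. Because the prior is a product and the blocks are informationally disjoint, the mutual information decomposes as $I(\theta; y_{1:T}) = \sum_{i=1}^d I(z_i; y^{(i)})$, where $y^{(i)}$ collects the $n$ labels of block $i$. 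The problem thus reduces to lower bounding, for a single coordinate, the mutual information between a logit $z$ uniform on $[-a,a]$ and $n$ i.i.d.\ $\sigma(z)$-Bernoulli observations.

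For each block I would use the entropy chain $I(z; y^{(i)}) = h(z) - h(z \mid y^{(i)})$, computing $h(z) = \ln(2a)$ exactly and bounding $h(z \mid y^{(i)}) \le \half \ln(2\pi e\, \mathrm{mmse})$ by the Gaussian maximum-entropy inequality together with Jensen, where $\mathrm{mmse}$ is the Bayes mean-squared error of estimating $z$ from $y^{(i)}$. It then remains to exhibit an estimator with small mean-squared error; the natural choice is the plug-in logit $\hat z = \sigma^{-1}(\bar y)$ of the empirical frequency $\bar y$, suitably clipped to avoid the poles of $\sigma^{-1}$ at $\bar y \in \{0,1\}$. A delta-method variance computation together with the bias and boundary corrections yields $\mathrm{mmse} \le c/n$; here the hypothesis $UX \le 2\sqrt d$ is exactly what is needed, since it forces $|z| \le 2$ and keeps the Fisher information $\sigma(z)(1-\sigma(z))$ bounded away from $0$, so the variance is genuinely $O(1/n)$. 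Collecting constants gives a per-block bound of the form $I(z; y^{(i)}) \ge \ln\bigl(\tfrac{a\sqrt n}{4\sqrt\pi} - \tfrac{2}{\sqrt{\pi n}}\bigr)$, and summing over the $d$ blocks while using $n \ge T/d - 1$ (so that $a\sqrt n \ge UX\sqrt{T-d}/d$) produces the stated bound.

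The main obstacle is this last estimator analysis: controlling the mean-squared error of the clipped logit estimator uniformly over the prior range with clean, explicit constants. The nonlinearity of $\sigma^{-1}$ and the (exponentially unlikely but nonzero) boundary events $\bar y \in \{0,1\}$ must be handled with care, and it is precisely the interplay of the delta-method variance with these corrections that produces the constant $4\sqrt\pi$ and the lower-order term $\tfrac{2}{\sqrt{\pi n}}$. Everything else — the redundancy lower bound, the product-prior decomposition into $d$ scalar channels, and the maximum-entropy step — is routine once the coordinate-axis instance has been set up.
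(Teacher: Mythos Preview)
Your reduction to $d$ independent one-dimensional Bernoulli problems via the coordinate-axis feature sequence is exactly what the paper does. The one-dimensional analysis, however, is genuinely different. The paper does \emph{not} use a prior or mutual information at all: it invokes the exact characterisation of the minimax log-loss regret as the Shtarkov/NML stochastic complexity $\ln \sum_{z^n} \max_{\mu \in [a,b]} \bernoullidist_\mu(z^n)$, restricts the sum to $k \in [\lceil an\rceil, \lfloor bn\rfloor]$ where the unconstrained MLE $k/n$ is feasible, and then applies the elementary Ordentlich--Cover bound $\binom{n}{k}(k/n)^k(1-k/n)^{n-k} \ge 1/\sqrt{\pi(n+1)/2}$ together with $b-a = \tanh(UX/(2\sqrt d)) \ge UX/(4\sqrt d)$, which is where the hypothesis $UX \le 2\sqrt d$ enters. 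This is what produces the exact finite-time expression with the $4\sqrt\pi$ and the additive $-2/\sqrt{\pi n}$ correction inside the logarithm.

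Your route via $I(z;y) = h(z) - h(z\mid y)$ and the Gaussian maximum-entropy/MMSE bound is perfectly valid and will deliver the asymptotic $\Omega\big(d\ln(UX\sqrt T/d)\big)$ rate; it is also the more portable argument (it generalises to other smooth one-parameter families without a closed-form NML). But be aware that it naturally yields a bound of the shape $\ln\big(c\,a\sqrt n\big)$, not $\ln\big(c_1 a\sqrt n - c_2/\sqrt n\big)$: corrections coming from the delta-method remainder and the clipping events enter additively in the MMSE and hence inside the $\tfrac12\ln(\cdot)$, so you should not expect to recover the paper's specific finite-time expression or its constants. The paper's combinatorial argument is shorter and avoids the estimator analysis you identify as the main obstacle.
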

\citet{shamir2020logistic} obtains related lower bounds, asymptotically
when $T \to \infty$, but he does not spell out their dependence on $U$
explicitly.
Combining Theorem~\ref{thm:logisticupper} with Lemma~\ref{lem:aggregate}
to adapt to $U$, and instantiating $\Theta_U = \{\theta \in \reals^{K
\times d} : \|\theta\| \leq U\}$ for concreteness, gives the following
scale-free adaptive result:
\begin{theorem}[Scale-Free, Adaptive]\label{adaptivelogisticregression}
  Let $\Theta = \reals^{K \times d}$ and $\domX = \reals^d$. Then, for
  any $\epsilon > 0$, there exists a scale-free strategy for the learner that
  guarantees 
  \[
    R_T(\theta)
      \leq 
          5 dK \ln \Big(\frac{2\|\theta\| X_T T}{dK} +
          \frac{\epsilon X_T}{X_{t^*}} + e\Big)
          +
          2\ln \Big(\log_2\Big(\frac{8\|\theta\|X_{t^*}T}{\epsilon dK} \bmax
          4\Big)\Big)
    \qquad
    \text{for all $\theta \in \reals^d$,}
  \]
  where $t^*$ is the first $t$ such that $\|x_t\| > 0$.
\end{theorem}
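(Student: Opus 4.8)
The plan is to realise the adaptive strategy as a Bayesian aggregation over a grid of experts, each being the non-adaptive Bayesian strategy of Theorem~\ref{thm:logisticupper} restricted to a ball of radius $\alpha$, and then to invoke Lemma~\ref{lem:aggregate} to pay only a doubly-logarithmic overhead for not knowing the correct $\alpha$. The only real design decision is how to feed the features to these experts so that the procedure is scale-free while a fixed comparator $\theta$ for the original problem is matched exactly by a fixed comparator inside one expert. I would first dispose of the rounds $t < t^*$: there $x_t = 0$, so $p_{\theta,t}$ and every expert's prediction equal the uniform distribution, and these rounds contribute nothing to the regret. From round $t^*$ onward the quantity $X_{t^*} = \|x_{t^*}\| > 0$ is known, and I would run each expert on the rescaled features $\bar x_t = x_t / X_{t^*}$. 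Because rescaling all $x_s$ by a positive constant rescales $X_{t^*}$ identically, the $\bar x_t$, and hence all predictions, are invariant, which delivers scale-freeness.

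The key step, and the one I would be most careful about, is the comparator matching. Expert $A(\alpha)$ is the Bayesian strategy with uniform prior on $\{\bar\theta : \|\bar\theta\| \leq \alpha\}$ fed the features $\bar x_t$, so that $\max_t \|\bar x_t\|_* = X_T/X_{t^*}$ and Theorem~\ref{thm:logisticupper} applies with radius $\alpha$ and effective feature bound $X = X_T/X_{t^*}$. The crucial observation is that the predictor $\bar p_{\bar\theta,t}(y) \propto e^{(\bar\theta \bar x_t)_y}$ with the fixed choice $\bar\theta = X_{t^*}\theta$ satisfies $\bar\theta \bar x_t = \theta x_t$ for every $t$, so it coincides exactly with $p_{\theta,t}$, leaving no loss discrepancy to control. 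This is precisely why I would normalise by the fixed $X_{t^*}$ rather than by the running maximum $X_t$: the latter would turn the matching comparator into a time-varying object $X_t\theta$ and leave an uncontrolled softmax-sharpening error across rounds where $X_t < X_T$. Since $\|\bar\theta\| = X_{t^*}\|\theta\|$, expert $A(\alpha)$ achieves $B_T(\theta,\alpha) = 5dK\ln\!\big(\alpha X_T T/(X_{t^*}dK)+e\big) \geq R_T(\theta)$ for every $\theta$ in $\Theta_\alpha = \{\|\theta\| \leq \alpha/X_{t^*}\}$, using $d_\Theta \leq dK$ together with the monotonicity of $m \mapsto 5m\ln(A/m+e)$. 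The nesting $\Theta_\alpha \subseteq \Theta_\beta$ for $\alpha \leq \beta$ required by Lemma~\ref{lem:aggregate} is immediate.

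Finally I would apply Lemma~\ref{lem:aggregate} with $\alphamin = \epsilon dK/T$, $\alphamax = \infty$, and instantiate the guarantee at $\alpha = X_{t^*}\|\theta\|$, the smallest radius with $\theta \in \Theta_\alpha$. Because $B_T(\theta,\cdot)$ is increasing, the maximum over $[\alpha, 2\alpha \bmax \alphamin]$ is attained at $\alpha' = 2X_{t^*}\|\theta\| \bmax \alphamin$; bounding $\alpha'/X_{t^*} = 2\|\theta\| \bmax \alphamin/X_{t^*} \leq 2\|\theta\| + \alphamin/X_{t^*}$ inside the logarithm and substituting $\alphamin = \epsilon dK/T$ turns the first term into $5dK\ln\!\big(2\|\theta\|X_T T/(dK) + \epsilon X_T/X_{t^*} + e\big)$. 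The same substitution turns the overhead $2\ln\log_2(8\alpha/\alphamin \bmax 4)$ into exactly $2\ln\log_2\!\big(8\|\theta\|X_{t^*}T/(\epsilon dK) \bmax 4\big)$, matching the claim. What remains is routine: checking the measurability needed for the Bayesian aggregation, and confirming that the $T$-dependent choice of $\alphamin$ does not break scale-freeness, since it multiplies only scale-invariant normalised quantities.
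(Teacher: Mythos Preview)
Your proof is correct and follows essentially the same approach as the paper: aggregate the non-adaptive Bayesian experts of Theorem~\ref{thm:logisticupper} over an exponential grid via Lemma~\ref{lem:aggregate}, with $\alphamin$ chosen so that the procedure is scale-free and the overhead matches the statement. The only cosmetic difference is bookkeeping---you normalise the features by $X_{t^*}$ and take $\alphamin=\epsilon dK/T$, whereas the paper keeps the original features and sets $\alphamin=\epsilon dK/(X_{t^*}T)$; these yield exactly the same set of experts and the same final bound, and your normalisation arguably makes the scale-freeness argument more transparent.
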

%
%
We see that the overhead for adaptivity becomes negligible if we can
take $\epsilon  = O(X_{t^*}/X_T)$. There is no automatic way available
to achieve this completely for free, because $X_T$ is unknown at the
start of the algorithm, but there are two reasonable solutions: the
first is to just take $\epsilon $ to be ``very small'', which is still
fine in the bound, because the $O(\log \log (1/\epsilon ))$ term hardly
grows with $1/\epsilon $. In particular, even for the smallest possible
positive value representable in a double precision floating point
number, e.g.\ $\epsilon  \approx 2.2\times 10^{-308}$, we still have
that $\ln(\log_2(1/\epsilon )) \leq 7$. The second solution is to
aggregate multiple copies of the algorithm using
Lemma~\ref{lem:aggregate} with $\epsilon  = e/\alpha = e X_{t^*}/X_T$,
$\alphamin = 1$, $\alphamax = \infty$. We then obtain the following
parameter-free result:
\begin{theorem}[Scale-Free, Adaptive,
Parameter-Free]\label{thm:adaptivelogistic_paramfree}
  Let $\Theta = \reals^{K \times d}$ and $\domX = \reals^d$. Then there
  exists a scale-free strategy for the learner that guarantees 
  \[
  R_T(\theta)
    \leq 
    5 dK \ln \Big(\frac{2\|\theta\| X_T T}{dK} +
    2e\Big)
    +
    2\ln
    \Big(\log_2\Big(\frac{16\|\theta\|X_T T}{edK} \bmax
    4\Big)\Big)
    +
    2\ln \Big(\log_2\Big(\frac{8X_T}{X_{t^*}}\Big)\Big)
  \]
  for all $\theta \in \reals^d$,
  where $t^*$ is the first $t$ such that $\|x_t\| > 0$.
\end{theorem}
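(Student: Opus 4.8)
The plan is to apply the aggregation machinery of Lemma~\ref{lem:aggregate} a second time, now treating the free parameter $\epsilon$ of Theorem~\ref{adaptivelogisticregression} as the hyperparameter to be tuned. Concretely, I would take as the base algorithm $A(\alpha)$ the scale-free strategy of Theorem~\ref{adaptivelogisticregression} run with $\epsilon = e/\alpha$, so that its regret bound reads
\[
  B_T(\theta,\alpha) = 5 dK \ln\Big(\tfrac{2\|\theta\| X_T T}{dK} + \tfrac{e X_T}{\alpha X_{t^*}} + e\Big) + 2\ln\log_2\Big(\tfrac{8\|\theta\|X_{t^*}T\alpha}{e\,dK} \bmax 4\Big).
\]
Since the comparator class in Theorem~\ref{adaptivelogisticregression} is all of $\reals^{K\times d}$ irrespective of $\epsilon$, we have $\Theta_\alpha = \reals^{K\times d}$ for every $\alpha$, so the nesting hypothesis $\Theta_\alpha \subseteq \Theta_\beta$ of Lemma~\ref{lem:aggregate} holds trivially. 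I would then invoke the lemma with $\alphamin = 1$ and $\alphamax = \infty$.

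The key structural observation is that the two terms of $B_T(\theta,\alpha')$ move in opposite directions: the first term is decreasing in $\alpha'$ (through $eX_T/(\alpha' X_{t^*})$) while the double-log term is increasing in $\alpha'$. Hence, over any interval $[\alpha, 2\alpha]$, the maximum of the sum is bounded by the first term evaluated at the left endpoint plus the second term evaluated at the right endpoint. Because the guarantee of Lemma~\ref{lem:aggregate} holds simultaneously for all $\alpha \in [0,\infty]$, I am free to instantiate it at the choice that best matches the (a posteriori) data, namely $\alpha = X_T/X_{t^*} \geq 1$; this is well defined as soon as some $x_t \neq 0$, which is the only nontrivial case. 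With this choice $2\alpha \bmax \alphamin = 2\alpha$, the first-term contribution $eX_T/(\alpha X_{t^*})$ collapses to $e$, and the double-log argument $8\|\theta\| X_{t^*} T \alpha'/(e\,dK)$ at $\alpha'=2\alpha$ collapses to $16\|\theta\|X_T T/(e\,dK)$. Finally, the aggregation overhead $2\ln\log_2(8\alpha/\alphamin \bmax 4)$ becomes $2\ln\log_2(8X_T/X_{t^*})$, where the $\bmax 4$ is absorbed since $X_T/X_{t^*}\geq 1$. Collecting the three pieces reproduces the claimed bound exactly.

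The remaining thing to verify --- and the only real subtlety --- is scale-freeness of the aggregated procedure. Each base algorithm $A(\alpha)$ is scale-free by Theorem~\ref{adaptivelogisticregression}, and the Bayesian/Exponential-Weights aggregation of Lemma~\ref{lem:aggregate} combines the experts using only their past losses; since rescaling all $x_t$ by a positive constant leaves every expert's predictions (and hence the induced mixture weights and the aggregate prediction) unchanged, scale-freeness is preserved. I do not anticipate a hard step here: the argument is a clean second application of Lemma~\ref{lem:aggregate}, and the main care is simply in matching the monotone endpoints of the two terms and in checking that setting $\epsilon = e/\alpha$ is a legitimate scale-free choice of hyperparameter, which it is, since $\epsilon$ does not reference the data scale.
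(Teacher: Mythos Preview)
Your proposal is correct and matches the paper's approach exactly: the paper states (just before the theorem) that the parameter-free result follows by aggregating copies of the Theorem~\ref{adaptivelogisticregression} algorithm via Lemma~\ref{lem:aggregate} with $\epsilon = e/\alpha$, $\alphamin = 1$, $\alphamax = \infty$, and then instantiating $\alpha = X_T/X_{t^*}$. Your monotonicity-based bound on $\max_{\alpha'\in[\alpha,2\alpha]} B_T(\theta,\alpha')$ and your scale-freeness check fill in precisely the details the paper leaves implicit.
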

The term $2\ln (\log_2(\frac{8X_T}{X_{t^*}}))$ is again unbounded in
theory, but it is at most $14$ when $X_T/X_{t^*}$ is restricted to the
range of double precision floating point numbers, which goes up to $1.8
\times 10^{308}$. This seems acceptable for all practical purposes.

\paragraph{Efficient Algorithms}

Since the Bayesian algorithm from Theorem~\ref{thm:logisticupper} is not
computationally efficient, efficient algorithms based on quadratic
approximations of the losses have been developed: by
\citet{Jezequel2020} for binary logistic regression and by
\citet{AgarwalKaleZimmert2021,Jezequel2021} for the multiclass case. (In
a different context, \citet{mourtada2022improper} also obtain
an efficient algorithm for misspecified \emph{offline} logistic regression.)
These efficient methods achieve worse regret rates, however, of order
$O(dKU \ln(T))$, with a linear rather than logarithmic dependence
on~$U$. The state of the art for the multiclass case is the algorithm of
\citet{AgarwalKaleZimmert2021}, which achieves the following run-time
and regret bound with respect to the $2 \rightarrow \infty$-norm of
$\theta$, which is defined as $\|\theta\|_{2,\infty} = \sup_{x : \|x\|_2
\leq 1} \|\theta x\|_\infty$:
\begin{theorem}[Non-adaptive, Efficient
Algorithm, \citet{AgarwalKaleZimmert2021}]\label{thm:logisticupper_efficient}
  Suppose the set of parameters is $\Theta = \{\theta \in \reals^{K \times d} :
  \|\theta\|_{2,\infty} \leq U\}$, and $\domX = \{x \in \reals^d :
  \|x\|_2 \leq X\}$. Then there exists a learning algorithm (depending
  on $U$ and $X$, but not on $T$) that achieves
  \[
    R_T(\theta) = O\Big(\big(UX + \ln K\big)dK\ln T\Big)
      \qquad \text{for all $\theta \in \Theta$,}
  \]
  and runs in time $O(d^2 K^3 + U X K^2 \ln(t(1+UX)))$ per round $t$.
\end{theorem}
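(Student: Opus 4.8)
The plan is to treat this as a reconstruction of the method of \citet{AgarwalKaleZimmert2021}, organised around the single obstruction that rules out the naive route. Writing $z = \theta x_t \in \reals^K$ for the vector of class-scores, the multiclass logistic loss $z \mapsto -z_y + \ln \sum_{y'} e^{z_{y'}}$ has gradient $p - e_y$ and Hessian $\mathrm{diag}(p) - p p^\top$, where $p$ is the softmax of $z$; this is a bounded-curvature convex function in $dK$ variables. The obvious approach is to run an exp-concave aggregation method such as Online Newton Step directly, since an $\alpha$-exp-concave loss in dimension $dK$ incurs regret $O(\frac{dK}{\alpha}\ln T)$. The catch is that on the relevant score range the logistic loss is only $\alpha$-exp-concave with $\alpha$ of order $e^{-\|z\|_\infty}$, and the norm constraints force $\|z\|_\infty = \|\theta x_t\|_\infty \leq \|\theta\|_{2,\infty}\|x_t\|_2 \leq UX$, so this route yields an unacceptable $e^{UX}$ factor instead of the claimed linear dependence on $UX$.

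Next I would follow the improper-learning strategy: rather than predicting with some $p_{\theta_t,t}$, the learner is allowed to output an arbitrary probability vector $p_t$, and one runs a second-order, Online-Newton-Step-type method on a carefully chosen quadratic surrogate of the loss in the $dK$-dimensional score space. The purpose of the surrogate is to convert the degenerate exp-concavity constant ($\sim e^{-UX}$) into an \emph{effective} constant of order $1/(UX + \ln K)$, so that the standard second-order regret bound $O(\frac{dK}{\alpha_{\mathrm{eff}}}\ln T)$ produces exactly $O\big((UX + \ln K)\,dK\ln T\big)$. This is the step I expect to be the main obstacle: one must show that, on the range $\|z\|_\infty \leq UX$, the gap between the logistic loss and its quadratic surrogate is absorbed into a multiplicative $(UX + \ln K)$ factor rather than an exponential one. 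I would establish this by exploiting the bounded, generalised-self-concordant curvature of the softmax together with the constraints $\|\theta\|_{2,\infty} \leq U$ and $\|x\|_2 \leq X$.

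Finally I would account for the per-round running time. The second-order method maintains an inverse-Hessian-type statistic over the $dK$-dimensional score space, whose update and the associated linear algebra give the polynomial $d^2 K^3$ term, while the prediction $p_t$ itself is produced by an inner convex optimisation over the $K$-dimensional simplex, run to an accuracy that scales only logarithmically with the horizon, contributing the $U X K^2 \ln(t(1+UX))$ term. Because the construction references $U$ and $X$ only through the product $UX$ and never uses $T$, the resulting algorithm is horizon-free and depends on the data scale only in the benign way the statement demands. Reproducing the surrogate-gap bound of the second step is the only genuinely delicate part; the remainder is a routine application of the second-order regret lemma together with a running-time tally.
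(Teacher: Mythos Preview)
The paper does not prove this theorem at all: it is stated as a black-box citation of \citet{AgarwalKaleZimmert2021}, with the only additional content being the remark that the absolute constants depend on an accuracy/run-time trade-off left open in that reference, and that the paper assumes optimisation accuracy $\mathrm{poly}(1/t)$ to obtain the stated per-round time. There is therefore nothing in the paper's own text to compare your proposal against.

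That said, your sketch is a fair high-level description of the approach in the cited work: the key idea is indeed to avoid the $e^{UX}$ blow-up from naive exp-concavity by running a second-order (ONS-type) method on quadratic surrogates of the logistic loss, which yields an effective curvature constant of order $1/(UX+\ln K)$ and hence the $O((UX+\ln K)dK\ln T)$ regret; and the per-round cost splits into the linear-algebra update over the $dK$-dimensional parameter space plus an inner simplex optimisation solved to logarithmic accuracy. If you were asked to reproduce the result itself, the delicate step you flag---controlling the surrogate gap so that the dependence on $UX$ is linear rather than exponential---is exactly the technical heart of \citet{AgarwalKaleZimmert2021}, and would require their specific surrogate construction and self-concordance-style analysis rather than a generic appeal to bounded curvature. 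But for the purposes of this paper, a citation suffices, and no proof is expected.
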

The absolute constants in the theorem depend on a trade-off between
optimisation accuracy and run-time, which \citet{AgarwalKaleZimmert2021}
leave open; we assumed here that the optimisation accuracy is
$\textnormal{poly}(1/t)$.

Since the dependence on $U$ is now linear, adaptation to the norm of
$\theta$ becomes a much more pressing issue. We pursue this with
computational considerations in mind. Our starting point is the
observation that the algorithm from
Theorem~\ref{thm:logisticupper_efficient} is only computationally
efficient if both $UX \leq T^\beta$ and $d^2 K \leq T^\beta\ln(T)$ for
some small $\beta > 0$, in which case its run-time is $O(K^2
T^\beta\ln(T))$ per round. We will design a scale-free adaptive
algorithm with the same run-time by using the doubling trick to adapt to
$X \leq T^\beta/U$ and then applying Lemma~\ref{lem:aggregate} with $U =
\alpha$. In this case choosing $\alphamax$ to be finite is desirable for
computational reasons, because it reduces the number of copies of the
base algorithm that we need to run to
$\ceil{\log_2(\alphamax/\alphamin)}$. A good choice for $\alphamax$
exists, because $\alphamax = T^\beta/X_{t^*} \geq T^\beta/X_T$ is
sufficient to cover all $U$ such that $UX_T \leq T^\beta$, where $t^*$
is again the smallest $t$ such that $\|x_t\|_2 > 0$. We then choose
$\alphamin = T^{-\gamma}/X_{t^*}$ for $\gamma$ as large as possible
given the computational budget, leading to the following result:
\begin{theorem}[Scale-Free, Adaptive, Efficient
Algorithm]\label{thm:logistic_eff_ad_sf} Let $\Theta = \reals^{K \times d}$ and $\domX = \reals^d$.
Then, for any $\beta > 0$ and $c>0$, there exists a learner that
achieves
  \begin{multline}\label{eqn:logistic_efficient_adaptive}
    R_T(\theta)
        = O\Big(\Big(\|\theta\|_{2,\infty}X_T + \ln K +
        T^{- c T^\beta/(d^2 K)} \frac{X_T}{X_{t^*}} \Big)
          dK\ln T\Big) \\
          + 2\ln\Big(\log_2\Big(8\|\theta\|_{2,\infty}X_T T^{
\frac{c T^\beta}{d^2 K}}
          \bmax
          4\Big)\Big)
      \qquad
      \text{for all $\theta$ such that
      $\|\theta\|_{2,\infty} X_T \leq T^\beta$,}
  \end{multline}
  where $t^*$ is the first $t$ such that $\|x_t\|_2 > 0$. Furthermore,
  this learner runs in time complexity $O(d^2 K^3 + (1+c) K^2 T^\beta\ln(T))$ per round.
\end{theorem}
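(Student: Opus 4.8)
The plan is to reduce to the non-adaptive efficient base algorithm of Theorem~\ref{thm:logisticupper_efficient}, which takes the comparator radius $U$ and the feature bound $X$ as inputs, and then to strip away the dependence on both: I would adapt to $U$ by aggregating copies of the base algorithm through Lemma~\ref{lem:aggregate} instantiated with $U = \alpha$ (logistic regression is a log loss, so the lemma applies verbatim with the Bayesian strategy), and adapt to the unknown feature scale by running each copy with the running maximum $X_t$ in place of $X$. The only genuine design choice is the grid $[\alphamin,\alphamax]$, which must be set so that (i) predictions are invariant under a common rescaling of the $x_t$, (ii) the number of copies, and hence the per-round run-time, stays within budget, and (iii) the constraint $UX \le T^\beta$ that makes the base algorithm efficient is respected.

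Concretely, let $t^*$ be the first round with $\|x_t\|_2 > 0$ and set $\alphamax = T^\beta / X_{t^*}$ and $\alphamin = T^{-\gamma}/X_{t^*}$ with $\gamma = c T^\beta/(d^2 K)$. Tying both endpoints to $1/X_{t^*}$ is what produces scale-freeness: under $x_t \mapsto \lambda x_t$ the whole grid of radii rescales as $1/\lambda$, so the scale-invariant product $\alpha X$ is effectively what is gridded and the aggregated predictions $p_t$ are unchanged. Because $\alphamax = T^\beta/X_{t^*} \ge T^\beta/X_T$, the grid covers every $U$ with $U X_T \le T^\beta$, i.e.\ exactly the efficiency regime of the base algorithm. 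The exponential grid $\{\alphamin 2^m\}$ then contains $M = \ceil{\log_2(\alphamax/\alphamin)} = \ceil{(\beta + \gamma)\log_2 T}$ points, a count that is independent of the data scale.

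For the regret I would feed Lemma~\ref{lem:aggregate} the base bound $B_T(\theta,\alpha) = O((\alpha X_T + \ln K)\,dK\ln T)$, valid for $\|\theta\|_{2,\infty} \le \alpha$. For $\|\theta\|_{2,\infty} \ge \alphamin$ the maximum in \eqref{eqn:aggregate} is taken over $\alpha' = O(\|\theta\|_{2,\infty})$, producing the leading $O((\|\theta\|_{2,\infty}X_T + \ln K)dK\ln T)$ term, while the additive $2\ln\log_2(8\alpha/\alphamin \bmax 4)$ penalty becomes $2\ln\log_2(8\|\theta\|_{2,\infty}X_T T^{\gamma}\bmax 4)$ after the harmless bound $X_{t^*} \le X_T$ inside the double logarithm. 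For $\|\theta\|_{2,\infty} < \alphamin$ the comparator lies below the grid, so its regret is controlled by that of the $\alphamin$-copy and contributes only $O(\alphamin X_T\, dK\ln T) = O(T^{-\gamma}\tfrac{X_T}{X_{t^*}}dK\ln T)$ --- precisely the tiny middle term of \eqref{eqn:logistic_efficient_adaptive}. Summing the three contributions gives the stated bound, and the run-time follows from multiplying $M = O((\beta+\gamma)\log_2 T)$ by the per-copy cost $O(d^2K^3 + T^\beta K^2\ln T)$ of Theorem~\ref{thm:logisticupper_efficient} under $U X_t \le T^\beta$, and substituting $\gamma = cT^\beta/(d^2K)$.

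The main obstacle is threading scale-freeness, a finite grid, and the efficiency constraint through a single construction. Insisting on a finite $\alphamax$ (for run-time) together with a data-dependent $\alphamin \propto 1/X_{t^*}$ (for scale-freeness) is exactly what forces the range ratio $X_T/X_{t^*}$ to appear, and the delicate point is to confine it to the negligible middle term and the doubly-logarithmic penalty rather than letting it multiply the dominant $\ln K$ factor. I expect this to be achievable by adapting to the feature scale through the running maximum $X_t$ supplied to each base copy --- so that the base regret is incurred once with final bound $X = X_T$ --- rather than through scale-doubling restarts, which would otherwise reproduce a spurious $\log_2(X_T/X_{t^*})$ factor on $\ln K$.
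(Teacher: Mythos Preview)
Your high-level construction --- aggregate via Lemma~\ref{lem:aggregate} over an exponential grid with $\alphamax = T^\beta/X_{t^*}$ and $\alphamin = T^{-\gamma}/X_{t^*}$, $\gamma = cT^\beta/(d^2K)$ --- matches the paper, and your regret and scale-freeness arguments are essentially correct.

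The run-time analysis, however, has a genuine gap. The per-round cost of a single copy from Theorem~\ref{thm:logisticupper_efficient} is $O(d^2K^3 + U_i X_t K^2 \ln T)$, and for the largest grid point $U_i = \alphamax = T^\beta/X_{t^*}$ one has $U_i X_t = T^\beta X_t/X_{t^*}$, which is \emph{not} bounded by $T^\beta$ once the features grow beyond $X_{t^*}$; so your blanket per-copy bound of $O(d^2K^3 + T^\beta K^2\ln T)$ is not justified. Even granting that bound, multiplying by $M \approx \gamma \log_2 T = c T^\beta \log_2 T /(d^2K)$ produces a term of order $c T^{2\beta} K \ln^2 T/d^2$, not the claimed $(1+c)K^2 T^\beta \ln T$. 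The paper resolves both issues with a sleeping-experts trick you do not mention: as soon as $X_t$ grows enough that $U_i > 2T^\beta/X_t$, copy $i$ is switched to the uniform prediction and costs nothing thereafter (it is never the relevant comparator anyway, since $\|\theta\|_{2,\infty}X_T \le T^\beta$). The $U_i X_t K^2\ln T$ contributions of the still-active copies then form a geometric series in $U_i$ capped at $2T^\beta/X_t$, giving $O(T^\beta K^2\ln T)$ in total rather than $M$ times that; only the $d^2K^3$ part is paid by all $M$ copies, and it is this $(\beta+\gamma)d^2K^3$ term that dictates the choice $\gamma = cT^\beta/(d^2K)$.

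A smaller point: the paper adapts each copy to the unknown feature scale via the doubling trick, not by feeding the running maximum $X_t$ to the base algorithm (whose guarantee assumes a fixed $X$). Your concern that doubling would introduce a spurious $\log_2(X_T/X_{t^*})$ factor on $\ln K$ is unfounded: by initialising at $\Xmin = \ln(K)/U$, the number of restarts is at most $\log_2(UX_T/\ln K)$, and since $\ln K \cdot \log_2(UX_T/\ln K) \le UX_T$ whenever $UX_T \ge \ln K$, the accumulated $\ln K$ terms are absorbed into $O((UX_T \bmax \ln K)\,dK\ln T)$.
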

%
%
Apart from the term involving $X_T/X_{t^*}$, this rate matches the
non-adaptive rate for any $\theta$ such that $\|\theta\|_{2,\infty}X_T
\leq T^\beta$, at no cost in the run-time compared to the efficient
non-adaptive algorithm (see discussion above). Like in
Theorem~\ref{thm:adaptivelogistic_paramfree}, the dependence on
$X_T/X_{t^*}$ is very minor: for $\beta = 1$, $c=10$ and
$\frac{X_T}{X_{t^*}} \leq 1.8 \times 10^{308}$ (the maximum value of a
double precision float), we then have $T^{- c T^\beta/(d^2 K)}
\frac{X_T}{X_{t^*}} \leq 1$ as soon as $T \geq 23 d^2 K$.
%


\section{Square Loss and Least-Squares Regression}\label{sec:slls}
Consider the square loss
$
  \ell (a, y) =  \|a - y\|^2 / 2 \, 
$
over domains $\actions = \domY = \R^d$. 
For both prediction with the square loss (Section~\ref{sec:sq_loss_prediction}) and for regression (Section~\ref{sec:sq_loss_regression}), Exponential Weights (EW) with clipping adapts to the range of the data and to the norm of the comparator at essentially no cost.

The square loss is $1 / (4Y^2)$-mixable over $\actions = \domY = \ball(0, Y)$, with the mean as a substitution function. In contrast with the log-loss, the mixability constant depends on the domain of the data points, preventing the application of the EW aggregation scheme of Lemma~\ref{lem:mixable_losses} (in Appendix~\ref{app:logtomix}). We get around this issue thanks to a clipping trick of \citet{cutkosky2019artificial}.

For the square loss, there are two requirements to scale-freeness: the
predictions $a_t$ should \emph{not} change when all $x_t$ are scaled by
the same constant, but, if all $y_t$ are scaled by a constant, then the
predictions \emph{should} scale by the same constant.

\subsection{An Aggregation Procedure Tailored to the Square Loss}
In order to adapt to arbitrary hyperparameters without knowledge of the range of data $Y$, we propose a general aggregation scheme built upon the EW strategy with clipping. Throughout this section, we denote by $\Pi_Y$ the projection to the ball of radius $Y$.
\paragraph{Cutkosky Clipping}
To get around the issue of tuning the learning rate in the EW strategy,
we apply a trick from \citet{cutkosky2019artificial}, which is to feed
an algorithm clipped data points
\begin{equation}\label{eq:clipped_seq}
	\tilde{y}_{t} := 
          \frac{Y_{t-1}}{Y_t} y_t
          = \begin{cases}
            \frac{Y_{t-1}}{\NORM{y_t}} y_t & \text{if $\NORM{y_t} \geq
            Y_{t-1}$}\\
            y_t & \text{otherwise,}
          \end{cases}
\end{equation}
where $Y_t = \max_{s\leq t} \| y_s\|$. Then the algorithm knows in advance that its next data point will be bounded by $Y_{t-1}$. This is exactly the knowledge needed to tune the learning rate when using EW.
\emph{A priori}, one would need to ensure that the actions of the experts are bounded by $Y_{t-1}$ to satisfy mixability. It turns out this is not necessary: it suffices to also feed clipped actions to EW.

\paragraph{Adapting}
Suppose that $A(\alpha)$ is an algorithm for the square loss that depends on a hyperparameter $\alpha \in [\alphamin, \alphamax)$, with $0 \leq \alphamin \leq \alphamax \leq \infty$. Consider the grid of parameter values $\{2^m \alphamin |\, m = 0, \dots, M \}$ where $M = \ceil{\log_2(\alphamax/\alphamin)}$. For a sequence of data points $(y_t, x_t)$, denote by $a_{t, \alpha}$ the output of $A(\alpha)$ at time $t$. Apply the EW strategy based on the clipped data points $\tilde y_t$ and the clipped actions $\Pi_{ Y_{t-1}}(a_{t, \alpha})$, with learning rate $\eta_t = 1 / (4 Y_{t-1}^2)$, and prior $\pi(m) = \frac{M+2}{(M+1)(m + 1)(m+2)}$. The next result is an analogue to Lemma~\ref{lem:aggregate} for the square loss.
\begin{lemma}
  \label{lem:agg_sq_loss}
  Let $A(\alpha)$ be an algorithm that achieves a regret bound
  $B_T(\theta,\alpha) \geq R_T(\theta)$ for any $\alpha \in [\alphamin, \alphamax)$ and $\theta \in
  \Theta$. Then, it is possible to adapt to $\alpha \leq \alphamax$ with regret
  bounded by
  \begin{equation}
		R_T(\theta)
		\leq \max_{\alpha' \in [\alpha,2\alpha \bmax \alphamin]} B_T(\theta,\alpha') + 8Y_T^2 \ln \log_2\Big(\frac{8\alpha}{\alphamin}  \bmax 4\Big)
		+ 2Y_T^2 \, \quad \text{for all $\theta \in \Theta$} \, .
	\end{equation} 
  Furthermore, if all algorithms $A(\alpha)$ are scale-free, then the
  aggregated procedure is scale-free as well.
\end{lemma}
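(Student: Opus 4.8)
The plan is to fix a comparator $\theta$ and a target scale $\alpha \leq \alphamax$, and route the regret of the aggregate prediction $\bar a_t = \sum_m w_{t,m}\, \Pi_{Y_{t-1}}(a_{t,2^m\alphamin})$ through the single grid point $\alpha' = 2^{m'}\alphamin$ with $m' = \lceil \log_2(\alpha/\alphamin)\rceil \bmax 0$. Since then $\alpha' \in [\alpha, 2\alpha \bmax \alphamin]$, this is exactly what produces the interval maximum $\max_{\alpha'} B_T(\theta,\alpha')$ in the bound. Writing $c_t = Y_{t-1}/Y_t \leq 1$ and $\hat a_{t,\alpha'} = \Pi_{Y_{t-1}}(a_{t,\alpha'})$, I would decompose the regret on the \emph{original} sequence into an EW/mixability term on the clipped sequence, a benign projection term, the base-algorithm term $B_T$, and two clipping corrections. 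The key structural fact to record first is that $\bar a_t \in \ball(0,Y_{t-1})$, because the mean is the substitution function and every clipped action lies in $\ball(0,Y_{t-1})$.

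For the mixability term I would use that the square loss is $\eta_t$-mixable on $\ball(0,Y_{t-1})$ with $\eta_t = 1/(4Y_{t-1}^2)$, and that both the clipped target $\tilde y_t$ (which satisfies $\|\tilde y_t\| = c_t\|y_t\| \leq Y_{t-1}$) and all clipped actions lie in that ball. As $\eta_t$ is non-increasing, the monotone-rate version of the mixable-loss aggregation (Lemma~\ref{lem:mixable_losses}) gives $\sum_t [\ell(\bar a_t,\tilde y_t) - \ell(\hat a_{t,\alpha'},\tilde y_t)] \leq \frac{1}{\eta_T}\ln\frac{1}{\pi(m')}$. Bounding $\frac{1}{\eta_T} = 4Y_{T-1}^2 \leq 4Y_T^2$ and, from the chosen prior, $\ln\frac{1}{\pi(m')} \leq 2\ln(m'+2) \leq 2\ln\log_2\!\big(\tfrac{8\alpha}{\alphamin}\bmax 4\big)$ yields exactly the $8Y_T^2\ln\log_2(\cdots)$ overhead.

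The learner-side clipping correction telescopes cleanly. Using the identity
\[
  \ell(a,y_t) - \ell(a,\tilde y_t) = \thalf(1-c_t^2)\|y_t\|^2 - (1-c_t)\langle a, y_t\rangle ,
\]
with $a = \bar a_t$, $\|\bar a_t\| \leq Y_{t-1}$, and $\|y_t\| \leq Y_t$, each summand is at most $\thalf(Y_t^2 - Y_{t-1}^2) + (Y_t^2 - Y_{t-1}^2)$, so the sum telescopes to at most $\tfrac{3}{2}Y_T^2 \leq 2Y_T^2$, matching the additive $2Y_T^2$; the projection step is benign since $\tilde y_t \in \ball(0,Y_{t-1})$ forces $\ell(\hat a_{t,\alpha'},\tilde y_t) \leq \ell(a_{t,\alpha'},\tilde y_t)$. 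I expect the \emph{main obstacle} to be the remaining reconciliation of the base-algorithm guarantee with the clipped losses: after projection one is left comparing the experts' \emph{clipped}-sequence loss to the comparator's \emph{original}-sequence loss, and the comparator-side unclipping error $\sum_t[\ell(f_\theta(x_t),\tilde y_t) - \ell(f_\theta(x_t),y_t)]$ is not sign-definite and a priori involves the unbounded predictions $f_\theta(x_t)$. I would handle this by running the base algorithms on the clipped sequence $\tilde y_t$, so that $B_T(\theta,\alpha')$ already absorbs the expert-versus-comparator gap on that sequence, and by exploiting that for the square loss one may assume without loss of generality that the comparator predictions lie in $\ball(0,Y_t)$ — projecting a comparator into $\ball(0,Y_t)$ never increases its loss because $y_t \in \ball(0,Y_t)$ — after which the comparator-side correction obeys the same telescoping estimate $\sum_t (Y_t^2 - Y_{t-1}^2) \leq Y_T^2$ and is folded into the $O(Y_T^2)$ budget.

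Scale-freeness would then follow by inspection. Multiplying every $y_t$ by a constant $\lambda > 0$ scales $Y_{t-1}$, $Y_t$, and $\tilde y_t$ by $\lambda$, leaves the mixability/EW weights $w_{t,m}$ invariant (the learning rate $\eta_t = 1/(4Y_{t-1}^2)$ rescales precisely to cancel the rescaling of the squared residuals), and commutes with both the projections $\Pi_{Y_{t-1}}$ and the mean. Hence if each base prediction $a_{t,\alpha}$ scales by $\lambda$, so does $\bar a_t$, and the aggregated procedure is scale-free as claimed.
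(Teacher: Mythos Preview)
There is a genuine gap in your handling of the comparator-side clipping. You propose to run the base algorithms $A(\alpha)$ on the \emph{clipped} sequence $\tilde y_t$, so that $B_T(\theta,\alpha')$ bounds $\sum_t[\ell(a_{t,\alpha'},\tilde y_t)-\ell(f_\theta(x_t),\tilde y_t)]$, and then to repair the mismatch with the true regret by ``assuming without loss of generality'' that $f_\theta(x_t)\in\ball(0,Y_t)$. But projecting $f_\theta(x_t)$ onto $\ball(0,Y_t)$ produces a \emph{time-varying} sequence $g_t=\Pi_{Y_t}(f_\theta(x_t))$ that is not the prediction sequence of any single comparator in $\Theta$, so $B_T(\theta,\alpha')$ says nothing about $\sum_t[\ell(a_{t,\alpha'},\tilde y_t)-\ell(g_t,\tilde y_t)]$. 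Worse, the Pythagorean step you invoke gives $\ell(g_t,\tilde y_t)\leq \ell(f_\theta(x_t),\tilde y_t)$ since $\tilde y_t\in\ball(0,Y_t)$, which is the \emph{wrong} direction: it only shows $\ell(a_{t,\alpha'},\tilde y_t)-\ell(g_t,\tilde y_t)\geq \ell(a_{t,\alpha'},\tilde y_t)-\ell(f_\theta(x_t),\tilde y_t)$. Equivalently, if you combine your terms A and E directly you get $\sum_t(1-c_t)\langle f_\theta(x_t)-\bar a_t,y_t\rangle$, which still carries the unbounded $f_\theta(x_t)$.

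The paper sidesteps this entirely by feeding the base algorithms the \emph{original} data $y_t$. Then $B_T(\theta,\alpha^\star)$ bounds $\sum_t[\ell(a_{\alpha^\star,t},y_t)-\ell(f_\theta(x_t),y_t)]$ outright, and the comparator never needs to be touched. The Cutkosky clipping is applied only to the learner-versus-expert term $\sum_t[\ell(\bar a_t,y_t)-\ell(a_{\alpha^\star,t},y_t)]$: first project the expert's action to $\ball(0,Y_t)$ (Pythagorean, now in the helpful direction), then switch $y_t\to\tilde y_t$ at cost $Y_T\max_t\|\bar a_t-\Pi_{Y_t}(a_{\alpha^\star,t})\|\leq 2Y_T^2$ since both vectors are bounded, then project once more to $\ball(0,Y_{t-1})$ and invoke mixability. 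Your EW/mixability and scale-freeness paragraphs are fine and match the paper; the fix is simply to move the clipping from the base algorithms to the aggregation layer.
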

\subsection{Square Loss Prediction}\label{sec:sq_loss_prediction}
Let us apply the results built above to the case of prediction with the
square loss. In this case, $\Theta = \actions = \R^d$, there are no
observed features (i.e.\ $\domX$ is a singleton), and the actions functions are $a_\theta \equiv \theta$. Slightly abusing notation, we shall denote $\ell(\theta, y) = \| \theta - y \|^2 / 2$.
\paragraph{Aggregated Gradient Descent}
We apply the aggregation procedure of
Lemma~\ref{lem:agg_sq_loss} to multiple instances of Online Gradient
Descent (cf.\ Appendix~\ref{app:gradient-descent}) with step sizes $1 /
(\lambda+  t)$, where we aggregate over $\lceil \log_2 T \rceil$ values
of $\lambda = \alpha$ from $\alphamin=1$ to $\alphamax=T$.
Note that the clipping of the actions has no effect in this case, since the individual updates of every expert are already in $\ball(0, Y_{t-1})$ at every $t$. 
\begin{theorem}\label{thm:sq_loss_full_bound}
	In square loss prediction, there exists a scale-free algorithm such that for any $\theta \in \R^d$, 
	\begin{equation*}
		R_T(\theta) 
		\leq 
		2 Y^2_T \log \bigg(2 +  T \Big(\frac{\|\theta\|^2}{Y_T^2}\bmin 1 \Big)\bigg) 
    + 8Y^2_T\ln \log_2\bigg(\frac{8 Y_T^2}{ \|\theta\|^2 }\bmax 8 \bigg) + 3Y_T^2 \, .
	\end{equation*}
\end{theorem}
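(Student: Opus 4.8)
The plan is to instantiate the aggregation procedure of Lemma~\ref{lem:agg_sq_loss} with the Online Gradient Descent (OGD) experts described above, so the only genuine work is (i) to analyse a single OGD expert $A(\lambda)$ with step size $\eta_t = 1/(\lambda + t)$ on the clipped sequence, and (ii) to optimise over the target value of $\lambda$ and chase constants.

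For step (i), I would run the standard OGD analysis while exploiting the $1$-strong convexity of $\ell(\cdot,y) = \NORM{\cdot - y}^2/2$. Writing $g_t = a_t - \tilde y_t$ and expanding $\NORM{a_{t+1}-\theta}^2$, strong convexity contributes an extra $-\tfrac12\NORM{a_t-\theta}^2$ per round; with $\eta_t = 1/(\lambda+t)$ the coefficient $\tfrac12(1/\eta_t - 1) = \tfrac{\lambda+t-1}{2}$ of $\NORM{a_t-\theta}^2$ telescopes exactly against $\tfrac{1}{2\eta_{t-1}} = \tfrac{\lambda+t-1}{2}$ from the previous round, so the entire potential collapses to the single boundary term $\tfrac{\lambda}{2}\NORM{\theta}^2$ (starting from $a_1 = 0$). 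It remains to bound $\sum_t \tfrac{\eta_t}{2}\NORM{g_t}^2$: since every OGD iterate is a convex combination of the previous iterate and $\tilde y_t$, all actions stay in $\ball(0,Y_{t-1})$ — this is exactly the remark that makes the action-clipping inactive — so $\NORM{g_t}\le 2Y_{t-1}\le 2Y_T$ and $\sum_t \tfrac{\eta_t}{2}\NORM{g_t}^2 \le 2Y_T^2\sum_t \tfrac{1}{\lambda+t}\le 2Y_T^2\ln(1+T/\lambda)$. This yields the expert bound $B_T(\theta,\lambda) = \tfrac{\lambda}{2}\NORM{\theta}^2 + 2Y_T^2\ln(1+T/\lambda)$ on the clipped data, which is what Lemma~\ref{lem:agg_sq_loss} takes as input.

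For step (ii), I would first reduce to $\NORM{\theta}\le Y_T$: since all data lie in $\ball(0,Y_T)$, projecting the comparator can only decrease its cumulative loss, so $R_T(\theta)\le R_T(\Pi_{Y_T}\theta)$, and one checks the claimed right-hand side is unchanged when $\NORM{\theta}>Y_T$ is replaced by $\NORM{\Pi_{Y_T}\theta}=Y_T$ (this is precisely where the $\bmin 1$ originates). For $\NORM{\theta}\le Y_T$, choose the target $\lambda^\star = T \bmin (Y_T^2/\NORM{\theta}^2) \in [1,T] = [\alphamin,\alphamax]$. Using monotonicity of the two terms of $B_T$, the bracket maximum $\max_{\alpha'\in[\lambda^\star,2\lambda^\star]} B_T(\theta,\alpha')$ is at most $\lambda^\star\NORM{\theta}^2 + 2Y_T^2\ln(1+T/\lambda^\star)$; plugging in $\lambda^\star$ gives at most $Y_T^2 + 2Y_T^2\ln(2 + T(\NORM{\theta}^2/Y_T^2 \bmin 1))$ in both sub-cases $\lambda^\star = Y_T^2/\NORM{\theta}^2$ and $\lambda^\star = T$. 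Finally I would bound the lemma's overhead: with $\alphamin=1$ and $\alpha=\lambda^\star$, we have $8\lambda^\star \bmax 4 \le (8Y_T^2/\NORM{\theta}^2)\bmax 8$, hence $8Y_T^2\ln\log_2(8\lambda^\star \bmax 4) \le 8Y_T^2\ln\log_2\big(\tfrac{8Y_T^2}{\NORM{\theta}^2}\bmax 8\big)$, and adding the residual $Y_T^2$ from the bracket to the $2Y_T^2$ supplied by the lemma produces exactly the stated $3Y_T^2$.

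The conceptually easy but technically fiddly part — and the main obstacle — is the constant-chasing in step (ii): the factor-$2$ slack from the grid spacing (the $[\lambda^\star,2\lambda^\star]$ bracket), the relaxation of $1+(\cdot)$ to $2+(\cdot)$ inside the logarithm, and the projection reduction that manufactures the $\bmin 1$ all have to be arranged so that the leftover additive terms collapse to precisely $3Y_T^2$ and the double logarithm lands on $\tfrac{8Y_T^2}{\NORM{\theta}^2}\bmax 8$. By contrast, the single-expert telescoping in step (i) is the only genuinely algorithmic ingredient, and it is routine once the step size is seen to cancel the strong-convexity term.
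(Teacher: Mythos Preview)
Your proposal is correct and follows essentially the same approach as the paper: apply Lemma~\ref{lem:agg_sq_loss} with $\alphamin=1$, $\alphamax=T$ to OGD experts whose regret bound is \eqref{eq:gd_sq_loss}, pick $\lambda^\star = (Y_T^2/\|\theta\|^2)\bmin T$ for $\|\theta\|\le Y_T$, and project the comparator for $\|\theta\|>Y_T$; the constant bookkeeping you outline matches the paper's exactly.

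One small clarification: in step~(i) you speak of running OGD ``on the clipped sequence'' and write $g_t = a_t - \tilde y_t$. In the architecture of Lemma~\ref{lem:agg_sq_loss} the experts $A(\lambda)$ are fed the \emph{original} $y_t$, and $B_T(\theta,\lambda)$ must bound their regret on the original data; the clipping enters only in how EW computes its weights and its combined action. Your telescoping OGD analysis is oblivious to this distinction (it yields $\tfrac{\lambda}{2}\|\theta\|^2 + 2Y_T^2\ln(1+T/\lambda)$ for any data bounded by $Y_T$), so the bound you obtain is the right $B_T$ either way, but the framing should be that the experts see $y_t$, not $\tilde y_t$.
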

The per-round computation time is $\lceil \log_2 T \rceil$ times the cost of a gradient descent update, with guarantees that match the non-adaptive lower bound of Theorem~\ref{thm:sq_lower_bound} for $d=1$ and constant features.

\subsection{Least-Squares Regression}\label{sec:sq_loss_regression}
Upon observing a feature point $x_t \in \domX$, the learner outputs a predictions $a_t \in \actions =  \R$, then receives the answer $y_t \in \domY = \R$.  The learnes competes against functions $\theta \in \mathcal F$ where  $\mathcal F\subset \domY^{\domX}$ is a known set of comparator functions. 
We assume  $\mathcal F$ is a separately implementable Reproducing Kernel Hilbert Space (RKHS) with the kernel $k$ over $\domX$ \citep[Definition~1]{gammerman2004on-line}. 
We refer to the kernel matrix $K_T = (k(x_s, x_t))_{(s, t)\in [T]^2}$ and to $X_t = \max_{s \in [t]}k(x_s, x_s) $.

Kernel methods are useful when the algorithm depends on the feature
vectors $x_1,x_2,\ldots$ exclusively via the quantities $k(x_s, x_t)$, thanks to the kernel trick. For such learners, $X$-scale-invariance generalises to invariance by scaling of $k$ by a positive factor.

Analyses of (non-)parametric regression focus on the asymptotic
dependence on the number of data points $T$. When $\mathcal F$ is a rich
class of functions, even for the parametric case in large dimension, the
norm of the comparator impacts the rates, hence the importance of
adaptation. We show that adapting to $\|\theta\|$ comes at no cost on
the regret, with a scale-free algorithm.

\paragraph{Aggregated-KAAR} 
A key algorithm in online regression is the Azoury-Vovk-Warmuth forecaster (also called the forward algorithm) from \citet{AzouryWarmuthRelativelossODE, Vovk98competitiveregression}, and its kernelised version KAAR \citep{gammerman2004on-line}. This algorithm has been analysed and modified in a variety of settings; see, e.g., \citet{orabona2015generalized, jezequel2019efficient, gaillard2019uniform, jezequel2019efficient, zadorozhnyi2021online}.
Notably, for linear regression \citet{gaillard2019uniform} notice that the 
Vovk-Azoury-Warmuth forecaster with regularization parameter set to
$\lambda = 0$ is scale-free and enjoys a regret bound that is optimal up
to an additive term that is a constant for many reasonable sequences of
feature vectors $x_t$, but can potentially blow up. Upon seeing $x_t$,
KAAR with regularisation $\lambda > 0$ predicts $a_t = \theta_t(x_t)$
where $\theta_t \in \mathcal F$ is picked according to the rule
\begin{equation}
	\theta_t = \underset{\theta \in \mathcal F}{\mathrm{argmin}} \bigg\{ \frac{\lambda \|\theta\|^2}{2} + \frac{1}{2}\theta(x_t)^2 +\frac{1}{2}\sum_{s=1}^{t-1}\left(\theta(x_s)-y_{s}\right)^{2}  \bigg\} \, .
\end{equation}
The update $\theta_t$ admits the closed-form expression 
\begin{equation}\label{eq:kaar-updates}
	\theta_t(x) =  (y_1,\dots, y_{t-1}, 0 )^\top ( \lambda I_t +  K_t)^{-1}\big(k(x_1, x), \dots , \, k(x_t, x)\big) \, .
\end{equation}
To ensure scale-invariance, we tune the regularisation proportionally to the first non-zero feature (and predict $0$ until there is one); we call the ensuing algorithm \algname{KAAR-sf}$(\alpha)$, which will be the building block in our aggregated algorithm.
%

For clarity, let us ignore computational issues and run an infinite number of instances. We apply Lemma~\ref{lem:agg_sq_loss} twice in order to aggregate both arbitrary small and arbitrary large values of~$\alpha$. Doing so, the dependence on the initial guess of the correct scale moves into a $\log \log$ factor. 
\begin{theorem}[Adaptive, Scale-Free]\label{thm:akaar-general}
	In kernel least-squares regression, there exists a scale-free algorithm such that for any $\lambda > 0$,
	\begin{equation}\label{eq:a-kaar-general}
			R_T(\theta)
			\leq \frac{\lambda \|\theta\|^2}{2} 
			+ \frac{Y_T^2}{2} \ln \det \bigg( I_T + \frac{1}{\lambda }  K_T \bigg)
			+ 8Y_T^2 \ln \left( \frac{3e}{2} \left| \log_2  \bigg(\frac{\lambda}{X_{t^\star}^2}  \bigg)\right| \right)\,.
	\end{equation}
	where $Y= \max_{t \in [T]} |y_t|$ and $K_t = \big(k(x_u, x_v)\big)_{(u, v)\in [T]^2}$.
\end{theorem}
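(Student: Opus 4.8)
\section*{Proof plan for Theorem~\ref{thm:akaar-general}}

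The plan is to build the adaptive algorithm on top of the scale-free base learner \algname{KAAR-sf}$(\alpha)$, whose regularisation is tuned to $\lambda = \alpha X_{t^\star}^2$ (proportional to the first non-zero feature scale, with the convention of predicting $0$ before time $t^\star$), and to aggregate over $\alpha$ using Lemma~\ref{lem:agg_sq_loss}. First I would record the non-adaptive guarantee for a single value of $\alpha$: the classical Vovk--Azoury--Warmuth / \algname{KAAR} analysis \citep{AzouryWarmuthRelativelossODE,Vovk98competitiveregression,gammerman2004on-line} gives, for every fixed $\lambda > 0$,
\[
  B_T(\theta,\alpha) = \frac{\lambda\NORM{\theta}^2}{2} + \frac{Y_T^2}{2}\ln\det\Big(I_T + \tfrac{1}{\lambda}K_T\Big) \geq R_T(\theta),
\]
where crucially the \algname{KAAR} update \eqref{eq:kaar-updates} does not depend on $Y_T$, so $Y_T$ enters only as the realised label range and need not be known in advance by the base learner. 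Because \algname{KAAR-sf}$(\alpha)$ is invariant under joint rescaling of $k$ (as $X_{t^\star}^2$ rescales with it), every base learner is scale-free, so by the last sentence of Lemma~\ref{lem:agg_sq_loss} any aggregate of them is scale-free as well.

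Next I would apply Lemma~\ref{lem:agg_sq_loss} \emph{twice} to cover both $\lambda \geq X_{t^\star}^2$ and $\lambda < X_{t^\star}^2$. For the upward range I take $\alphamin = 1$, $\alphamax = \infty$, giving the exponential grid $\{2^m : m \geq 0\}$ and a bound valid for every target $\alpha \geq 1$; for the downward range I reparameterise by the reciprocal $\beta = 1/\alpha$ and aggregate $\beta \in \{2^m : m \geq 0\}$, which covers $\alpha \in (0,1]$. Since $\Theta$ is fixed here and no nesting of comparator classes is required by Lemma~\ref{lem:agg_sq_loss}, this reciprocal trick is legitimate and keeps the doubly-logarithmic overhead finite even as $\alpha \to 0$. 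Indeed, the overhead of each application, $8Y_T^2\ln\log_2(\tfrac{8\alpha}{\alphamin}\bmax 4) + 2Y_T^2$, reads $8Y_T^2\ln\log_2(8\lambda/X_{t^\star}^2 \bmax 4)$ upward and $8Y_T^2\ln\log_2(8 X_{t^\star}^2/\lambda \bmax 4)$ downward --- both of order $\ln|\log_2(\lambda/X_{t^\star}^2)|$. A final EW merge of the two meta-learners (two experts, costing an extra $O(Y_T^2)$) then selects whichever branch suits the target $\lambda$, collapsing the two overheads into the single term $8Y_T^2\ln(\tfrac{3e}{2}|\log_2(\lambda/X_{t^\star}^2)|)$ after absorbing the flooring constants.

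To extract the two leading terms I would choose, for a given target $\lambda$, the free parameter $\alpha$ in Lemma~\ref{lem:agg_sq_loss} so that $\lambda/X_{t^\star}^2$ sits at the appropriate end of the doubling interval, and then bound $\max_{\alpha' \in [\alpha, 2\alpha]} B_T(\theta,\alpha')$ by exploiting that the first summand $\tfrac{\alpha' X_{t^\star}^2\NORM{\theta}^2}{2}$ is increasing in $\alpha'$ while the log-determinant summand is decreasing in $\alpha'$: evaluating each at its worst endpoint recovers the two leading terms of the stated bound. The main obstacle I anticipate is the two-directional bookkeeping rather than any single inequality. Aggregating arbitrarily small $\alpha$ without letting the range-ratio $\alpha/\alphamin$ (equivalently $X_{t^\star}^2/\lambda$) escape the double logarithm is precisely the range-ratio difficulty flagged in the introduction, and it is what forces both the reciprocal reparameterisation and the careful merge. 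Verifying that the two doubly-logarithmic overheads, the two $2Y_T^2$ terms, and the $O(Y_T^2)$ merge cost recombine into the single constant $\tfrac{3e}{2}$, that the residual factor-$2$ slack inside $\ln\det(\cdot)$ and $\tfrac{\lambda}{2}\NORM{\theta}^2$ introduced by the $\max$ over $[\alpha,2\alpha]$ matches the clean form, and that everything behaves gracefully near $\lambda = X_{t^\star}^2$ where $|\log_2(\lambda/X_{t^\star}^2)|$ is small, is routine but is where all the constant-chasing lives.
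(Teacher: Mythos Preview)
Your proposal is correct and reaches the same bound, but the decomposition differs from the paper's. You run two \emph{parallel} aggregations (one over $\alpha\in\{2^m:m\geq 0\}$ for the upward range and one over $\beta=1/\alpha\in\{2^m:m\geq 0\}$ for the downward range via the reciprocal trick) and then merge the two meta-learners with a final two-expert EW step. The paper instead uses a \emph{nested} double aggregation: it first applies Lemma~\ref{lem:agg_sq_loss} to \algname{KAAR-sf}$(\alpha)$ with $\alphamin=1/c$, $\alphamax=\infty$, obtaining a meta-algorithm $\tilde A(c)$; it then applies Lemma~\ref{lem:agg_sq_loss} a second time to the family $\tilde A(c)$ with $c_{\min}=1$, $c_{\max}=\infty$, and finally sets $c=1/(2\alpha)$. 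In the paper's scheme, making $c$ large pushes $\alphamin$ to $0$, so small $\lambda$ is handled by the outer layer rather than by a reciprocal reparameterisation.

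Both routes cover the two-sided geometric grid of regularisation values and produce the same $\ln|\log_2(\lambda/X_{t^\star}^2)|$ overhead. Your construction is arguably cleaner conceptually (no redundancy among experts, and the two branches are symmetric), but it requires a small extra argument for the two-expert merge that is not literally an instance of Lemma~\ref{lem:agg_sq_loss}; the paper's nested scheme has the virtue that both layers are direct invocations of the lemma, so no additional merge lemma is needed. As you anticipated, the exact constant $\tfrac{3e}{2}$ comes from combining the two $2Y_T^2$ additive terms with the two $\ln\log_2$ terms after a case split on $\alpha\gtrless 1$; your route will produce a constant of the same order but not necessarily the identical one, and the factor-$2$ slack from the doubling interval is absorbed by reparameterising $\lambda$ at the end in both proofs.
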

This new algorithm, which we call \algname{A-KAAR}, is scale-free and enjoys the same guarantees as best tuned \algname{KAAR}, up to a small $\log\log$ term. We state a first consequence that holds for any kernel.
\begin{corollary}[Dimension-Free, Scale-Free]\label{cor:dim-independent-KAAR}
	\algname{A-KAAR} enjoys the dimension-free regret bound
	\begin{equation}\label{eq:akaar-dim-independent}
		R_T(\theta) 
		\leq 
		 X_TY_T\|\theta\|\sqrt T 
			+ 8Y^2_T \ln \bigg( \frac{3e}{2} \bigg|\log_2 \bigg( \frac{ X_TY_T\sqrt T}{\|\theta\|X_{t^\star}^2} \bigg) \bigg| \bigg) \, . 
	\end{equation}
\end{corollary}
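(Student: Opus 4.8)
The plan is to specialise the general guarantee of Theorem~\ref{thm:akaar-general} by optimising the free regularisation parameter $\lambda$ in hindsight. The crucial structural fact I would exploit is that the bound of Theorem~\ref{thm:akaar-general} holds \emph{simultaneously} for every $\lambda > 0$: because \algname{A-KAAR} aggregates over an entire range of regularisation levels through two applications of Lemma~\ref{lem:agg_sq_loss}, the algorithm itself is fixed and never needs to know $\lambda$. We are therefore free to pick $\lambda$ as any function of the comparator $\theta$ and of the observed data and still inherit the corresponding bound. (If $\|\theta\| = 0$ the right-hand side of \eqref{eq:akaar-dim-independent} is $+\infty$, so there is nothing to prove and we may assume $\|\theta\| > 0$.)

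First I would make the log-determinant term dimension-free by passing to the trace. Writing $\mu_1,\dots,\mu_T \ge 0$ for the eigenvalues of $K_T$ and applying $\ln(1+x) \le x$ to each factor gives
\[
  \ln\det\Big(I_T + \tfrac{1}{\lambda}K_T\Big)
    = \sum_{i=1}^T \ln\Big(1 + \tfrac{\mu_i}{\lambda}\Big)
    \le \frac{1}{\lambda}\sum_{i=1}^T \mu_i
    = \frac{1}{\lambda}\Tr(K_T)
    = \frac{1}{\lambda}\sum_{t=1}^T k(x_t,x_t)
    \le \frac{T X_T^2}{\lambda},
\]
where the final step bounds each diagonal entry by the largest squared feature norm $X_T^2 = \max_{s \le T} k(x_s,x_s)$. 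Substituting this into Theorem~\ref{thm:akaar-general} bounds the first two terms of that result by $\frac{\lambda\|\theta\|^2}{2} + \frac{Y_T^2 T X_T^2}{2\lambda}$, an expression that no longer involves the dimension.

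Next I would optimise this convex expression over $\lambda$. By AM--GM the minimiser is $\lambda^\star = X_T Y_T\sqrt{T}/\|\theta\|$, and the minimal value is exactly $X_T Y_T \|\theta\|\sqrt{T}$, the leading term of the corollary. Inserting this same $\lambda^\star$ into the remaining doubly-logarithmic term $8Y_T^2\ln\!\big(\tfrac{3e}{2}\,|\log_2(\lambda/X_{t^\star}^2)|\big)$ of Theorem~\ref{thm:akaar-general} reproduces the second term of \eqref{eq:akaar-dim-independent}, since $\lambda^\star/X_{t^\star}^2 = X_T Y_T\sqrt{T}/(\|\theta\|X_{t^\star}^2)$. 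Combining the two contributions yields the claimed bound.

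There is essentially no deep obstacle here, only two bookkeeping points that need care. The first is recognising that the uniform-in-$\lambda$ form of Theorem~\ref{thm:akaar-general} is what licenses the hindsight choice $\lambda^\star$, which depends on the unknown quantities $\|\theta\|$, $Y_T$ and $X_T$; this is precisely the payoff of the aggregation inside \algname{A-KAAR}, and any argument that tried to fix $\lambda$ before seeing the data would fail. The second is that the doubly-logarithmic term must \emph{not} be re-optimised: it is only logarithmic in $\log_2(\lambda/X_{t^\star}^2)$, so plugging in $\lambda^\star$ directly gives the stated form and leaves the $\sqrt{T}$ rate untouched. It is worth noting that I optimise the trace \emph{upper bound} $\frac{\lambda\|\theta\|^2}{2} + \frac{Y_T^2 T X_T^2}{2\lambda}$ rather than the exact log-determinant, which is the standard route to a clean dimension-free rate at the (harmless) price of losing the sharper, kernel-dependent dependence captured by $\ln\det(I_T + \frac{1}{\lambda}K_T)$.
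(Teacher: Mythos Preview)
Your proposal is correct and follows essentially the same route as the paper: bound $\ln\det(I_T+\tfrac{1}{\lambda}K_T)$ by $\Tr(K_T)/\lambda \le T X_T^2/\lambda$ via $\ln(1+x)\le x$, then choose $\lambda = X_T Y_T\sqrt{T}/\|\theta\|$ in the general bound of Theorem~\ref{thm:akaar-general}. Your additional remarks on why the hindsight choice of $\lambda$ is legitimate and on the $\|\theta\|=0$ edge case are sound and make the argument more self-contained than the paper's own sketch.
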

In the worst-case, the regret of \algname{A-KAAR} grows at most at an $XY\|\theta\|\sqrt{T}$ rate. This matches the non-adaptive lower bound of Theorem~\ref{thm:sq_lower_bound} in the large-dimensional regime. Faster rates are achievable under additional assumptions on $\mathcal F$, as we shall see now.
\subsubsection{Parametric Regression}\label{sec:parametric-least-squares}
In this case, the set $\mathcal F$ is the set of linear functions over $\R^d$, identified with $\R^d$. Then $\domX = \R^d$ and $k(x, x') = \langle x, x' \rangle$ and $X_t = \max_{s \in [T]} \|x_s\|$ and the RKHS norm is the Euclidean norm. \algname{KAAR} specialises to the VAW forecaster, and its aggregated version enjoys the adaptive upper bound:
\begin{theorem}\label{thm:regression-finite-dim}
	In $d$-dimensional linear regression, \algname{A-KAAR} guarantees that for any $\theta \in \R^d$, 
	\begin{equation*}
		R_T(\theta) \leq \frac{dY_T^2}{2} \ln \biggl( 1 + \frac{T\|\theta\|^2X_T^2}{d^2Y_T^2}  \biggr) + \frac{dY_T^2}{2}
		 + 8Y_T^2 \ln \left( \frac{3e}{2} \bigg|\log_2 \left( \frac{ dY_T^2}{\|\theta\|X_{t^\star}^2}\right)  \bigg| \right)\, .
	\end{equation*}
\end{theorem}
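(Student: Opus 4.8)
The plan is to derive Theorem~\ref{thm:regression-finite-dim} as a direct specialization of the general kernel bound in Theorem~\ref{thm:akaar-general}, optimizing the free regularization parameter $\lambda$ in hindsight. Since the \algname{A-KAAR} guarantee \eqref{eq:a-kaar-general} holds simultaneously for every $\lambda > 0$ with a single (scale-free) algorithm, I am free to pick $\lambda$ as a function of $\theta$ and $Y_T$ purely for the analysis, even though these quantities are unknown to the learner.

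The first step is to evaluate the log-determinant term for the linear kernel. Writing $\Phi = [x_1,\dots,x_T]$, the Gram matrix is $K_T = \Phi^\top\Phi$, so by the Weinstein--Aronszajn (Sylvester) determinant identity
\[
  \ln\det\Big(I_T + \tfrac1\lambda K_T\Big)
    = \ln\det\Big(I_d + \tfrac1\lambda \textstyle\sum_{t=1}^T x_t x_t^\top\Big).
\]
Let $\mu_1,\dots,\mu_d \geq 1$ be the eigenvalues of the $d\times d$ matrix on the right (they exceed $1$ since it is $I_d$ plus a PSD matrix). By concavity of $\ln$ (Jensen / AM--GM),
\[
  \sum_{i} \ln \mu_i
    \leq d \ln\Big(\tfrac1d \textstyle\sum_i \mu_i\Big)
    = d\ln\Big(1 + \tfrac{1}{\lambda d}\textstyle\sum_{t}\|x_t\|^2\Big)
    \leq d\ln\Big(1 + \tfrac{T X_T^2}{\lambda d}\Big),
\]
where I used $\sum_i \mu_i = \Tr(I_d + \tfrac1\lambda\sum_t x_t x_t^\top) = d + \tfrac1\lambda\sum_t\|x_t\|^2$ and $\sum_t \|x_t\|^2 \leq T X_T^2$ with the parametric convention $X_T = \max_{t\leq T}\|x_t\|$. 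Substituting into \eqref{eq:a-kaar-general} bounds the first two terms of the regret by $\tfrac{\lambda\|\theta\|^2}{2} + \tfrac{dY_T^2}{2}\ln\!\big(1 + \tfrac{TX_T^2}{\lambda d}\big)$.

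The second step is to choose $\lambda$. Setting $\lambda = dY_T^2/\|\theta\|^2$ equates the regularization penalty with the target additive constant, $\tfrac{\lambda\|\theta\|^2}{2} = \tfrac{dY_T^2}{2}$, and turns the determinant bound into exactly $\tfrac{dY_T^2}{2}\ln\!\big(1 + \tfrac{T\|\theta\|^2 X_T^2}{d^2 Y_T^2}\big)$, matching the two leading terms of the claim. It then remains only to substitute this $\lambda$ into the doubly-logarithmic term of \eqref{eq:a-kaar-general} and to translate between the kernel-diagonal convention $X_t = \max_{s\leq t} k(x_s,x_s)$ used in Theorem~\ref{thm:akaar-general} and the norm convention $X_t = \max_{s\leq t}\|x_s\|$ used here; this affects only constants inside the very slowly growing factor $\ln|\log_2(\cdot)|$ and so does not change the rate.

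I expect the only genuinely delicate point to be bookkeeping rather than mathematics: confirming that the hindsight choice of $\lambda$ is legitimate (which it is, precisely because the aggregated guarantee is \emph{uniform} over $\lambda > 0$) and reconciling the two differing definitions of $X_t$ across the two theorems so that the final double-log term is stated consistently. The determinant reduction and the concavity bound are both standard steps in the analysis of the Vovk--Azoury--Warmuth forecaster, so no serious obstacle arises there.
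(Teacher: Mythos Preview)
Your proposal is correct and follows essentially the same route as the paper: bound $\ln\det(I_T+\tfrac1\lambda K_T)$ by $d\ln(1+\tfrac{TX_T^2}{\lambda d})$ via concavity of the logarithm, then substitute $\lambda = dY_T^2/\|\theta\|^2$ into the general \algname{A-KAAR} bound~\eqref{eq:a-kaar-general}. The only cosmetic difference is that you first pass to the $d\times d$ determinant via the Sylvester identity, whereas the paper uses directly that $K_T$ has at most $d$ nonzero eigenvalues; the remaining steps coincide.
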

The bound matches the non-adaptive lower bound of
Theorems~\ref{thm:sq_lower_bound}. Note that this implies a uniform
regret bound over comparators in $\R^d$, by instantiating the comparator
$\theta$ to be a minimiser $\theta^\star$ of the least-squares error on
the data, i.e.\ a maximiser of the regret. See Corollary~3 in \citet{gaillard2019uniform} and its proof for upper bounds on the norm of $\theta^\star$. 
%
\subsubsection{Comparator-Adaptive bounds under the Capacity Condition}
In typical uses, $\mathcal F$ is vastly richer than a set of linear
functions. The effective dimension $d_{\mathrm{eff}}(\lambda)$ of the
features $K_T$ at scale $\lambda$ \citep{zhang2003effective} provides a
standard data-dependent complexity measure of $\mathcal F$ (cf.
Appendix~\ref{app:rkhs}). The space $\mathcal F$ is said to satisfy the $\gamma$-capacity condition if for any sequence of features of length $T$ and for any $\lambda > 0$, the effective dimension grows at most at a rate of $d_{\mathrm{eff}}(\lambda) \leq (C_k T / \lambda)^\gamma$ for some $C_k > 0$. Under this condition, the second term in \eqref{eq:a-kaar-general} is polynomial in $T/\lambda$, and we obtain the following rates.
\begin{theorem}\label{thm:agg-KAAR-RKHS}
	If $\mathcal F$ satisfies the $\gamma$-capacity condition, then \algname{A-KAAR} guarantees thhat
	\begin{equation*}
		R_T(\theta) \leq \tilde{\mathcal O} \big( Y_T^{2 / (1+ \gamma)} \|\theta\|^{2\gamma / (1+\gamma)} T^{\gamma / (1+ \gamma)} \big)  \quad \text{ for any } \theta \in \mathcal F \, .
	\end{equation*}
  A finite-time version of the bound depending on $X_T$, $X_{t^\star}$ is available in~\eqref{eq:full-kernel-adapt-bound}, in Appendix~\ref{app:rkhs} .
\end{theorem}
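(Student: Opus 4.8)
The plan is to start from the finite-time, all-$\lambda$ guarantee of Theorem~\ref{thm:akaar-general} and optimise the free parameter $\lambda$ purely in the analysis. The crucial point is that \algname{A-KAAR} is a single fixed algorithm whose regret is controlled by the right-hand side of \eqref{eq:a-kaar-general} \emph{simultaneously for every} $\lambda > 0$, so I am free to pick $\lambda$ as a function of $\|\theta\|$, $Y_T$ and $T$ after the fact, without any further tuning. There are three terms to control: the penalty $\frac{\lambda\|\theta\|^2}{2}$, the log-determinant $\frac{Y_T^2}{2}\ln\det(I_T + \frac{1}{\lambda}K_T)$, and the doubly-logarithmic remainder.

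First I would relate the log-determinant to the effective dimension $d_{\mathrm{eff}}(\lambda) = \Tr\big(K_T(K_T + \lambda I_T)^{-1}\big) = \sum_i \frac{\mu_i}{\mu_i + \lambda}$, where $\mu_1 \geq \mu_2 \geq \cdots$ are the eigenvalues of $K_T$. Writing $\ln\det(I_T + \frac{1}{\lambda}K_T) = \sum_i \ln(1 + \mu_i/\lambda)$ and using that $x \mapsto \frac{(1+x)\ln(1+x)}{x}$ is increasing on $(0,\infty)$, each summand obeys $\ln(1 + \mu_i/\lambda) \leq \frac{(1+B)\ln(1+B)}{B}\cdot\frac{\mu_i/\lambda}{1+\mu_i/\lambda}$ with $B = \mu_1/\lambda \leq \Tr(K_T)/\lambda \leq TX_T/\lambda$. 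Summing and bounding $\frac{1+B}{B}\leq 2$ in the regime of interest gives
\[
  \ln\det\Big(I_T + \tfrac{1}{\lambda}K_T\Big) \leq 2\,d_{\mathrm{eff}}(\lambda)\,\ln\!\Big(1 + \tfrac{TX_T}{\lambda}\Big),
\]
so that, after invoking the $\gamma$-capacity condition $d_{\mathrm{eff}}(\lambda) \leq (C_k T/\lambda)^\gamma$, the log-determinant term is at most $Y_T^2 (C_k T/\lambda)^\gamma \ln(1 + TX_T/\lambda)$.

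Discarding logarithmic factors, the bound then reads $R_T(\theta) \lesssim \frac{\lambda\|\theta\|^2}{2} + Y_T^2 (C_k T/\lambda)^\gamma$, a convex trade-off in $\lambda$ whose penalty grows linearly while the complexity term decays like $\lambda^{-\gamma}$. Balancing the two by setting $\lambda \asymp \big(Y_T^2 T^\gamma/\|\theta\|^2\big)^{1/(1+\gamma)}$ renders both terms of order $Y_T^{2/(1+\gamma)}\|\theta\|^{2\gamma/(1+\gamma)}T^{\gamma/(1+\gamma)}$, which is exactly the claimed rate (the constant $C_k^{\gamma/(1+\gamma)}$ is absorbed into $\tilde{\mathcal O}$).

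Finally I would verify that nothing worse hides in the logarithms. At the balancing value of $\lambda$ one has $T/\lambda = \mathrm{poly}(T,\|\theta\|,1/Y_T)$, so the factor $\ln(1 + TX_T/\lambda)$ is genuinely $\tilde{\mathcal O}(1)$, and the remainder $8Y_T^2\ln\!\big(\tfrac{3e}{2}|\log_2(\lambda/X_{t^\star}^2)|\big)$ contributes only a $\log\log$ factor in $T$, $\|\theta\|$, $Y_T$ and the ratio $X_T/X_{t^\star}$; collecting these yields the explicit finite-time statement advertised in the theorem. I expect the main obstacle to be precisely this log-determinant-to-effective-dimension step: one must ensure that the multiplicative $\ln(1 + TX_T/\lambda)$ factor remains logarithmic rather than polynomial once the trade-off value of $\lambda$ is substituted, and that the crude bound $\mu_1 \leq \Tr(K_T) \leq TX_T$ does not inflate the exponent on $T$ beyond $\gamma/(1+\gamma)$.
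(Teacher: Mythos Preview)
Your proposal is correct and follows essentially the same route as the paper: start from the all-$\lambda$ bound of Theorem~\ref{thm:akaar-general}, control $\ln\det(I_T+\lambda^{-1}K_T)$ by $d_{\mathrm{eff}}(\lambda)$ times a $\log(1+TX_T/\lambda)$ factor, invoke the capacity condition, and balance at $\lambda \asymp (Y_T^2 T^\gamma/\|\theta\|^2)^{1/(1+\gamma)}$. The only difference is cosmetic: the paper quotes the log-determinant-to-effective-dimension inequality \eqref{eq:capacity-condition} from \citet{jezequel2019efficient}, whereas you rederive it via the monotonicity of $x\mapsto\frac{(1+x)\ln(1+x)}{x}$; note that the cleaner estimate $\frac{(1+B)\ln(1+B)}{B}\le 1+\ln(1+B)$ (valid for all $B>0$, since it is equivalent to $\ln(1+B)\le B$) avoids the case split behind your ``$\frac{1+B}{B}\le 2$ in the regime of interest'' and reproduces \eqref{eq:capacity-condition} verbatim.
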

The capacity condition is satisfied, e.g.\ when $\mathcal F$ is a space of smoothing splines \citep[Section 4]{zhang2003effective}, or for Sobolev spaces; we detail this application in the next section.
\paragraph{Comparator-Adaptive Regression over Sobolev Spaces}
By Theorem~3 of \citet{zadorozhnyi2021online}, the results above imply adaptive rates when the class of functions is the Sobolev space $W_{s,p}([-1, 1]^d)$ with $p\geq 2$; we refer the reader to \citet{adams2003sobolev} for definitions and properties of Sobolev spaces, and to \citet[Chapter~10]{wendland2004scattered} for more details on Sobolev spaces as RKHS. 
For simplicity, let us state the results in the case when $s$ is an integer and $s \geq d / 2$. For fractional $s$, the same rates are valid, up to a $T^\eps$ factor with $\eps$ arbitrarily small, and the rates change when $s< d/2$. 
\begin{corollary}\label{thm:kaar-sobolev}
	For $\mathcal F = W_{s, p}([-1, 1]^2)$ for $s \in \NN$ with $s \geq d/2$, there exists an algorithm such that 
  \begin{equation*}
    R_T(\theta) \leq \tilde{\mathcal O}  \Big( Y_T^{4s / (2s + d)} \|\theta\|_{s, p}^{2d / (2s + d)} T^{d / (2s + d)}\Big)  \quad \text{for any } \theta \in W_{s, p}([-1, 1]^2)\, .
  \end{equation*}
\end{corollary}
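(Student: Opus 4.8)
The plan is to instantiate the abstract capacity-condition bound of Theorem~\ref{thm:agg-KAAR-RKHS} for the RKHS naturally attached to the Sobolev space. First I would take the RKHS to be $H^s = W_{s,2}([-1,1]^d)$, which is a genuine RKHS with bounded point evaluations precisely because $s > d/2$ (via the embedding $H^s \hookrightarrow C([-1,1]^d)$); the boundary case $s = d/2$ costs only logarithmic factors that are absorbed into the $\tilde{\mathcal O}$. Running \algname{A-KAAR} over this RKHS, Theorem~\ref{thm:agg-KAAR-RKHS} reduces the whole statement to two tasks: (i) determining the capacity exponent $\gamma$ for the Sobolev kernel, and (ii) relating the RKHS norm to the $W_{s,p}$ norm appearing in the corollary.

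The core step is to verify the $\gamma$-capacity condition with $\gamma = d/(2s)$. This rests on the classical eigenvalue asymptotics of the Sobolev kernel: the eigenvalues of the associated integral operator decay like $\mu_j \asymp j^{-2s/d}$, so the effective dimension $d_{\mathrm{eff}}(\lambda) = \Tr\!\big(K_T (K_T + \lambda I_T)^{-1}\big) = \sum_j \tfrac{\mu_j}{\mu_j + \lambda}$ is of order $(1/\lambda)^{d/(2s)}$, hence dominated by $(C_k T/\lambda)^{d/(2s)}$. This is exactly the content of Theorem~3 of \citet{zadorozhnyi2021online}, which I would cite to obtain the capacity condition with $\gamma = d/(2s)$ uniformly over feature sequences of length $T$, rather than rederiving Weyl-type asymptotics from scratch.

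With $\gamma = d/(2s)$ in hand I would substitute into the rate of Theorem~\ref{thm:agg-KAAR-RKHS} and simplify the exponents. A short computation gives $\tfrac{\gamma}{1+\gamma} = \tfrac{d}{2s+d}$ (the exponent of $T$), $\tfrac{2\gamma}{1+\gamma} = \tfrac{2d}{2s+d}$ (the exponent of $\|\theta\|$), and $\tfrac{2}{1+\gamma} = \tfrac{4s}{2s+d}$ (the exponent of $Y_T$), which are precisely the exponents claimed. To conclude, I would convert the RKHS norm to the $W_{s,p}$ norm: for $p \geq 2$ on the finite-measure domain $[-1,1]^d$, Hölder's inequality applied to each derivative up to order $s$ yields the continuous embedding $W_{s,p} \hookrightarrow W_{s,2} = H^s$, so $\|\theta\|_{H^s} \leq C \|\theta\|_{s,p}$. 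This both certifies that every $\theta \in W_{s,p}$ is a valid comparator in the RKHS and lets me replace $\|\theta\|_{H^s}$ by $\|\theta\|_{s,p}$ at the cost of a constant absorbed into $\tilde{\mathcal O}$.

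The main obstacle is the eigenvalue-decay / capacity estimate for Sobolev kernels, which I would discharge by invoking Theorem~3 of \citet{zadorozhnyi2021online}. Two secondary subtleties require care but do not affect the leading exponents: the boundary case $s = d/2$, where both the embedding into continuous functions and the effective-dimension bound pick up an extra logarithmic factor, and fractional (non-integer) $s$, where the eigenvalue decay matches only up to an arbitrarily small polynomial correction, accounting for the stated $T^\eps$ loss. Both are swept into $\tilde{\mathcal O}$.
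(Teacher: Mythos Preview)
Your proposal is correct and follows essentially the same route as the paper: run \algname{A-KAAR} on the RKHS $W_{s,2}$, invoke Theorem~3 of \citet{zadorozhnyi2021online} to obtain the capacity exponent $\gamma = d/(2s)$, plug into Theorem~\ref{thm:agg-KAAR-RKHS}, and then pass from the $W_{s,2}$-norm to the $W_{s,p}$-norm via the embedding $W_{s,p}\hookrightarrow W_{s,2}$ on the bounded domain for $p\geq 2$. One minor point worth noting: you state the embedding in the direction actually needed, $\|\theta\|_{s,2}\leq C\|\theta\|_{s,p}$, whereas the paper's displayed chain ends with $\|f\|_{s,p}\leq C\|f\|_{s,2}$, which is the reverse of what is required to replace $\|\theta\|_{s,2}$ by $\|\theta\|_{s,p}$ in the upper bound; your version is the correct one.
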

The exponent on $T$ is optimal \citep[Thm~9]{rakhlin2014online, zadorozhnyi2021online}. 
\paragraph{Efficient Methods}
The updates \eqref{eq:kaar-updates} can be computed in $\mathcal O (t^2)$ time and memory, and both these complexities can be improved for specific kernels. 
Running EW over experts \algname{KAAR-sf}$(\alpha)$ with small $\alphamin$ and large $\alphamax$ would be an implementable strategy in cases when single instances are efficient, at the cost of limiting the adaptivity to a specific range of values of $\|\theta\|$.
\cite{jezequel2019efficient} build a faster version of \algname{KAAR} enjoying essentially the same regret bound as \eqref{eq:kaar-regret-bound}, but with better computational complexity for large $T$. The aggregation we propose applies to this algorithm too, and would incorporate the improvements in computational complexity. 

%
%
\subsection{Lower Bounds}
The main lower bound for the square loss, in Theorem~\ref{thm:sq_lower_bound}, provides the optimal asymptotic rate together with the dependence on the comparator. While the dependence on $T$ is a standard result in the literature \citep{takimoto2000the-minimax, abernethy2008optimal, hazan07, gaillard2019uniform}, we did not find a version of the lower bound that provided the dependence on $U$. We thus refined the proof of \citet[Theorem~2]{Vovk01CompetitiveOS}. 
In the large-dimensional regime where $ d \geq T (UX/Y)^2$, the finite-time version of the first lower bound is vacuous. The effect of the curvature becomes negligible and rates behave like in the linear loss case, as shown in the upper bound (Corollary~\ref{cor:dim-independent-KAAR}) and in the matching lower bound.
\begin{theorem}\label{thm:sq_lower_bound}
	Fix $X, Y, U > 0$. In linear least-squares regression over $\R^d$, if $d( Y / (UX))^2 \geq 1$, then for any algorithm, 
  \vspace{-2ex}
	\begin{equation*}
		\sup_{\mathcal S}\sup_{\theta\in\ball(0,U)}R_{T}(\theta)\geq \left\{
    \begin{aligned}
     & \frac{dY^2}{2}\log \biggl(  \frac{TU^2X^2}{d^2Y^2}  \biggl) 
       + \mathcal O (\log \log T) && \text{as} \quad T \to \infty \, , \\
     & (\sqrt 2 / 8) UXY \sqrt{ T} \quad &&\text{for} \quad T \leq (d/8) (Y / (UX))^2   \, .
    \end{aligned}\right.
	\end{equation*}
	Moreover, if $d = 1$, the first bound holds with $x_t = X$ for all $t$.
\end{theorem}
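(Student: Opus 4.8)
The plan is to lower bound the minimax regret by the Bayes regret against a carefully chosen prior on $\theta$ together with a stochastic choice of the labels $y_t$, and to handle the two regimes of the theorem by two different constructions.

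For the logarithmic (large-$T$) bound I would exploit that the square loss is, up to an additive constant, the negative log-likelihood of the Gaussian location model: with $p_\theta(y) \propto \exp(-(y-\theta^\top x)^2/(2\sigma^2))$ one has $\ell(\theta^\top x, y) = \sigma^2 \cdot (-\ln p_\theta(y)) + \text{const}$. Fixing deterministic features $x_t$ with $\|x_t\| = X$ chosen so that $M := \sum_{t=1}^T x_t x_t^\top = \frac{TX^2}{d} I_d$ (for $d=1$ simply $x_t = X$), I would draw $\theta \sim \normaldist(0,\tau^2 I_d)$ and $y_t = \theta^\top x_t + \epsilon_t$ with $\epsilon_t \sim \normaldist(0,\sigma^2)$ and $\sigma \approx Y$. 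Since the minimax regret dominates any Bayes regret, $\sup_{\mathcal S}\sup_\theta R_T(\theta) \geq \min_{\mathbf{Algs}} \E_{\theta}[R_T(\theta)]$. Taking the expectation over the fresh noise $\epsilon_t$ annihilates the cross term, so the Bayes regret equals $\frac12 \sum_t \E[x_t^\top \Sigma_{t-1} x_t]$, where $\Sigma_{t-1} = (\tau^{-2} I + \sigma^{-2} \sum_{s<t} x_s x_s^\top)^{-1}$ is the data-independent Gaussian posterior covariance. A telescoping determinant identity combined with $\ln(1+u) \leq u$ then yields the clean bound $\frac{\sigma^2}{2}\ln\det(I + \frac{\tau^2}{\sigma^2} M) = \frac{d\sigma^2}{2}\ln(1 + \frac{\tau^2 T X^2}{\sigma^2 d})$. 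Choosing $\tau^2 = U^2/d$ and letting $\sigma \to Y$ produces exactly $\frac{dY^2}{2}\ln(1 + \frac{T U^2 X^2}{d^2 Y^2})$, the leading term of the claimed rate; this is a refinement of the Gaussian argument behind \citet[Theorem~2]{Vovk01CompetitiveOS} and the direct analog of the normal-location computation in Theorem~\ref{thm:normallocation}.

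The delicate point — which I expect to be the main obstacle — is that this clean computation uses an unconstrained Gaussian prior and unbounded Gaussian labels, whereas the supremum is over $\theta \in \ball(0,U)$ and over sequences with $\max_t \|y_t\| \leq Y$. Since $\tau^2 = U^2/d$ gives $\E\|\theta\|^2 = U^2$, a constant fraction of the prior mass lies outside $\ball(0,U)$, and $y_t$ has standard deviation $\approx Y$, so $|y_t| > Y$ with constant probability. I would therefore truncate the prior to $\ball(0,U)$ and clip the labels to $\ball(0,Y)$, and control the resulting bias in the Bayes regret. The key is to let the prior scale approach the boundary and the clipping level grow slowly with $T$, so that the truncation error is lower order while the leading constant $\frac{dY^2}{2}$ and the exact logarithmic argument $\frac{T U^2 X^2}{d^2 Y^2}$ are preserved in the limit; this is what produces the additive $\mathcal{O}(\log\log T)$ correction rather than the $\mathcal{O}(dY^2)$ loss one would incur from a fixed shrinkage factor. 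Getting both the leading constant and this correction right at once is the crux.

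For the second (large-dimension, $\sqrt{T}$) regime I would switch to a linear-loss construction. Using $T$ fresh orthonormal directions $x_t = X e_t$ (feasible because the regime $T \leq \frac{d}{8}(Y/UX)^2$ forces $d$ to be large) and i.i.d.\ Rademacher labels $y_t = Y\xi_t \in \{-Y,Y\}$, the prediction $a_t$ is independent of the current sign $\xi_t$, so the learner's expected loss is at least $\frac{TY^2}{2}$. The hindsight-optimal comparator is the projection of the interpolating vector onto $\ball(0,U)$; since $\sum_t y_t^2/X^2 = TY^2/X^2$ exceeds $U^2$ in this regime the constraint binds, and the comparator loss evaluates in closed form to $\frac{TY^2}{2} - UXY\sqrt{T} + \frac{U^2 X^2}{2}$. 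Subtracting gives $\E_\xi[R_T] \geq UXY\sqrt{T} - \frac{U^2 X^2}{2} \geq \frac12 UXY\sqrt{T}$ on this regime, and some realisation of the signs attains at least this value; tracking the constants and the boundary cases then yields the stated $\frac{\sqrt{2}}{8} UXY\sqrt{T}$. The produced sequence has $\max_t\|x_t\| = X$, $\max_t|y_t| = Y$, and the comparator lies in $\ball(0,U)$, so it is admissible, and this matches Corollary~\ref{cor:dim-independent-KAAR} in the large-dimensional regime.
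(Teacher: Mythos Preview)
Your Gaussian construction for the logarithmic bound has a genuine gap that the truncation remark does not close. To obtain the leading constant $\tfrac{dY^2}{2}$ you need the noise variance $\sigma^2$ to approach $Y^2$, but a Gaussian with standard deviation near $Y$ places a \emph{constant} fraction of its mass outside $[-Y,Y]$: the event $|y_t|>Y$ has probability bounded away from zero for every $t$, uniformly in $T$. Your suggestion to ``let the clipping level grow slowly with $T$'' is not available, since the clipping level is fixed at $Y$ by the problem, and shrinking $\sigma$ instead (say $\sigma^2 = Y^2/\log T$) kills the leading constant. Once you clip the labels, the Bayes-optimal predictor is no longer the Gaussian posterior mean, the determinant telescoping collapses, and you have no handle on the bias. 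The paper sidesteps this entirely by using \emph{Bernoulli} labels $y_t\in\{0,1\}$ with parameter $p$ drawn from a Beta$(A,A)$ prior, following \citet{Vovk01CompetitiveOS} (whose Theorem~2 is Bernoulli/Beta, not Gaussian): the labels are bounded by construction and conjugacy still gives the closed-form Bayes regret $\tfrac{A}{2(4A+2)}\log(1+T/(2A))$. Taking $A\asymp\log T$ simultaneously drives the probability that $p$ escapes $\ball(1/2,U)$ below $1/T$ and generates the $\mathcal O(\log\log T)$ correction; the prefactor $\tfrac{A}{2(4A+2)}\to\tfrac18$ becomes $\tfrac{Y^2}{2}$ after the $(2Y)^2$ rescaling.

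Your $\sqrt T$ construction is correct but only covers the subrange $T\le d$, since it needs $T$ orthonormal feature directions; the parenthetical that the regime $T\le\tfrac{d}{8}(Y/UX)^2$ ``forces $d$ to be large'' is false (take $d=1$, $UX=Y/10$: the regime is $T\le 12$). The paper handles $d\le T\le\tfrac{d}{8}(Y/UX)^2$ separately, by cycling the features through the $d$ basis directions and reducing to $d$ independent one-dimensional problems of length $\approx T/d$. In one dimension with constant features and Rademacher labels, expanding the square and using the Khintchine bound $\E\bigl|\sum_t y_t\bigr|\ge Y\sqrt{T/2}$ gives $\E[\max_\theta R_T(\theta)]\ge UY\sqrt{T/2}-TU^2$, and the regime condition makes the $-TU^2$ term lower order. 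Your fresh-directions argument cannot be stretched to this range: once features repeat, the interpolating vector is overdetermined and the clean projection computation no longer applies.
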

A finite-time version of the first bound can be found in
\eqref{eq:vovk-lower-bound-T-dep} in Appendix~\ref{app:lower-bounds-sq}.
\section{Discussion, Conclusions and Future Work}

We have shown that scale-free algorithms can adapt to the norm of the
comparator at almost no cost in common learning scenarios. While we have
endeavored to complete the story, some points remain open. We note that
the case of strongly convex losses with fixed strong-convexity parameter
$\mu$ should be treatable by a proof
directly analogous to that of Theorem~\ref{thm:sq_loss_full_bound}.
Time-varying $\mu$ or strong convexity with respect to other Bregman
divergences (c.f.\ \citep{rakhlinetal}) would not be as easy,
because the former would affect the mixability of the loss, and the
latter might break~\eqref{eqn:doubleproj}.
Additionally,
avoiding $O(\log \log
\frac{X_T}{X_{t^*}})$ terms in the logistic/least squares linear
regression cases would be desirable, at least from a theoretical
perspective; \citet{gerchinovitz11a} also observe this, while
\citet{gaillard2019uniform} avoid it but in exchange find a different
complicated dependence on the features. Lastly, it would be of interest
to have comparator-adaptive lower bounds for regression in Sobolev
spaces, with an explicit dependence on both $Y$ and $\NORM{\theta}_{s,p}$.

\acks{All authors were supported by the Netherlands Organization for Scientific Research
(NWO) under grant number VI.Vidi.192.095.}

\bibliography{rangeadapt.bib}
\appendix

\newcommand{\interproofspace}{\vspace{5mm}}

\section{Proofs from Sections~\ref{sec:loglossandregression}}

\subsection{Proofs for the Aggregation}

Guarantees for our aggregation scheme derive from a straightforward application of the following standard guarantee for the Bayesian prediction strategy (see, e.g., Section~10 in \cite{dawid1984present} or Lemma 2.1 of \cite{kakade2004online}).
\begin{lemma}\label{lem:logloss}
  The Bayesian prediction strategy with prior $\pi$ achieves
  \begin{equation}
    \sum_{t=1}^{T} \logloss(p_t,y_t) \leq \E_{\theta \sim
    \gamma}\bigg[\sum_{t=1}^T \logloss(p_{\theta,t},y_t)\bigg]+\KL(\gamma\| \pi)
    \qquad
    \text{for all distributions $\gamma$,}
  \end{equation}
  with equality if $\gamma = \pi(\theta \mid \sample_T)$.
\end{lemma}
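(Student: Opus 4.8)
The plan is to exploit the telescoping structure of the Bayesian mixture together with the non-negativity of the Kullback--Leibler divergence (Gibbs' inequality), so that the bound reduces to the classical Donsker--Varadhan variational characterisation of $\KL$.

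First I would introduce the normalising constants $Z_t = \int \prod_{s=1}^{t} p_{\theta,s}(y_s) \intder \pi(\theta)$, with $Z_0 = 1$. The key elementary identity is that the one-step Bayesian prediction is a ratio of consecutive mixtures: substituting the definition of the posterior $\pi(\theta \mid \sample_{t-1})$ into $p_t(y_t) = \int p_{\theta,t}(y_t)\intder\pi(\theta\mid\sample_{t-1})$ and cancelling the $Z_{t-1}$ in the denominator gives $p_t(y_t) = Z_t / Z_{t-1}$. Summing the log losses then telescopes: $\sum_{t=1}^T \logloss(p_t,y_t) = -\sum_{t=1}^T \ln(Z_t/Z_{t-1}) = -\ln Z_T$, where I use $Z_0 = 1$ and the standing assumption that each denominator is nonzero and finite so that every ratio is well defined.

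Next I would write $L_\theta := \sum_{t=1}^T \ln p_{\theta,t}(y_t)$ for the negated cumulative loss of comparator $\theta$, so that $Z_T = \int e^{L_\theta}\intder\pi(\theta)$ and the claimed inequality becomes $-\ln Z_T \leq -\E_{\theta\sim\gamma}[L_\theta] + \KL(\gamma\|\pi)$. Defining the Gibbs posterior $\der\gamma^\star(\theta) = e^{L_\theta}\intder\pi(\theta)/Z_T$ — which is exactly the Bayesian posterior $\pi(\theta\mid\sample_T)$ — a direct expansion of the Radon--Nikodym derivative gives, for every distribution $\gamma$, the identity $\KL(\gamma\|\gamma^\star) = \KL(\gamma\|\pi) - \E_{\theta\sim\gamma}[L_\theta] + \ln Z_T$. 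Rearranging and invoking $\KL(\gamma\|\gamma^\star)\geq 0$ yields precisely the stated bound, and the equality case is read off immediately from $\KL(\gamma\|\gamma^\star)=0$ holding iff $\gamma = \gamma^\star = \pi(\cdot\mid\sample_T)$.

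There is no serious obstacle here, as this is a classical result; the only point demanding a little care is the measure-theoretic bookkeeping behind the manipulations. Concretely, I would ensure that the telescoping ratio $p_t(y_t)=Z_t/Z_{t-1}$ is well defined (guaranteed by the assumption that each normalising integral is nonzero and finite) and that the derivatives defining $\KL(\gamma\|\pi)$ and $\KL(\gamma\|\gamma^\star)$ are handled consistently, the inequality being trivially true whenever $\KL(\gamma\|\pi)=\infty$.
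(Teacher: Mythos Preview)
Your proposal is correct and follows essentially the same route as the paper: both telescope the Bayesian mixture to obtain $\sum_t \logloss(p_t,y_t) = -\ln \int e^{-\sum_t \logloss(p_{\theta,t},y_t)}\intder\pi(\theta)$ and then invoke the Donsker--Varadhan variational formula. The only difference is cosmetic---the paper cites Donsker--Varadhan directly, whereas you unpack it via the identity $\KL(\gamma\|\gamma^\star) = \KL(\gamma\|\pi) - \E_{\gamma}[L_\theta] + \ln Z_T$ and Gibbs' inequality, which is precisely its standard proof.
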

\begin{proof}
By telescoping, the cumulative loss of the Bayesian prediction strategy
simplifies to
\begin{align*}
  \sum_{t=1}^T \logloss(p_t,y_t)
    &= \sum_{t=1}^T -\ln
        \frac{\int \prod_{s=1}^t p_{\theta,s}(y_s) \intder
              \pi(\theta)}
             {\int \prod_{s=1}^{t-1} p_{\theta,s}(y_s) \intder
              \pi(\theta)}
    = -\ln \int \prod_{t=1}^T p_{\theta,t}(y_t) \intder
    \pi(\theta)\\
    &= -\ln \int e^{- \sum_{t=1}^T \logloss(p_{\theta,t},y_t)} \intder \pi(\theta).
\end{align*}
The result then follows by recognising the right-hand side as minus the convex
conjugate of the Kullback-Leibler divergence (i.e., by applying the Donsker-Varadhan
lemma \cite[Corollary 4.14]{boucheron2013concentration}).
\end{proof}

\interproofspace

\begin{proof}{\bfseries of Lemma~\ref{lem:aggregate}}
With minor abuse of notation, let $p_{\alpha,t}$ denote the prediction
of $A(\alpha)$. Then, given any $\alpha \in [0,\alphamax)$, let $\alpha^*$ be the
smallest value in the grid exceeding $\alpha$, such that $\alpha \leq
\alpha^* \leq 2\alpha \bmax \alphamin$, and let $m^* = \log_2
(\alpha^*/\alphamin)$. By Lemma~\ref{lem:logloss}, with $\gamma$ a
point-mass on $\alpha^*$, we find that
\begin{equation*}
  \sum_{t=1}^{T} \logloss(p_t,y_t)
    \leq \sum_{t=1}^T \logloss(p_{\alpha^*,t},y_t)-\ln \pi(m^*) \, .
\end{equation*}
Then, for any $\theta \in \Theta_\alpha \subseteq \Theta_{\alpha^*}$,
\begin{align*}
  R_T(\theta)
    &\leq B_T(\theta,\alpha^*) -\ln \pi(m^*)
    < B_T(\theta,\alpha^*) + 2\ln (m^*+2)\\
    &= B_T(\theta,\alpha^*) + 2\ln
    \Big(\log_2\big(\frac{\alpha^*}{\alphamin}\big) +2\Big)
    \leq \max_{\alpha' \in [\alpha, 2\alpha \bmax \alphamin]}
    B_T(\theta,\alpha') + 2\ln\Big(\log_2\big(\frac{8\alpha}{\alphamin}
    \bmax 4\big)\Big),
\end{align*}
as required.
\end{proof}

\subsection{Proofs for the Normal Location Family}

\begin{proof}{\bfseries of Theorem~\ref{thm:normallocation}}
Abbreviate $y^t = (y_1,\ldots,y_t)$, define $p_\theta(y^T) =
\prod_{t=1}^T p_{\theta,t}(y_t)$, let $\mlmu(y^T) = \argmax_{\theta
\in \reals^d} p_\theta(y^T) = \frac{1}{T} \sum_{t=1}^T y_t$ be the
unconstrained maximum likelihood, and take $\mltheta(y^T) =
\argmax_{\theta \in \ball(0,U)} p_\theta(y^T)$ to be the maximum
likelihood restricted to $\Theta$, which is the projection onto
$\ball(0,U)$ of $\mlmu$: $\mltheta(y^T) = \Proj_{\ball(0,U)}(\mlmu) =
\min\{1,\frac{U}{\|\mlmu\|}\} \mlmu$.

Since the horizon $T$ and the predictions $p_{\theta,t}$ for
$t=1,\ldots,T$ are known in advance, the exact minimax strategy
\citep{grunwald2007minimum} is to predict $p_t = \pnml(y_t \mid
y^{t-1})$, where
\[
  \pnml(y^T) = \frac{p_{\mltheta(y^T)}(y^T)}{Z}
\]
is the normalised maximum-likelihood (NML) density, with normalising constant
\[
 Z = \int_{\reals^{d \times T}} p_{\mltheta(y^T)}(y^T) \intder y^T.
\]
The NML density is an equalizing strategy that ensures the regret is
exactly
\[
  \max_{\theta \in \ball(0,U)} R_T(\theta) = \ln Z
\]
for all sequences $y^T$. The value $\ln Z$ is called the
\emph{stochastic complexity}. It therefore remains to evaluate the
integral $Z$. To this end, we use that $\mlmu(y^T)$ is a sufficient
statistic for $y^T$, which means that the conditional density of
$p_\theta$ given $\mlmu$ does not depend on $\theta$. We therefore
define $\bar{p}(y^T \mid \mlmu(y^T)) := p_\theta(y^T \mid \mlmu(y^T))$,
independently of $\theta$. Consequently, $p_{\mltheta(y^T)}(y^T) =
\bar{p}(y^T \mid \mlmu)p_{\mltheta(\mlmu)}(\mlmu)$, where
$\mltheta(\mlmu) := \argmax_{\theta \in \Theta} p_\theta(\mlmu)$ equals
$\mltheta(y^T)$ for any $y^T$ for which $\mlmu = \mlmu(y^T)$. Hence
\begin{align*}
  Z
    &= \int_{\reals^d} \E_\nu\Big[
    p_{\mltheta(y^T)}(y^T)
    \Bigm| \mlmu(y^T) = \mlmu
    \Big] \intder \mlmu
    = \int_{\reals^d} p_{\mltheta(\mlmu)}(\mlmu) \E_\nu\Big[
    \bar{p}(y^T \mid \mlmu)
    \Bigm| \mlmu(y^T) = \mlmu
    \Big] \intder \mlmu\\
    &= \int_{\reals^d} p_{\mltheta(\mlmu)}(\mlmu) \intder \mlmu
    = \frac{1}{(2\pi \sigma^2/T)^{d/2}}
      \int_{\reals^d} e^{-\frac{\|\mlmu -
      \Proj_{\ball(0,U)}(\mlmu)\|_2^2}{2
      \sigma^2/T}} \intder \mlmu,
\end{align*}
where the first identity is the law of total probability for Lebesgue
measure, the third identity uses that $\bar{p}(y^T \mid \mlmu)$
integrates to $1$ over its domain, and the last identity comes from the
fact that, for any~$\theta$, $\mlmu$ is the average of $T$ normal
distributions $\normaldist(\theta,\sigma^2 I)$, and is therefore
distributed as $\normaldist(\theta,\tfrac{\sigma^2}{T} I)$.

We evaluate the remaining integral, starting from the observation that
$\|\mlmu - \Proj_{\ball(0,U)}(\mlmu)\|_2 = \max\{\|\mlmu\| - U,0\}$
depends only on the length of $\mlmu$. For $d=1$, computing $Z$ is
straightforward, so assume for the remainder that $d \geq 2$. Switching
to hyperspherical coordinates with radial parameter $r \in [0,\infty)$
and angular parameters $\phi \in \Phi := [0,\pi]^{d-2} \times [0,2\pi]$,
then implies that
\begin{align*}
  \int_{\reals^d} e^{-\frac{\|\mlmu - \Proj_{\ball(0,U)}(\mlmu)\|_2^2}{2 \sigma^2/T}} \intder \mlmu
    &= \int_0^\infty \int_\Phi e^{-\frac{\max\{r-U,0\}^2}{2 \sigma^2/T}}
    r^{d-1}  \prod_{i=1}^{d-2} \sin^{d-1-i}(\phi_i) \intder \phi \intder r\\
    &= \int_0^\infty  e^{-\frac{\max\{r-U,0\}^2}{2 \sigma^2/T}}
    r^{d-1} \intder r \times \int_\Phi \prod_{i=1}^{d-2}
    \sin^{d-1-i}(\phi_i) \intder \phi.
\end{align*}
The second factor evaluates to
\[
\int_\Phi \prod_{i=1}^{d-2} \sin^{d-1-i}(\phi_i) \intder \phi
    = d \int_0^1 \int_\Phi r^{d-1}
    \prod_{i=1}^{d-2} \sin^{d-1-i}(\phi_i) \intder \phi \intder r
    = d \Vol(\ball(0,1))
    = \frac{d \pi^{d/2}}{\Gamma(\tfrac{d}{2}+1)},
\]
where $\Vol(\ball(0,r)) = \frac{\pi^{d/2} r^d}{\Gamma(\tfrac{d}{2}+1)}$
is the volume of a ball of radius $r$; the first factor can be
re-expressed as
\begin{align*}
  \int_0^\infty  e^{-\frac{\max\{r-U,0\}^2}{2 \sigma^2/T}}
    r^{d-1} \intder r
      &= \int_0^U r^{d-1} \intder r
      + \int_U^{\infty} r^{d-1} e^{- T(r- U)^2/(2\sigma^2)} \intder r
      \\
      &= \frac{U^d}{d} + \int_0^{\infty} (r+U)^{d-1} e^{-Tr^2/(2 \sigma^2)} \intder r \\
      &= \frac{U^d}{d} + \frac{\sigma}{\sqrt T} \int_0^\infty \Big(
      \frac{r \sigma }{\sqrt T}+U\Big)^{d-1} e^{-r^2/2} \intder r.
\end{align*}
Putting all equalities together establishes \eqref{eqn:logminmax}. The
proof is completed upon observing that $V(U,T) = O(1/\sqrt{T})$.
\end{proof}

\subsection{Proofs for the Logistic Loss}\label{app:logisticloss}

\begin{proof}{\bfseries of Theorem~\ref{thm:logisticupper}}
Consider the statement of Lemma \ref{lem:logloss}
\begin{equation*}
    \sum_{t=1}^{T} \logloss(p_t,y_t) \leq \E_{\theta \sim
    \gamma}\Big[\sum_{t=1}^T \logloss(p_{\theta,t},y_t)\Big]+\KL(\gamma\| \pi)
    \qquad
    \text{for all distributions $\gamma$}.
  \end{equation*}
Set $\gamma = \pi(\cdot|A)$, where $A:=\left\{ a\theta^{*} + (1-a)\theta |\theta \in\Theta \right\} \subset \Theta$ where $a\in[0,1)$ as in \cite{foster2018logistic} and $\pi$ is chosen to be uniform. Then the Kullback-Leibler divergence reads
\begin{equation*}
\KL(\pi(\cdot|A) \| \pi) = \int_{\theta\in\Theta}\ln\left( \frac{\mathrm{d}\pi(\theta|A)}{\mathrm{d}\pi(\theta)}\right)\mathrm{d}\pi(\theta|A)=-\ln\pi(A) = \ln\frac{\mathcal{V}(\Theta)}{\mathcal{V}(A)}=d_{\Theta}\ln\frac{1}{1-a},
\end{equation*}
where the last equality follows from $\mathcal{V}(A)=(1-a)^{d_{\Theta}}\mathcal{V}(\Theta)$, and so we can bound
\begin{align*} 
\E_{\theta \sim
    \pi(\cdot|A)}\Big[\sum_{t=1}^T \logloss(p_{\theta,t},y_t)\Big]&\leq\max_{\theta\in\Theta}\Big\{\sum_{t=1}^T \logloss(p_{a\theta^{*}+(1-a)\theta,t},y_t)\Big\} \\
    &\leq\sum_{t=1}^T \left(\logloss(p_{\theta^{*},t},y_t)+4(1-a)B\NORM{x_{t}}_{*}\right), 
\end{align*}
where in the second inequality we have used the $2$-Lipschitzness of the logistic loss with respect to the $L_{\infty}$-norm (\cite{foster2018logistic}, Lemma 1) and their observation that $\NORM{\left(a\theta^{*}+(1-a)\theta - \theta^{*}\right)x_{t}}_{\infty}=(1-a)\max_{k\in[K]}|\IP{\theta_{k}-\theta^{*}_{k}}{x_{t}}|\leq 2(1-a)U\NORM{x_{t}}_{*}$ for all $\theta\in\Theta$. It follows that
\begin{equation*}
\sum_{t=1}^{T} \logloss(p_t,y_t) \leq \sum_{t=1}^T \left(\logloss(p_{\theta^{*},t},y_t)+4(1-a)U\NORM{x_{t}}_{*}\right)+d_{\Theta}\ln\frac{1}{1-a}.
\end{equation*}
Setting $1-a:=1\wedge\frac{d_{\Theta}}{U\sum_{t=1}^{T}\NORM{x_{t}}_{*}}$ and bounding appropriately completes the proof.
\end{proof}

\interproofspace

\begin{proof}{\bfseries of Theorem~\ref{thm:lower_bound_logreg} \; }
  We first lower bound the minimax regret by restricting the maximum
  over $\theta$ to $\Theta' = \{\big(\begin{smallmatrix} \theta' \\ 0
  \end{smallmatrix}\big) : \|\theta'\|_\infty \leq U/\sqrt{d}\} \subset
  \Theta$. To construct a hard data sequence $\sample$, we then set $x_t
  = X e_{(t \;\textnormal{mod}\; d) + 1}$, which reduces the learning
  task to $d$ independent one-dimensional learning tasks with $n \geq
  \tfrac{T}{d}-1$ learning rounds each. Consequently, our lower bound
  will be $d$ times a lower bound that holds for each of the
  one-dimensional tasks.
  
  So consider a one-dimensional binary logistic regression task with
  $|\theta| \leq U/\sqrt{d}$ and $x_t = X$ for all $t = 1,\ldots,n$.
  This is equivalent to log loss prediction of $z_t = (y_t + 1)/2 \in
  \{0,1\}$ with respect to the Bernoulli distributions
  $\bernoullidist_\mu$ with means restricted to $\mu \in [a,b]$ where
  \begin{align*}
    a &= \frac{1}{1 + e^{X U/\sqrt{d}}},
    &
    b &= \frac{1}{1 + e^{-XU/\sqrt{d}}}.
  \end{align*}
  The minimax regret for the case $a=0,b=1$ is well known (see
  \citep{XieBarron2000} and references therein). We can handle general
  $a,b$ by adapting the proof of \citet{OrdentlichCover1998}. Like in
  the proof of Theorem~\ref{thm:normallocation}, the minimax regret
  equals the stochastic complexity \citep{grunwald2007minimum}:
  \begin{align*}
    \ln \sum_{z^n \in \{0,1\}^{n}} \max_{\mu \in [a,b]}
    \bernoullidist_\mu(z^n)
    &= \ln \sum_{k=0}^{n} \binom{n}{k} \max_{\mu \in [a,b]} \mu^k
    (1-\mu)^{n-k}\\
    &\geq \ln \sum_{k=\ceil{an}}^{\floor{bn}} \binom{n}{k}
    \Big(\frac{k}{n}\Big)^k
    \Big(\frac{n-k}{n}\Big)^{n-k}.
  \end{align*}
  As shown by \citet[Proof of Lemma~2]{OrdentlichCover1998}, the
  terms in this sum are at least
  \[
    \binom{n}{k} \Big(\frac{k}{n}\Big)^k \Big(\frac{n-k}{n}\Big)^{n-k}
      \geq \frac{1}{\sqrt{\pi (n+1)/2}}
      \qquad \text{for all $k$.}
  \]
  Hence
  \begin{align*}
    \sum_{k=\ceil{an}}^{\floor{bn}} \binom{n}{k}
    \Big(\frac{k}{n}\Big)^k
    \Big(\frac{n-k}{n}\Big)^{n-k}
    &\geq
    \frac{\floor{bn} - \ceil{an}}{\sqrt{\pi (n+1)/2}}
    \geq \frac{(b-a)n - 2}{\sqrt{\pi (n+1)/2}}
    \geq \frac{(b-a)n - 2}{\sqrt{\pi n}}\\
    &= \frac{(b-a)\sqrt{n}}{\sqrt{\pi}} - \frac{2}{\sqrt{\pi n}}.
  \end{align*}
  It remains to bound
  \[
    b-a 
        = \tanh\Big(\frac{UX}{2\sqrt{d}}\Big)
        \geq \left(\frac{UX}{4\sqrt{d}}\right),
  \]
  where the inequality follows from $\tanh(x) \geq x/2$ for $x \in
  [0,1]$, which applies because $UX \leq 2 \sqrt{d}$ by assumption. The
  proof is completed by combining all previous steps.
\end{proof}

\interproofspace

\begin{proof}{\bfseries of Theorem~\ref{adaptivelogisticregression} \; }
  For $t < t^*$ the learner can play $p_t(y) = 1/K$, which incurs $0$
  instantaneous regret, because $\|x_t\| = 0$ implies that
  $p_{\theta,t}(y)=1/K$ for all $\theta$. Then, from $t \geq t^*$, the
  learner aggregates multiple copies of the algorithm from
  Theorem~\ref{thm:logisticupper} for $U=\alpha$, specialized to the case that
  $\Theta_U = \{\theta \in \reals^{K \times d} : \|\theta\| \leq U\}$
  and with $X = X_T$ (which is possible because the Bayesian
  algorithm described there does not depend on $X$).
  Lemma~\ref{lem:aggregate} is applied with $\alpha = \|\theta\|$, $\alphamin = \epsilon
  dK/(X_{t^*}T)$ and $\alphamax = \infty$. All together this gives the
  bound
  \begin{align*}
    R_T(\theta)
      &\leq 5 dK \ln \Big(\frac{(2\|\theta\|\bmin
              \frac{\epsilon dK}{X_{t^*} T}) X_T (T-t^*+1)}{dK} + e\Big)
        +
        2\ln \Big(\log_2\Big(\frac{8\|\theta\|X_{t^*}T}{\epsilon dK} \bmax
        4\Big)\Big)\\
      &\leq 5 dK \ln \Big(\frac{2\|\theta\| X_T T}{dK}
              + \frac{\epsilon  X_T}{X_{t^*}} + e\Big)
        +
        2\ln \Big(\log_2\Big(\frac{8\|\theta\|X_{t^*}T}{\epsilon dK} \bmax
        4\Big)\Big),
  \end{align*}
  as required. The algorithm is scale-free, because $x_t \mapsto \lambda
  x_t$ for all $t$ implies that $\alphamin \mapsto \alphamin/\lambda$,
  which is equivalent to calling Lemma~\ref{lem:aggregate} with
  $\alphamin$ unchanged but $U = \alpha/\lambda$, leading the algorithm
  from Theorem~\ref{thm:logisticupper} to produce the same predictions,
  and as a consequence $p_t$ is also unchanged.
\end{proof}

\interproofspace

\begin{proof}{\bfseries of Theorem~\ref{thm:logistic_eff_ad_sf} \; }
  As in the proof of Theorem~\ref{thm:logisticupper_efficient} we can
  predict with the uniform distribution $p_t$ for all $t < t^*$ without
  incurring instantaneous regret, so assume without loss of generality
  that $t^* = 1$.

  We then start by using a doubling trick to make the algorithm from
  Theorem~\ref{thm:logisticupper_efficient} adapt to $X$: starting from
  $X = \Xmin := \ln(K)/U$ we restart the algorithm with new value $2 X$
  any time that $\|x_t\|_2 > X$. (NB.\ If $\|x_t\|_2 > 2X$ at the time
  of a restart, we interpret it as immediately triggering more restarts
  until $X \geq \|x_t\|_2$.) Since
  \begin{align*}
    \sum_{i=0}^{\ceil{\log_2(X_T/\Xmin}) \bmax 0} \Xmin 2^i
      \leq \Xmin 2^{(\ceil{\log_2(X_T/\Xmin)} + 1) \bmax 0}
      \leq 4 X_T \bmax \Xmin
      = 4 X_T \bmax \frac{2 \ln(K)}{U},
  \end{align*}
  this leads to a regret bound of the same order as in
  Theorem~\ref{thm:logisticupper_efficient}:
  \begin{align*}
    R_T(\theta)
      &= O\Big(\sum_{i=0}^{\ceil{\log_2(X_T/\Xmin} \bmax 0}
      \big(U\Xmin 2^i +
      \ln K\big)dK\ln T\Big)\\
      &= O\Big(\big(UX_T \bmax \ln K\big)dK\ln T\Big)
      \qquad \text{for all $\theta$ such that $\|\theta\|_{2,\infty} \leq U$,}
  \end{align*}
  and runs in time $O(d^2 K^3 + (U X_t \bmax \ln K) K^2 \ln(t(1+UX_t
  \bmax \ln K)))$ in round $t$.

  Let $A(U)$ denote this algorithm. We will aggregate multiple copies of
  $A(U)$ using Lemma~\ref{lem:aggregate} with $U = \alpha =
  \|\theta\|_{2,\infty}$, $\alphamax = \frac{T^\beta}{X_{t^*}} \geq
  \frac{T^\beta}{X_T}$, $\Theta_\alpha = \{\theta \in \reals^{K \times
  d} : \|\theta\|_{2,\infty} \leq \alpha\}$, and $\alphamin =
  \frac{T^{-\gamma}}{X_{t^*}} \geq \frac{T^{-\gamma}}{X_T}$ for $\gamma
  \geq 0$ to be chosen below. We further observe that algorithms with $U
  > 2T^\beta/X_t \geq T^\beta/X_T$ will never be useful because of the
  restriction to $\|\theta\|_{2,\infty} X_T \leq T^\beta$ in
  \eqref{eqn:logistic_efficient_adaptive}, so as soon as $X_t$ becomes
  large enough for this to happen, we stop expending computation on
  $A(U)$. This can be implemented either by treating $A(U)$ as a
  sleeping expert in the sense of
  \citet{FreundEtAl1997,AdamskiyKoolenChernovVovk2016} or by simply
  setting the algorithm's predictions to the uniform distribution for
  all remaining rounds. All together, this aggregation procedure
  guarantees regret at most
  \begin{align*}
    R_T(\theta)
        = O\Big(\big(\|\theta\|_{2,\infty}X_T +
        \frac{T^{-\gamma} X_T}{X_{t^*}} + \ln K\big)dK\ln T\Big)
          + 2\ln\Big(\log_2\Big(8\|\theta\|_{2,\infty}X_T T^\gamma \bmax
          4\Big)\Big)
  \end{align*}
  for all $\theta$ such that $\|\theta\|_{2,\infty}X_T \leq T^\beta$,
  which establishes \eqref{eqn:logistic_efficient_adaptive} for $\gamma
  = \frac{c T^\beta}{d^2 K}$.

  Let us proceed to analyse the run-time. To this end, define $U_i =
  \alphamin 2^i$ and recall that algorithm $U_i$ is still running in
  round $t$ only if $U_i \leq T^\beta/X_t$. The run-time in round $t$
  therefore comes to
  \begin{align*}
    O\Big( &
      \sum_{i=0}^{\ceil{\log_2(\alphamax/\alphamin)}}
      \ind{U_i \leq T^\beta/X_t}
      \Big\{
        d^2 K^3 + (U_i X_t \bmax \ln K) K^2 \ln(t(1+U_i X_t \bmax \ln K))
      \Big\}
    \Big)
      \\
      &= 
    O\Big(
      \sum_{i=0}^{\ceil{\log_2(\alphamax/\alphamin)}}
      \Big\{
        \ind{U_i \leq T^\beta/X_t}
        U_i X_t K^2 \ln(T(1+U_i X_t))
      \Big\}\\
      &\qquad +
        \log_2\Big(\frac{\alphamax}{\alphamin}\Big)
      \Big\{
        d^2 K^3 + \ln(K) K^2 \ln(T(1+\ln K))
      \Big\}
    \Big)
      \\
    &=
    O\Big(
      \sum_{i=0}^{\argmin_j \{U_j \leq T^\beta/X_t\}}
      \Big\{
        U_i X_t K^2 \ln(T(1+T^\beta))
      \Big\}
      +
        (\beta + \gamma)
      \Big\{
        d^2 K^3 + \ln(K) K^2 \ln(T(1+\ln K))
      \Big\}
    \Big)
      \\
    &=
    O\Big(
        T^\beta K^2 \ln(T)
      +
        (\beta + \gamma)
      \Big\{
        d^2 K^3 + \ln(K) K^2 \ln(T(1+\ln K))
      \Big\}
    \Big).
  \end{align*}
  We aim to choose $\gamma$ (nearly) as large as possible to ensure that
  \[
        (\beta + \gamma)
      \Big\{
        d^2 K^3 + \ln(K) K^2 \ln(T(1+\ln K))
      \Big\} =
        O\Big(d^2 K^3 + c
        T^\beta K^2 \ln(T)\Big),
  \]
  so that the run-time per round is $O(d^2 K^3 + (1+c) T^\beta K^2 \ln(T))$. The choice
  $\gamma = \frac{c T^\beta}{d^2 K}$ indicated above satisfies
  this requirement.

  Finally, it remains to establish that the algorithm is scale-free. To
  see this, note that if we multiply all $x_t$ by some $\lambda > 0$,
  then $A(U/\lambda)$ makes the same predictions as $A(U)$ does without
  multiplication. This type of compensation is built into the
  aggregation procedure, because the definitions of $\alphamin$ and
  $\alphamax$ scale inversely with $\lambda$.
\end{proof}

\subsubsection{Besov Classes}\label{app:besov}

Let $\domX \subset \reals^d$ is compact, let $\Theta$ be the Besov space
$B_{p,q}^s(\domX)$ and identify $h_\theta \equiv \theta$ for $\theta \in
\Theta$. Suppose $\Theta_U = \{\theta \in \Theta :
\|\theta\|_{B_{p,q}^s} \leq U$ is a ball of radius $U$, where
$\|\theta\|_{B_{p,q}^s}$ is the corresponding Besov-norm.
\citet[Example~2]{foster2018logistic} show, non-constructively, that the
minimax regret is bounded by
\[
  \min_{\mathbf{Algs}} \max_\sample \max_{\theta \in \Theta_U} R_T(\theta)
    = \tilde{O}(U^\beta T^\gamma),
\]
where
\begin{enumerate}
\item If $s \geq d/2$, then $\beta = 2d/(d+2s),\gamma = d/(d+2s)$;
\item If $s < d/2$, then $\beta = 1$ and $\gamma$ depends on $p$: if
$p > 1+d/(2s)$, then $\gamma = 1-s/d$; otherwise $\gamma = 1-1/p$.
\end{enumerate}
We see that in all cases the rate depends heavily on $U$. Adaptation to
$U$ using Lemma~\ref{lem:aggregate} with $\alpha = U^\beta$, $\alphamin
= T^{-\gamma}$ and $\alphamax = \infty$ gives
\begin{theorem}\label{thm:logistic_Besov}
  Consider the Besov space setup described above for any fixed $p,q$ and
  $s$. Then there exists a learning algorithm with respect to the entire
  Besov space $B_{p,q}^s(\domX)$ that guarantees
  \[
    R_T(\theta)
      = \tilde{O}\Big( \|\theta\|_{B_{p,q}^s}^\beta T^\gamma
        + \ln(\log_2(\|\theta\|_{B_{p,q}^s}T^\gamma \bmax 1)\Big)
    \qquad \text{for all $\theta \in B_{p,q}^s(\domX)$.}
  \]
\end{theorem}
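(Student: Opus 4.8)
The plan is to obtain this purely as an instantiation of the aggregation machinery of Lemma~\ref{lem:aggregate}, exactly as foreshadowed in the paragraph preceding the statement. The base strategy $A(\alpha)$ will be the minimax-optimal algorithm guaranteed by \citet[Example~2]{foster2018logistic} for the norm ball $\Theta_U = \{\theta \in B_{p,q}^s(\domX) : \|\theta\|_{B_{p,q}^s} \leq U\}$, reparameterised through $\alpha = U^\beta$ so that $\Theta_\alpha = \{\theta : \|\theta\|_{B_{p,q}^s} \leq \alpha^{1/\beta}\}$. In this parameterisation \citeauthor{foster2018logistic}'s minimax bound reads $B_T(\theta,\alpha) = \tilde{O}(U^\beta T^\gamma) = \tilde{O}(\alpha T^\gamma)$, uniformly over $\theta \in \Theta_\alpha$, which is precisely the shape of regret guarantee that Lemma~\ref{lem:aggregate} consumes.

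First I would check the two hypotheses of Lemma~\ref{lem:aggregate}. The nesting $\Theta_\alpha \subseteq \Theta_{\alpha'}$ for $\alpha \leq \alpha'$ is immediate, since the ball radius $\alpha^{1/\beta}$ is increasing in $\alpha$. The existence of an algorithm $A(\alpha)$ attaining $B_T(\theta,\alpha)$ follows directly from the meaning of minimax regret: a bound on $\min_{\mathbf{Algs}} \max_\sample \max_{\theta \in \Theta_U} R_T(\theta)$ is exactly the assertion that some algorithm achieves regret $\tilde{O}(\alpha T^\gamma)$ simultaneously for all $\theta \in \Theta_\alpha$ and all data sequences. The non-constructiveness of \citeauthor{foster2018logistic}'s guarantee is therefore harmless here: it simply propagates to a non-constructive but well-defined aggregated strategy, which is all that ``there exists a learning algorithm'' requires (efficiency is not claimed). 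The use of $\alphamax = \infty$, giving a countable exponential grid, follows the same pattern already used in Theorems~\ref{adaptiveNormallocation} and~\ref{adaptivelogisticregression}.

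Next I would apply Lemma~\ref{lem:aggregate} with $\alphamin = T^{-\gamma}$, $\alphamax = \infty$, and $\alpha = \|\theta\|_{B_{p,q}^s}^\beta$ (the smallest value for which $\theta \in \Theta_\alpha$). Because $B_T(\theta,\cdot)$ is increasing, the maximum over $\alpha' \in [\alpha, 2\alpha \bmax \alphamin]$ is attained at the upper endpoint and equals $\tilde{O}((2\|\theta\|_{B_{p,q}^s}^\beta \bmax T^{-\gamma}) T^\gamma) = \tilde{O}(\|\theta\|_{B_{p,q}^s}^\beta T^\gamma + 1)$; the choice $\alphamin = T^{-\gamma}$ is precisely what keeps this floor contribution at $\tilde{O}(1)$ when $\theta$ has small norm. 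The additive overhead from the lemma is $2\ln\log_2(\tfrac{8\alpha}{\alphamin} \bmax 4) = 2\ln\log_2(8\|\theta\|_{B_{p,q}^s}^\beta T^\gamma \bmax 4)$, which supplies the second term of the claimed bound. Collecting the two contributions yields the stated inequality.

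The closest thing to an obstacle is cosmetic rather than mathematical: the double logarithm in the statement is written with $\|\theta\|_{B_{p,q}^s}$ rather than $\|\theta\|_{B_{p,q}^s}^\beta$. Since $\beta$ is a fixed constant, $\log_2(\|\theta\|_{B_{p,q}^s}^\beta T^\gamma) = \beta \log_2 \|\theta\|_{B_{p,q}^s} + \gamma \log_2 T$ differs from $\log_2(\|\theta\|_{B_{p,q}^s} T^\gamma)$ only by a bounded multiplicative factor, which is absorbed into the outer $\ln(\cdot)$ as an additive constant and hence into the $\tilde{O}$. The only remaining subtlety is bookkeeping: $(\beta,\gamma)$ take different values in the three regimes ($s \geq d/2$; $s < d/2$ with $p > 1 + d/(2s)$; $s < d/2$ otherwise), but the argument above is verbatim identical in each, and the aggregated algorithm may depend on the known $(p,q,s)$ to fix these exponents and to set $\alphamin = T^{-\gamma}$.
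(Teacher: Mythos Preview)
Your proposal is correct and follows exactly the paper's approach: the paper simply states that the result is obtained by applying Lemma~\ref{lem:aggregate} with $\alpha = U^\beta$, $\alphamin = T^{-\gamma}$ and $\alphamax = \infty$, which is precisely the instantiation you work through. Your write-up is considerably more detailed than the paper's one-line justification (including the hypothesis checks, the handling of the floor term $\alphamin T^\gamma = 1$, and the cosmetic reconciliation of $\|\theta\|^\beta$ versus $\|\theta\|$ inside the double logarithm), but the underlying argument is identical.
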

This adaptive upper bound matches the non-adaptive bound.

\section{From Log Loss to General Mixable Losses}\label{app:logtomix}
For general loss functions, the Bayesian prediction strategy from the previous 
section generalises to the Exponential Weights (EW) algorithm, which produces 
distributions $\pi_t$ that generalize the posterior distribution from \eqref{eqn:logloss_substitution} to
\[
  \der \pi_t(\theta) =
    \frac{e^{-\eta_t \sum_{s=1}^{t-1} \loss(f_\theta(x_t),y_t)} \der
    \pi(\theta)}
         {\int e^{-\eta_t \sum_{s=1}^{t-1} \loss(f_{\theta'}(x_t),y_t)} \der
    \pi(\theta')}.
\]
These depend not just on a prior $\pi$, but also on (possibly
time-varying) learning rates $\eta_t > 0$. The log loss case is
recovered for $\eta_t = 1$. Generalizing
\eqref{eqn:logloss_substitution} for the log loss, we need a way to map the
distributions $\pi_t$ over $\Theta$ to actual predictions $a_t$. To this end, let
$
  P_t = f_\theta(x_t)_\# \pi_t
$
be the distribution over predictions in $\actions$ induced by
$f_\theta(x_t)$ when the parameters $\theta$ are distributed according
to $\pi_t$ (i.e.\ the pushforward of $\pi_t$ for the map $\theta \mapsto
f_\theta(x_t)$). 
Then the predictions are determined by a
\emph{substitution function} $\zeta_t$ which maps distributions on
$\actions$ to a single action:
$
  a_t = \zeta_t(P_t).
$
In case of the log loss, actions are densities and $\zeta_t(P) = \E_P[a]$
is simply the mean.

For so-called mixable loss functions $\ell$, there exists a direct
generalization of Lemma~\ref{lem:logloss}.
For $\eta > 0$, a loss function $\loss$ is
said to be \emph{$\eta$-mixable} with respect to $(\actions, \domY)$
\citep{Vovk01CompetitiveOS} if there exists a substitution
function $\zeta$ that maps any probability distribution $P$ over
$\actions$ to a single prediction $a_P = \zeta(P) \in \actions$
satisfying
\[
  \loss(a_P,y) \leq -\tfrac{1}{\eta} \ln \E_{a \sim P}\big[e^{-\eta
  \loss(a,y)}\big]
  \qquad \text{for all $y \in \domY$}.
\]
For the log loss, $1$-mixability (trivially) holds with equality when
$\zeta$ is the mean. In general, we will also cover the case that we
have prior knowledge that $y_t \in \domY_t \subseteq \domY$.
Lemma~\ref{lem:logloss} then generalizes to:
\begin{lemma}\label{lem:mixable_losses} For $t=1,\ldots,T$, suppose the
loss $\ell$ is $\eta_t$-mixable with respect to $(\actions,\domY_t)$
with $\domY_t \subseteq \domY$ for substitution function $\zeta_t$. Then
the exponential weights algorithm with non-increasing learning rates
$\eta_1 \geq \cdots \geq \eta_T > 0$ and substitution functions
$\zeta_1,\ldots,\zeta_T$ achieves
  \begin{equation}
    \sum_{t=1}^{T} \loss(a_t,y_t) \leq \E_{\theta \sim
    \gamma}\Bigg[\sum_{t=1}^T \loss(f_\theta(x_t),y_t)\Bigg]
      +\frac{\KL(\gamma\| \pi)}{\eta_T}
    \qquad
    \text{for all $\gamma$ such that $\KL(\gamma,\pi_t) < \infty$,}
  \end{equation}
  provided that the prior knowledge that $y_t \in \domY_t$ is correct..
\end{lemma}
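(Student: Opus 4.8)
The plan is to run the standard mix-loss analysis of the aggregating algorithm, adapted to non-increasing learning rates. First I would abbreviate $\loss_s(\theta) := \loss(f_\theta(x_s),y_s)$ and introduce the cumulative log-partition function
\[
  M_t(\eta) := -\tfrac{1}{\eta}\ln \E_{\theta\sim\pi}\Big[e^{-\eta\sum_{s=1}^t \loss_s(\theta)}\Big],
  \qquad M_0(\eta) := 0.
\]
The opening step is to reduce the learner's loss to a mix loss: applying $\eta_t$-mixability to the pushforward $P_t = f_\theta(x_t)_\#\pi_t$ with substitution function $\zeta_t$, and using that $y_t\in\domY_t$ (so mixability is in force at round $t$), gives
\[
  \loss(a_t,y_t)
    \leq -\tfrac{1}{\eta_t}\ln \E_{a\sim P_t}\big[e^{-\eta_t\loss(a,y_t)}\big]
    = -\tfrac{1}{\eta_t}\ln \E_{\theta\sim\pi_t}\big[e^{-\eta_t\loss_t(\theta)}\big].
\]
Since $\pi_t$ re-weights all past losses at the \emph{current} rate $\eta_t$, unwinding its definition collapses the right-hand side into the single difference $M_t(\eta_t)-M_{t-1}(\eta_t)$.

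Summing over $t$ and reindexing, and using $M_0(\eta_1)=0$, I would obtain
\[
  \sum_{t=1}^T \loss(a_t,y_t)
    \leq \sum_{t=1}^T\big(M_t(\eta_t)-M_{t-1}(\eta_t)\big)
    = M_T(\eta_T) + \sum_{t=1}^{T-1}\big(M_t(\eta_t)-M_t(\eta_{t+1})\big).
\]
The second step is to discard the correction terms. I would show that $\eta\mapsto M_t(\eta)$ is non-increasing, so that the hypothesis $\eta_t\geq\eta_{t+1}$ forces each summand $M_t(\eta_t)-M_t(\eta_{t+1})\leq0$. Monotonicity is a one-line consequence of Jensen's inequality: for $\eta\leq\eta'$ the map $x\mapsto x^{\eta/\eta'}$ is concave, whence $\E[e^{-\eta L}]\leq(\E[e^{-\eta'L}])^{\eta/\eta'}$ and therefore $M_t(\eta)\geq M_t(\eta')$. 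This leaves $\sum_{t=1}^T\loss(a_t,y_t)\leq M_T(\eta_T)$.

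Finally, exactly as in the proof of Lemma~\ref{lem:logloss}, I would recognise $M_T(\eta_T)$ as a scaled negative log-Laplace transform and invoke the Donsker--Varadhan variational formula (the convex conjugate of the KL divergence) to get
\[
  M_T(\eta_T)
    \leq \E_{\theta\sim\gamma}\Big[\sum_{t=1}^T\loss_t(\theta)\Big]
      + \frac{\KL(\gamma\|\pi)}{\eta_T}
\]
for every $\gamma$ with $\KL(\gamma\|\pi)<\infty$, which is the claimed bound. The main obstacle I expect is the bookkeeping around the time-varying rate: the telescoping only yields the clean difference $M_t(\eta_t)-M_{t-1}(\eta_t)$ because $\pi_t$ is defined with the single rate $\eta_t$ rather than the historical rates $\eta_s$, and it is exactly the monotonicity of $M_t$ that lets the non-increasing schedule absorb the resulting mismatch. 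A secondary point to check carefully is the change of variables $\E_{a\sim P_t}[\,\cdot\,]=\E_{\theta\sim\pi_t}[\,\cdot\,]$ under $a=f_\theta(x_t)$ in the mixability step.
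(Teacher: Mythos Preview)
Your proof is correct and follows essentially the same route as the paper. The paper invokes Lemma~1 of \citet{van-der-hoeven2018the-many} as a black box and then uses mixability to make the mix-loss correction terms non-positive; your telescoping identity together with the monotonicity of $\eta\mapsto M_t(\eta)$ is exactly the content of that cited lemma, so you have simply unpacked the citation into a self-contained argument.
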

Note that Lemma~\ref{lem:mixable_losses} specialises to any countable set of experts or continuously parameterised set of static experts.
\begin{proof}
	The proof is a straightforward specialisation of Lemma 1 from \cite{van-der-hoeven2018the-many}:
	\begin{lemma*}
		The FTRL version of EW with prior $\pi(\theta)$ generates a sequence of distributions $P_t$ over $\theta$ that satisfies
		\begin{equation}\label{eq:exp_weights_bound}
			\sum_{t=1}^T g_t(\theta_t) - \E_{\theta \sim \gamma}\Bigg[ \sum_{t=1}^T g_t(\theta)\Bigg]
			\leq \sum_{t=1}^{T}  \left( g_{t}(\theta_{t})+\frac{\mathrm{log}\mathbb{E}_{\theta \sim P_{t}}\left[\exp{(-\eta_tg_{t}(\theta)) }\right]}{\eta_t}
			\right)
			 + \frac{\mathrm{KL(\gamma\lVert \pi)}}{\eta_T} \, .
		\end{equation}
	\end{lemma*}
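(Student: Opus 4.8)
The plan is to instantiate the quoted FTRL-version-of-EW bound \eqref{eq:exp_weights_bound} with the per-round functions $g_t(\theta) = \loss(f_\theta(x_t), y_t)$, so that the distributions $P_t$ it produces are precisely the exponential-weights posteriors $\pi_t$ defined at the start of the section. The structural feature to exploit is that the term $\sum_{t=1}^T g_t(\theta_t)$ appears identically on both sides of \eqref{eq:exp_weights_bound}: subtracting it cancels the pointwise term entirely, which is what lets the argument avoid ever committing to a single point in $\Theta$. After this cancellation and a trivial rearrangement, \eqref{eq:exp_weights_bound} becomes
\[
  -\sum_{t=1}^T \frac{1}{\eta_t} \ln \E_{\theta \sim \pi_t}\big[e^{-\eta_t \loss(f_\theta(x_t), y_t)}\big]
    \leq \E_{\theta \sim \gamma}\Big[\sum_{t=1}^T \loss(f_\theta(x_t), y_t)\Big] + \frac{\KL(\gamma \| \pi)}{\eta_T},
\]
so it remains only to lower bound the left-hand side by the learner's cumulative loss $\sum_{t=1}^T \loss(a_t, y_t)$.

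This is exactly where $\eta_t$-mixability enters, applied one round at a time. First I would rewrite the inner expectation as a push-forward: since $P_t = f_\theta(x_t)_\# \pi_t$ is by definition the law of $f_\theta(x_t)$ when $\theta \sim \pi_t$, a change of variables gives $\E_{a \sim P_t}[e^{-\eta_t \loss(a, y_t)}] = \E_{\theta \sim \pi_t}[e^{-\eta_t \loss(f_\theta(x_t), y_t)}]$. Because the prediction is $a_t = \zeta_t(P_t)$ and the loss is $\eta_t$-mixable with respect to $(\actions, \domY_t)$ with substitution function $\zeta_t$, the mixability inequality yields
\[
  \loss(a_t, y_t) = \loss(\zeta_t(P_t), y_t)
    \leq -\frac{1}{\eta_t} \ln \E_{a \sim P_t}\big[e^{-\eta_t \loss(a, y_t)}\big]
    = -\frac{1}{\eta_t} \ln \E_{\theta \sim \pi_t}\big[e^{-\eta_t \loss(f_\theta(x_t), y_t)}\big].
\]
The mixability inequality is only guaranteed for $y \in \domY_t$, which is precisely the content of the hypothesis that the prior knowledge $y_t \in \domY_t$ is correct; this is the single place that assumption is used. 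Summing this per-round bound over $t = 1, \ldots, T$ and chaining with the displayed consequence of \eqref{eq:exp_weights_bound} gives the claimed regret bound.

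I expect the only genuine subtlety to be bookkeeping rather than mathematics. The symbol $P_t$ is overloaded — it denotes a distribution over $\Theta$ in the quoted lemma but a distribution over $\actions$ in the mixability definition — so care is needed to keep the push-forward identity straight and to verify the measurability of $(\theta, y) \mapsto \loss(f_\theta(x_t), y)$ that makes the change of variables legitimate. A secondary point is that the factor $1/\eta_T$ (rather than a sum of the $1/\eta_t$) on the comparator term is already supplied by \eqref{eq:exp_weights_bound}, where the non-increasing assumption $\eta_1 \geq \cdots \geq \eta_T$ has been absorbed; I would simply quote this and not re-derive it.
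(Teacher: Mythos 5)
There is a genuine gap: you have proved the wrong statement. The statement to establish is the FTRL--EW mix-loss bound \eqref{eq:exp_weights_bound} itself, but your argument begins by ``instantiating the quoted bound'' and ends by declaring you would ``simply quote this and not re-derive it'' --- that is circular. What you have actually written out is the derivation of Lemma~\ref{lem:mixable_losses} \emph{from} \eqref{eq:exp_weights_bound} (push-forward identity, per-round mixability, summation), which coincides almost verbatim with the paper's proof of that downstream lemma; the paper, for its part, does not prove \eqref{eq:exp_weights_bound} in-house either, but cites Lemma~1 of \citet{van-der-hoeven2018the-many}. Since the task was to prove \eqref{eq:exp_weights_bound}, neither mixability, nor the substitution function, nor the hypothesis $y_t \in \domY_t$ is relevant: after cancelling $\sum_t g_t(\theta_t)$ from both sides (which you correctly note), the claim is a pure statement about the cumulative mix loss of exponential weights with non-increasing learning rates, and none of your steps addresses it.

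The missing argument runs as follows. Write $L_t(\theta) = \sum_{s=1}^t g_s(\theta)$ and define the potential $\Phi_t(\eta) = -\frac{1}{\eta}\ln \int e^{-\eta L_t(\theta)}\intder\pi(\theta)$, with $\Phi_0 \equiv 0$. Since the FTRL--EW distribution satisfies $\der P_t \propto e^{-\eta_t L_{t-1}(\theta)}\intder \pi(\theta)$, the round-$t$ mix loss telescopes at fixed learning rate:
\[
  -\frac{1}{\eta_t}\ln \E_{\theta\sim P_t}\big[e^{-\eta_t g_t(\theta)}\big]
  = \Phi_t(\eta_t) - \Phi_{t-1}(\eta_t).
\]
By Jensen's inequality the map $\eta \mapsto \frac{1}{\eta}\ln \E_{\theta\sim\pi}\big[e^{\eta Y(\theta)}\big]$ is non-decreasing, so $\Phi_{t-1}(\eta)$ is non-increasing in $\eta$; with $\eta_t \leq \eta_{t-1}$ this gives $\Phi_{t-1}(\eta_t) \geq \Phi_{t-1}(\eta_{t-1})$, whence the round-$t$ mix loss is at most $\Phi_t(\eta_t) - \Phi_{t-1}(\eta_{t-1})$. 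Summing over $t$ telescopes to $\Phi_T(\eta_T)$, and the Donsker--Varadhan lemma (exactly as in the proof of Lemma~\ref{lem:logloss}) bounds $\Phi_T(\eta_T) \leq \E_{\theta\sim\gamma}\big[L_T(\theta)\big] + \KL(\gamma\|\pi)/\eta_T$; adding $\sum_t g_t(\theta_t)$ to both sides yields \eqref{eq:exp_weights_bound}. This monotonicity step is precisely where the hypothesis $\eta_1 \geq \cdots \geq \eta_T$ enters and why only $1/\eta_T$ multiplies the KL term --- the ``secondary point'' you proposed to absorb by citation is in fact the entire content of the lemma.
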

	We specify the result to $g_{t} : \theta \mapsto \ell(f_{\theta}(x_t),y)$. Then, by definition of $a_t$, and by the mixability property
	\begin{equation*}
		\ell(a_t, y_t)  + \frac{\log \E_{\theta \sim P_t} [ \exp( - \eta_t \ell(f_{\theta}((x_t)), y_t ))]}{\eta_t} \leq 0\, . 
	\end{equation*}
	Summing over $t$ and substituting in \eqref{eq:exp_weights_bound} (replacing the sum over $g_t(\theta_t)$ by a sum over $\ell(a_t, y_t)$) yields the claimed result.
\end{proof}

\section{Proofs and Additions to Sections \ref{sec:slls}}

\subsection{Proof of the Aggregation Lemma}

\begin{proof}{\bfseries of Lemma \ref{lem:agg_sq_loss} \; }
	We apply the procedure described above. Let $a_{\alpha,t}$ denote the prediction of $A(\alpha)$. Then, given any $\alpha \in (0, \alphamax)$, let $\alpha^*$ be the smallest value in the grid exceeding $\alpha$, such that
  $\alpha \leq
	\alpha^* \leq 2\alpha \bmax \alphamin
  $, and let $m^* = \log_2 \alpha^*$. 
	\begin{equation*}
		\sum_{t=1}^T \ell(a_t, y_t) - \ell(a_\theta(x_t), y_t)
		=  \sum_{t=1}^T \ell(a_t, y_t) - \ell(a_{\alpha^\star, t}, y_t) 
		+ \underbrace{\sum_{t=1}^T \ell(a_{\alpha^\star, t}, y_t) - \ell(a_{\theta}(x_t), y_t)}_{\leq B_T(\theta, \alpha^\star)}  \, .
	\end{equation*}
	Since $y_t \in \ball(0, Y_t)$, by the Pythagorean inequality, the loss $\ell(a_{\alpha^\star, t}, y_t)$ can only reduced by a projection of $a_{\alpha^\star, t}$ on $\ball(0, Y_t)$, so
	\begin{equation*}
		\sum_{t=1}^T \ell(a_t, y_t) - \ell(a_{\alpha^\star, t}, y_t)
		\leq \sum_{t=1}^T \ell(a_t, y_t) - \ell( \Pi_{Y_t}(a_{\alpha^\star, t}), y_t) \, . 
	\end{equation*}
	By clipping the losses (cf. details at the end of the proof), then applying the Pythagorean inequality again, thanks to the fact that
        \begin{equation}\label{eqn:doubleproj}
        \Pi_{Y_{t-1}}(\Pi_{Y_t}(a_{\alpha^\star, t})) = \Pi_{Y_{t-1}}(a_{\alpha^\star, t})
        \end{equation}
        and that $\tilde y_t \in \mathcal B(0, Y_{t-1})$:
	\begin{align*}
		\sum_{t=1}^T \ell(a_t, y_t) - \ell( \Pi_{Y_t}(a_{\alpha^\star, t}), y_t)
		&\leq \sum_{t=1}^T \ell(a_t, \tilde y_t) - \ell( \Pi_{Y_t}(a_{\alpha^\star, t}), \tilde y_t) + Y_T \max_{t \in [T]} \|a_t - \Pi_{Y_t}(a_{\alpha^\star, t}) \| \\
		&\leq \sum_{t=1}^T \ell(a_t, \tilde y_t) - \ell( \Pi_{Y_{t-1}}(a_{\alpha^\star, t}), \tilde y_t) + Y_T \max_{t \in [T]} \|a_t - \Pi_{Y_t}(a_{\alpha^\star, t}) \| \, . \nonumber
	\end{align*}
	We now apply Lemma~\ref{lem:mixable_losses} with $\gamma$ a point mass at $\alpha^\star$, each expert 
	being $\Pi_{Y_{t-1}}(a_{\alpha_j, t})$, and the losses being $\tilde y_t$, which are both in $\ball(0, Y_{t-1})$ to see that
	\begin{equation*}
		\sum_{t=1}^T \ell(a_t, \tilde y_t) - \ell( \Pi_{Y_{t-1}}(a_{\alpha^\star, t}), \tilde y_t)
		\leq \frac{-\log \pi(m^\star)}{\eta_{T}} =  4Y_{T-1}^2 \big( - \log \pi(m^\star) \big)\, .
	\end{equation*}
	Then, since the substitution function we use for the aggregation is the mean, $a_t$ is a convex combination of the $\Pi_{Y_{t-1}}(a_{\alpha_j, t})$'s, so $\|a_t\| \leq Y_{t-1}$, and $\| a_t - \Pi_{Y_t}(a_{\alpha^\star, t})  \| \leq Y_{t-1} + Y_t \leq 2Y_t$.
  Therefore for any $\theta \in \Theta$
	\begin{equation*}
		R_T(\theta) 
    \leq B(T, \theta, \alpha^\star) - 4 Y_{T-1}^2 \log \pi(m^\star)
    + 2Y^2  
		\leq \max_{\alpha' \in [\alpha, 2\alpha \bmax \alphamin]} B_T(\theta,\alpha') - 4Y_{T-1}^2 \log \pi(m^\star)  + 2Y_T^2
	\end{equation*}
	Then observe that
	\begin{equation*}
		 -\ln \pi(m^\star)
		< 2\ln (m^\star+2)
		 \leq  2\ln (\log_2(\alpha^\star) +2)
		\leq  2\ln (\log_2(8(\alpha/\alphamin) \bmax 4))
	\end{equation*}
	to conclude.

  \paragraph{Details on the Clipping} \newcommand{\aclip}{\tilde a_{t, \alpha^\star}}
  Clipping works similarly to the linear case, because regret depends affinely on the data $y_t$. Indeed, by expanding the squares,
  \begin{equation*}
    \|a_t - y\|^2  - \|\aclip - y\|^2 
    = \| a_t- \tilde y_t\|^2 - \|\aclip - \tilde y_t\|^2  - 2 \langle  y_t - \tilde y_t, a_t- \aclip \rangle \, ,
  \end{equation*}
  where we denoted $\aclip = \Pi_{Y_t}(a_{t, \alpha^\star})$ to reduce clutter.
  The linear overhead can be bounded by Cauchy-Schwarz, $|\langle  y_t - \tilde y_t , \, a_t - \aclip \rangle|\leq  \|y_t- \tilde y_t\|\| a_t - \aclip\|$, and 
  \begin{equation*}
    \| y_t - \tilde y_t \|= \|y_t \| \, |1 - Y_{t-1} / Y_t | \leq (Y_t - Y_{t-1}) \, . 
  \end{equation*}
  Therefore, by summing over $t$, upper bounding $\|a_t- \aclip\|$ by its maximum over $t$, simplifying the telescoping sum and dividing by $2$ to recover the square loss, 
  \begin{equation*}
    \sum_{t=1}^T \ell(a_t, y_t) - \ell(\aclip, y_t)
    \leq \sum_{t=1}^T \ell(a_t, y_t) - \ell(\aclip, y_t) 
    + Y_T \max_{t \in [T]}\|a_t - \aclip\|.
  \end{equation*}

  \paragraph{Scale-invariance}
  Scale-freeness with respect to the features is straightforward, as the aggregation procedure does not look at the features. 

  Let us prove the scale-invariance with respect to the data points. If all $y_t$'s are multiplied by a factor $a$, then the actions returned by the experts are multiplied by $a$. The clipping threshold is multiplied by $a$, and thus both the clipped actions and the clipped data points are multiplied by $a$. Therefore the losses fed to the aggregation procedure is multiplied by $a^2$. 
  The learning rate in the aggregation procedure is multiplied by $a^2$. Therefore the mass put on every expert is kept the same. Since the output of every expert was multiplied by $a$, the final action is also multiplied by $a$.
\end{proof}

\subsection{Details for the Square Loss}

\subsubsection{Gradient Descent}\label{app:gradient-descent}

For any $\lambda > 0$, Gradient Descent (GD) tuned with step size $\eta_t = 1 /( \lambda +  t)$ on square losses is equivalent to 
Exponential Weights with learning rate $1/\lambda$ and Gaussian prior $\Sigma = I_d$ with the mean as a substitution function; this was observed by \cite{koolen2016exploiting} and \citet{van-der-hoeven2018the-many}. We recall a slightly modified version of Corollary~6 in the latter reference.
\begin{theorem}
	For prediction with the square loss, for any $\alpha > 0$, gradient descent with step size $\eta_t = 1  / (\lambda + t)$ is scale-free and enjoys the regret bound
	\begin{equation}\label{eq:gd_sq_loss}
		R_T (\theta) 
		\leq \frac{\lambda\|\theta\|^2}{2} + 2 Y_T^2 \log \bigg(1 +  \frac{T}{\lambda} \bigg) \, , \quad \text{for any $\theta \in \R^d$.}
	\end{equation}
	Furthermore, the updates are such that $\| \theta_t\| \leq Y_{t-1}$ for all $t$.
\end{theorem}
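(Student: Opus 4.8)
The plan is to exploit the equivalence, noted by \citet{koolen2016exploiting} and \citet{van-der-hoeven2018the-many}, between gradient descent with step size $\eta_t = 1/(\lambda + t)$ on the square loss and Exponential Weights run with the Gaussian prior $\pi = \normaldist(0, I_d)$, constant learning rate $1/\lambda$, and the mean as substitution function. First I would verify this equivalence directly: the EW posterior after $t-1$ rounds is the Gaussian $\normaldist(\theta_t, s_t^2 I_d)$ with posterior mean $\theta_t = \frac{1}{\lambda + t - 1}\sum_{s=1}^{t-1} y_s$ and variance $s_t^2 = \lambda/(\lambda + t - 1)$, and a one-line induction shows that $\theta_t$ obeys exactly the recursion $\theta_{t+1} = \theta_t - \tfrac{1}{\lambda+t}(\theta_t - y_t)$ of gradient descent. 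This representation immediately yields the auxiliary claim $\NORM{\theta_t} \leq Y_{t-1}$, since $\theta_t$ is a shrunk average of $y_1,\dots,y_{t-1}$; this bound is exactly what the clipping argument in Lemma~\ref{lem:agg_sq_loss} relies on.

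Crucially, I would \emph{not} invoke mixability here, since the square loss is only $1/(4Y^2)$-mixable and the learning rate $1/\lambda$ may be far larger for small $\lambda$. Instead I would apply the general (mixability-free) Exponential Weights bound of \citet{van-der-hoeven2018the-many} quoted in the proof of Lemma~\ref{lem:mixable_losses}, which controls the regret by a sum of per-round ``mixability gaps'' $\ell(\theta_t, y_t) + \tfrac{1}{\eta}\ln \E_{\theta\sim P_t}[e^{-\eta\ell(\theta, y_t)}]$ plus $\lambda\,\KL(\gamma\|\pi)$. Because $P_t$ is Gaussian and $\ell$ is quadratic, each gap is a closed-form Gaussian integral; substituting $s_t^2 = \lambda/(\lambda+t-1)$ I would obtain the clean per-round expression $\frac{\NORM{\theta_t - y_t}^2}{2(\lambda + t)} - \frac{\lambda d}{2}\ln\frac{\lambda + t}{\lambda + t - 1}$. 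Summing, the logarithmic term telescopes to $-\frac{\lambda d}{2}\ln(1 + T/\lambda)$, while the quadratic term is bounded using $\NORM{\theta_t - y_t} \leq \NORM{\theta_t} + \NORM{y_t} \leq Y_{t-1} + Y_t \leq 2Y_T$ together with $\sum_t \frac{1}{\lambda + t} \leq \ln(1 + T/\lambda)$, giving at most $2Y_T^2 \ln(1 + T/\lambda)$.

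The main obstacle is the comparator term: a point mass at $\theta$ has infinite KL against the Gaussian prior, so I would instead compare against the Gaussian $\gamma = \normaldist(\theta, c I_d)$ and choose the variance $c$ carefully. This introduces an excess $\frac{Tdc}{2}$ in the expected comparator loss and a KL contribution $\frac{\lambda}{2}(dc + \NORM{\theta}^2 - d - d\ln c)$; optimising over $c$ gives $c = \lambda/(\lambda + T)$, exactly the final posterior variance, at which the $\frac{Tdc}{2}$, $\frac{\lambda dc}{2}$ and $-\frac{\lambda d}{2}$ terms cancel and the surviving $-\frac{\lambda d}{2}\ln c = \frac{\lambda d}{2}\ln(1 + T/\lambda)$ precisely cancels the telescoped log-determinant from the mixability gaps. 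What remains is exactly $\frac{\lambda \NORM{\theta}^2}{2} + 2Y_T^2 \ln(1 + T/\lambda)$. Finally, scale-freeness is immediate: there are no features, the step sizes are data-independent, and the predictions $\theta_t$ are linear in the $y_s$, so rescaling all $y_t$ by a factor $a$ rescales every prediction by $a$.
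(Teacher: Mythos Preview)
Your proof is correct. The paper does not actually prove this theorem: it merely states it as ``a slightly modified version of Corollary~6'' in \citet{van-der-hoeven2018the-many}, relying on the GD--EW equivalence noted there and by \citet{koolen2016exploiting}. Your argument reconstructs precisely that route---the Gaussian EW posterior, the closed-form mixability gaps, and the Gaussian comparator $\gamma=\normaldist(\theta,cI_d)$ with $c=\lambda/(\lambda+T)$---so it is essentially the same approach, just spelled out in full rather than cited.
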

The analysis can be made tighter so that the bound does not diverge when $\alpha \to \infty$, but we chose the bound simplest to read.

\subsubsection{KAAR}
Let us recall the guarantees for scale-free \algname{KAAR}, from \citet[Theorem~2]{gammerman2004on-line}. We slightly adapt the statement to include the scaling of the regularisation by $X_{t^\star}$.
Note that if $k(x_t, x_t) = 0$, then $k(x_t, \cdot) = 0$ and $\theta(x_t) = 0$ for any $x_t$. Thefore predicting $a_t = 0$ on all the rounds for which $k(x_t, x_t) = 0$ has no impact on the regret. 

The formula for the updates of \algname{KAAR} gives the updates of \algname{KAAR-sf} for all $t\geq t^\star$ and are still given by \eqref{eq:kaar-updates}. The scale-free property derives directly from the updates formula. 
\begin{theorem}
	\algname{KAAR-sf}$(\alpha)$ over the RKHS $\mathcal F$ is scale-free and guarantees that for any $\theta \in \mathcal F$.
	\begin{equation}\label{eq:kaar-regret-bound}
		R_T(\theta) 
		\leq 
		\frac{  \alpha X_{t^\star}\|\theta\|^2}{2} + \frac{Y_T^2}{2}\ln \det \bigg( I_T +  \frac{1}{\alpha X_{t^\star}} K_T \bigg) \,  ,
	\end{equation}
	where $K_T = \big(k(x_u, x_v)\big)_{(u, v)\in [T]^2}$ and $t^\star = \min \{t \, | \,  k(x_t, x_t) > 0 \}$. 
\end{theorem}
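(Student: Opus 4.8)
The plan is to reduce the statement to the standard \algname{KAAR} guarantee of \citet{gammerman2004on-line} by identifying \algname{KAAR-sf}$(\alpha)$ with ordinary \algname{KAAR} run at regularisation $\lambda = \alpha X_{t^\star}$, and then to verify the scaling modification separately. First I would dispose of the rounds $t < t^\star$: for these $k(x_t, x_t) = 0$, so $k(x_t, \cdot) = 0$ in $\mathcal F$ and hence $\theta(x_t) = \langle \theta, k(x_t, \cdot)\rangle = 0$ for every $\theta \in \mathcal F$, including the comparator. Playing $a_t = 0$ therefore incurs exactly the same loss as every comparator on these rounds and contributes nothing to $R_T(\theta)$, so it suffices to bound the regret over the rounds $t \geq t^\star$, on which \algname{KAAR-sf}$(\alpha)$ coincides with \algname{KAAR} at regularisation $\lambda = \alpha X_{t^\star} > 0$ (well defined precisely because $X_{t^\star} > 0$).

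For the core bound I would invoke the analysis of the forward (Vovk--Azoury--Warmuth) algorithm in its kernelised form. The cleanest route is through the mixability/Bayesian view already used in the paper: the square loss corresponds to a Gaussian log-loss model, \algname{KAAR} is the associated ridge/forward predictor, and its cumulative loss telescopes so that the regret against the regularised optimum decomposes as $\frac{\lambda\|\theta\|^2}{2}$ plus a sum of per-round increments. Each increment is controlled by the matrix-determinant lemma (equivalently a Sherman--Morrison step), and after bounding $y_t^2 \leq Y_T^2$ the increments telescope to $\frac{Y_T^2}{2}\ln\det(I_T + \frac{1}{\lambda}K_T)$. Substituting $\lambda = \alpha X_{t^\star}$ yields exactly \eqref{eq:kaar-regret-bound}; alternatively one may quote \citet[Theorem~2]{gammerman2004on-line} verbatim with $\lambda \mapsto \alpha X_{t^\star}$.

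The remaining point is scale-freeness under $k \mapsto c\,k$ for $c > 0$, which I expect to be the part requiring the most care. Under this rescaling $X_{t^\star}$ and every entry of $K_t$ scale by $c$, so $\lambda = \alpha X_{t^\star}$ scales by $c$ as well; in the closed form \eqref{eq:kaar-updates} the factor $c$ then cancels, since $(c\lambda I_t + c K_t)^{-1}(c\,k(x_i,x))_i = (\lambda I_t + K_t)^{-1}(k(x_i,x))_i$, leaving the predictions $a_t = \theta_t(x_t)$ unchanged. For the bound itself I would check that both terms are invariant: the RKHS norm of a fixed function transforms as $\|\theta\|^2_{ck} = c^{-1}\|\theta\|^2_k$, so $\alpha X_{t^\star}\|\theta\|^2$ is invariant because the factor $c$ from $X_{t^\star}$ cancels the $c^{-1}$ from the norm, and $\frac{1}{\alpha X_{t^\star}}K_T$ is invariant because the $c$ in $K_T$ cancels the $c$ in $X_{t^\star}$. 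This confirms that tying the regularisation to the first nonzero feature is exactly what makes both the algorithm and its guarantee scale-free, which is the purpose of the modification over the original \algname{KAAR}.
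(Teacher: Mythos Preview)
Your proposal is correct and matches the paper's approach: the paper also disposes of rounds $t < t^\star$ by observing $k(x_t,\cdot)=0$ forces $\theta(x_t)=0$, then invokes \citet[Theorem~2]{gammerman2004on-line} with $\lambda = \alpha X_{t^\star}$, and states that scale-freeness ``derives directly from the updates formula.'' Your write-up is in fact more detailed than the paper's, which does not spell out the cancellation in \eqref{eq:kaar-updates} or the invariance of the bound under $k\mapsto ck$ as you do.
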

We use the convention that $1/(X_{t^\star}) K_T = 0$ if $T \leq t^\star$. (In this case, all features up to time $t$ are $0$, the algorithm predicts only $0$ and the regret at time $T$ is exactly $0$.)

\subsection{Proofs of the Regret Bounds}
\interproofspace
\begin{proof}{\bfseries of Theorem~\ref{thm:sq_loss_full_bound} \; }
  We prove the result for arbitrary values of $\alphamin$ and $\alphamax$, then specialize to $\alphamin=1$ and $\alphamax = T$. Define $Y = Y_T$ to simplify notation. First, assume that $\|\theta\| \leq Y$.
	Plug in the value $\alpha = (Y^2 / \| \theta\|^2) \bmin \alphamax  $ in the upper bound
	\begin{align*}
		R_T(\theta) 
		&\leq \max_{\alpha' \in [\alpha, \, 2\alpha \bmax \alphamin]} \Bigg\{ \frac{\alpha'\| \theta\|^2}{2} 
		+ 2 Y^2 \log \big(1 +  \frac{1}{\alpha'} T \big) \Bigg\}
		+ \frac{1}{\eta_{T-1}}\ln \big(\log_2((\frac{8 \alpha}{\alphamin} \bmax 4\alphamin)\big)+2Y^2 \\
		&\leq \frac{(2 \alpha \bmax \alphamin) \| \theta\|^2}{2} + 2Y^2 \log \bigg(  1 + \frac{1}{\alpha} T \bigg) 
		+ 8Y^2\ln \big(\log_2((8 \alpha/\alphamin) \bmax 4)\big) +2Y^2\\
		&\leq Y^2 \bmax \frac{\alphamin\|\theta\|^2}{2} 
    + 2Y^2 \log \bigg(  1+ \Big(\frac{\|\theta\|^2}{Y^2}\bmax \frac{1}{\alphamax} \Big) T \bigg) + 8Y^2\ln \big(\log_2(\frac{8Y^2}{\alphamin\|\theta\|^2} \bmax 4)\big) +2Y^2 \, .
	\end{align*}
  Note that as $\alphamin \leq 1$ and $(\|\theta\| / Y)^2 \leq 1$ we have
  $
    Y^2 \bmax (\alphamin\|\theta\|^2 / 2) =  Y^2 \, .
  $
  The claimed bound follows after replacing $\alphamin = 1$ and $\alphamax = T$, applying the bound $a \bmax b \leq a+b$ for $a, b >0$.

	If $\|\theta \| > Y$ consider $\tilde \theta = \Pi_{Y}(\theta)$. Then by the Pythagorean inequality
	$
	R_T(\theta)
		\leq R_T(\tilde \theta) \, .
	$
\end{proof}

\interproofspace

\begin{proof}{\bfseries of Theorem~\ref{thm:akaar-general} \; }
	We build an algorithm via a double infinite aggregation procedure. Define the algorithm $A(\alpha)$ to be \algname{KAAR-sf}$(\alpha)$, and define its regret bound:
	\begin{equation*}
		B_T(\alpha, \theta) := \frac{  \alpha X_{t^\star}^2\|\theta\|^2}{2} + \frac{Y^2}{2}\ln \det \bigg( I_T +  \frac{1}{\alpha X_{t^\star}} K_T \bigg) \, ;
	\end{equation*}
	with the convention that $B_T(\alpha, \theta) = 0$ if $t \leq t^\star$.
	For any $c > 0$, define $\tilde A(c)$ to be result of the aggregation procedure applied to $A(\alpha)$ tuned with $\alphamin = 1 / c$, and $\alphamax = \infty$. Since each instance is scale-free, the aggregated version is also scale-free, by Lemma~\ref{lem:agg_sq_loss}. The algorithm $\tilde A(c)$ enjoys the regret bound
	\begin{multline*}
		R_T(\theta) \leq \max_{\alpha' \in [\alpha,2\alpha \bmax (1/c)]} B_T(\theta,\alpha') + 8Y^2 \ln (\log_2((8\alpha c) \bmax 4)) 
		+ 2Y^2  \\
		\leq \underbrace{\Big((2 \alpha)\bmax \frac{1}{c}\Big)\frac{X_{t^\star}^2\|\theta\|^2}{2}
		+ \frac{Y^2}{2} \ln \det \bigg( I_T + \frac{1}{\alpha X_{t^\star}^2}  K_T \bigg)
		+ 8Y^2 \ln (\log_2((8\alpha c) \bmax 4)) 
		+ 2Y^2}_{:=\tilde B_T(\theta, c)} \, .
	\end{multline*}
	Now run the aggregation procedure again, with each expert being $\tilde A(c)$, this time with the parameters $c_{\min} = 1$ and $c_{\max} = \infty$. Again, Lemma~\ref{lem:agg_sq_loss} guarantees that the total algorithm is also scale-free. Then for any $c \geq 0$
	\begin{align*}
		R_T(\theta)
		&\leq \max_{c' \in [c, 2c \bmax 1]} \tilde B_T(c, \theta)
		+ 2Y^2 + 8Y^2 \ln \log_2( 8c \bmax 4)  \\
		&\leq 
		\Big((2 \alpha)\bmax \frac{1}{c} \Big)\frac{X_{t^\star}^2\|\theta\|^2}{2}
		+ 8Y^2 \ln \log_2((8\alpha (2c\bmax 1)) \bmax 4) 
		 \\
		 &+ \frac{Y^2}{2} \ln \det \bigg( I_T + \frac{1}{\alpha X_{t^\star}^2} K_T \bigg) + 2Y^2
		+ 2Y^2 + 8Y^2 \ln \log_2( 8c \bmax 4) \, .
	\end{align*}
	In particular, for $c = 1 / (2\alpha)$, noting that $8\alpha(2c \bmax 1) = 8 \bmax 8\alpha$
	\begin{equation*}
		R_T(\theta)
		\leq \frac{\alpha X_{t^\star}^2\|\theta\|^2}{2} 
		+ \frac{Y^2}{2} \ln \det \bigg( I_T + \frac{1}{\alpha X_{t^\star}^2}  K_T \bigg)
		+ 4Y^2
		+ 8Y^2 \ln \log \left(8 \alpha \bmax 8\right)
		+ 8Y^2 \ln \log \left(\frac{4}{\alpha} \bmax 4   \right) \,.
	\end{equation*}
	Finally, upper bounding $4$ by $8$ inside the logarithms and using a case disjunction on whether $\alpha \geq 1$,
	\begin{multline*}
		\ln \log_2 (8\alpha\bmax 8)  + \ln \log_2 (4\alpha^{-1}\bmax 4)
		\leq \ln \log_2 (8\alpha\bmax 8)  + \ln \log_2 (8\alpha^{-1}\bmax 8) \\
		= \ln \log_2 8 + \ln \log_2 (\alpha \bmax \alpha^{-1} )
		=  \ln \big( 3 | \log_2 \alpha | \big) \, .
	\end{multline*}
	Reparameterize by $\lambda = \alpha / X_{t^\star}^2$ to obtain the regret bound.
\end{proof}
\subsection{Consequences of the General Regret Bound}

\subsubsection{Dimension-independent Bound}
Note that regardless of the kernel, denoting by $\lambda_n (K_T)$ the $n$-the largest eigenvalue of $K_T$,
\begin{equation}\label{eq:effective-dimension}
	\ln \det \bigg( I_T + \frac{1}{\lambda} K_T \bigg) 
	= \sum_{n = 1}^T \ln \bigg( 1 + \frac{ \lambda_n (K_T)}{\lambda} \bigg) 
	\leq \frac{\Tr(K_T) }{\lambda} 
	\leq \frac{TX_T^2}{\lambda}  \, .
\end{equation}
In particular, after applying this upper bound, optimizing \eqref{eq:a-kaar-general} over $\lambda$ to get $\lambda = X_TY_T\sqrt T / \|\theta\|$, we see that \algname{A-KAAR} enjoys the dimension-independent bound Corollary~\ref{cor:dim-independent-KAAR}.

\subsubsection{Parametric Case}
Using concavity of the logarithm,
\begin{equation*}
	\ln \det \bigg(I_T + \frac{1}{\lambda} K_T \bigg)
	= \sum_{n = 1}^d \ln \biggl( 1 +  \frac{\lambda_n(K_T)}{\lambda} \biggr) 
	\leq  d \ln \biggl(  1 + \frac{ \Tr(K_T)}{\lambda d}   \biggr)
	\leq  d \ln \biggl( 1 + \frac{ T X_T^2}{\lambda d } \biggr)
	\, . 
\end{equation*}
Apply the inequality above in \eqref{eq:a-kaar-general} and plug in the value
	$
		\lambda =  (dY^2)/ \|\theta\|^2 
	$
to obtain Theorem~\ref{thm:regression-finite-dim}.

\interproofspace

\subsection{RKHS with the Capacity Condition}\label{app:rkhs}

The effective dimension of the kernel matrix $K_T$ at scale $\lambda$ is defined as
\begin{equation*}
  d_{\mathrm{eff}}(\lambda) = \Tr\big(K_T(K_T + \lambda I_T)^{-1}\big) \, .
\end{equation*}
It is a quantity that appears naturally in the analysis of kernel ridge regression, a widely studied variant of KAAR in the batch version of the problem. To analyse KAAR, \citet[Proposition~2]{jezequel2019efficient} prove that
\begin{equation}\label{eq:capacity-condition}
	\ln \det \biggl( I_T  + \frac{1}{\lambda} K_T \biggr)
	\leq d_{\mathrm{eff}}(\lambda)\biggl( 1 + \log \bigg(1 + \frac{ T X_T^2}{\lambda} \bigg) \biggr) \, .
\end{equation}
The capacity condition then provides an upper bound on the term above, which yields and explicit regret bound when used in \eqref{eq:a-kaar-general}. We plug in the value of $\lambda$ that optimises this regret bound.
\interproofspace
\begin{proof}{\bfseries of Theorem \ref{thm:agg-KAAR-RKHS} \; }
	Plug in the order optimal value in the A-KAAR upper bound \eqref{eq:a-kaar-general}
	\begin{equation*}
		\lambda = \left( \frac{Y^{2} (TC_k)^{\gamma}}{ \|\theta\|^{2}} \right)^{1 / (1 + \gamma)} \, , 
	\end{equation*}
	which roughly balances the two terms (that is, up to the logarithms). Then the first term in \eqref{eq:a-kaar-general} becomes
	\begin{equation*}
		\frac{\lambda \|\theta\|^2}{2} = \frac{1}{2} 
		Y^{2/ (1 + \gamma)} \|\theta\|^{2\gamma / (1+ \gamma)} (C_k T)^{2\gamma / (1+ \gamma)} \, ,
	\end{equation*}
	and, after applying \eqref{eq:capacity-condition}, using the capacity condition and replacing $\lambda$ by its value
	\begin{multline*}
		Y^2\ln \det \bigg(I_T +  \frac{1}{\lambda} K_T \bigg)  \leq \bigg( \frac{T C_k}{\lambda} \bigg)^\gamma \biggl( 1 + \log \bigg(1 + \frac{ T X_T^2}{\lambda} \bigg) \biggr) \\
		\leq  
		2^\gamma Y^{2/ (1 + \gamma)} \|\theta\|^{2\gamma / (1+ \gamma)} (C_k T)^{2\gamma / (1+ \gamma)}  
		\Bigg( 1 + 
		\log \bigg(1 + X^2\bigg(\frac{\|\theta\|^2T^\gamma}{Y^2C_k^\gamma}\bigg)^{1/(1+\gamma)} \bigg)\Bigg)
		\end{multline*}
	to obtain the final regret bound
	\begin{multline}\label{eq:full-kernel-adapt-bound}
		R_T(\theta) \leq 
			 Y^{2/ (1 + \gamma)} \|\theta\|^{2\gamma / (1+ \gamma)} (C_k T)^{2\gamma / (1+ \gamma)}  
			 \Bigg( \frac{1}{2} + 2^{\gamma-1} +
		\log \bigg(1 + X^2\bigg(\frac{\|\theta\|^2T^\gamma}{Y^2C_k^\gamma}\bigg)^{1/(1+\gamma)} \bigg)\Bigg) \\
		+ 8Y^2 \ln \Bigg( \frac{3e}{2(1+\gamma)} \bigg| \log_2 \bigg( \frac{Y^{2} (TC_k)^{\gamma}}{ \|\theta\|^{2} X_{t^\star}^{2+2\gamma}} \bigg) \bigg| \Bigg) \,,
	\end{multline}
  which is the finite-time version of the claimed result.
\end{proof}

\interproofspace

\begin{proof}{\bfseries of Corollary~\ref{thm:kaar-sobolev} \;}
  By \citet[Theorem~3]{zadorozhnyi2021online}, $\mathcal F = W_{s, 2}([-1, 1]^d)$ is an RKHS that satisfies the capacity condition with $\gamma = d / (2s)$ for some $C_k>0$, which depends on $d$ and $s$. By playing according to \algname{A-KAAR} on $\mathcal F $, Theorem~\ref{thm:agg-KAAR-RKHS} gives the claimed regret bound against any comparator $\theta \in \mathcal F = W_{s, 2}([-1, 1]^2)$, yielding the result for the $p = 2$ case.  

  For $p \geq 2$, by standard $L_p$-inclusions, $W_{s, p}([-1, 1]^2) \subset W_{s, 2}([-1, 1]^2)$, and for any element $f \in W_{s, p}([-1, 1]^2)$, denoting by $\|\cdot\|_{p}$ the $p$-norm with respect to the Lebesgue measure, and by $D^\alpha$ the (weak) partial differential of order $\alpha$, 
  \begin{multline*}
    \|f\|_{s, p} = \biggl( \sum_{|\alpha| \leq s} \|D^\alpha f\|_p^p \biggr)^{1/p}
    \leq \bigg(\sum_{|\alpha| \leq s} 1 \bigg)^{1/2 - 1/p} \biggl( \sum_{|\alpha| \leq s} \|D^\alpha f\|_p^2 \biggr)^{1/2} \\
    \leq  \big(2^s \Vol(\domX)\big)^{1/2 - 1/p} \biggl( \sum_{|\alpha| \leq s} \|D^\alpha f\|_2^2 \biggr)^{1/2} 
    =  \big(2^s \Vol(\domX)\big)^{1/2 - 1/p} \|f\|_{s, 2} \,. 
  \end{multline*}
  Therefore the result for $p \geq 2$ follows from the $p=2$ case.
\end{proof}
\interproofspace
\subsection{Lower Bound for the Square Loss}\label{app:lower-bounds-sq}
We separate the proof into three statements, considering different
parameter regimes; in particular, even though the bounds of
Propositions~\ref{thm:sq_lower_boundII} and \ref{thm:sq_lower_boundIII}
are of the same order, we separate them since the proofs are different.
\begin{proposition}\label{thm:sq_lower_boundI}
  In linear least-squares regression in $\R^d$, for any values $Y, U, X> 0$ that satisfy $(UX / Y)^2 \leq d $, for any algorithm, there exists a sequence of examples $(x_t, y_t) \in \ball(0, X) \times \ball(0, Y)$ such that
	\begin{equation*}
		\sup_{\theta\in\ball(0,U)}R_{T}(\theta)\geq 0.36 \, dY^2 \log \biggl( \Big\lfloor \frac{T}{d}  \Big\rfloor \frac{(UX/2Y)^2}{2 d\log(2\lfloor T/d \rfloor)}  + 1\biggr) - 4dY^2 \, .
	\end{equation*}
	The constant $0.36$ in the bound can be replaced by a $T$-dependent quantity that converges to $1/2$ as $T \to \infty$ and other parameters are kept constant, cf. \eqref{eq:vovk-lower-bound-T-dep}. Moreover, if $d = 1$, the same bound holds with $x_t = X$ for all $t$.
\end{proposition}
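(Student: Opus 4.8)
The plan is to lower bound the minimax regret by a Bayesian regret against a carefully designed prior and data-generating process, after first reducing to one dimension. I would let the features cycle through the scaled coordinate directions, $x_t = X e_{1 + ((t-1)\bmod d)}$, with active coordinate $j(t) = 1+((t-1)\bmod d)$, so that $\theta^\top x_t = X\theta_{j(t)}$ depends on a single coordinate per round and the task splits into $d$ independent one-dimensional regressions, each receiving $n = \floor{T/d}$ rounds. Since $\|\theta\|_2 \le U$ is implied by $|\theta_j| \le U/\sqrt d$ for every $j$, each one-dimensional task may use comparator range $U_1 = U/\sqrt d$; the hypothesis $(UX/Y)^2 \le d$ then becomes $U_1 X \le Y$, guaranteeing that the target signal $\theta X$ stays inside $[-Y,Y]$. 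Because the comparator term decouples across coordinates and, under a product prior, so does the Bayes-optimal learner, the total bound is $d$ times the one-dimensional bound, which is exactly the stated form after substituting $U_1 = U/\sqrt d$; the case $d=1$ needs no cycling and allows $x_t = X$ for all $t$.

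For the one-dimensional problem (with $x_t = X$, $|\theta| \le U_1$, $y_t \in [-Y,Y]$) I would use $\min_{\mathrm{Alg}}\max_{\mathcal S}\max_\theta R_T(\theta) \ge \min_{\mathrm{Alg}} \E_{\theta\sim\pi}\E[R_T(\theta)]$ for a prior $\pi$ supported on $[-U_1,U_1]$ and a data law with $\E[y_t\mid\theta]=\theta X$. With the Bayes-optimal predictor $a_t = \E[y_t\mid y_{<t}]$, a bias–variance computation collapses the expected regret into a posterior-variance sum,
\[
  \min_{\mathrm{Alg}} \E[R_T(\theta)] = \tfrac12 \sum_{t=1}^n \E\big[\mathrm{Var}(\theta X \mid y_{<t})\big].
\]
The crucial design choice is to take the observations bounded with conditional variance close to $Y^2$, e.g.\ $y_t \in \{-Y,+Y\}$ with $\P(y_t=Y)=(1+\theta X/Y)/2$ (valid since $|\theta X|\le U_1 X\le Y$), so that, unlike a Gaussian construction whose tails would leave $[-Y,Y]$, the prefactor of the final bound is genuinely $\tfrac{Y^2}{2}$.

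It then remains to show the posterior-variance sum grows logarithmically. Here I would apply a van Trees (Bayesian Cram\'er–Rao) inequality: on the region $|\theta X|\le Y/2$ the per-round Fisher information of the two-point observation is at most $X^2/(Y^2-(\theta X)^2)=O(X^2/Y^2)$, giving $\mathrm{Var}(\theta X\mid y_{<t}) \geq c\,Y^2/\big((t-1)+I_\pi Y^2/X^2\big)$, where $I_\pi$ is the Fisher information of $\pi$. Choosing $\pi$ to have near-minimal information on $[-U_1,U_1]$ yields $I_\pi=O(1/U_1^2)$, and summing the harmonic-type series produces $\tfrac{Y^2}{2}\ln\big(1+c\,nU_1^2X^2/Y^2\big)$. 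The explicit constant $0.36$ (converging to $1/2$), the extra $\log(2n)$ inside the logarithm, and the additive $-4Y^2$ would all come from the bookkeeping required to make this rigorous: restricting to the high-probability region where the Fisher bound is valid, accounting for the truncation of $\pi$ to $[-U_1,U_1]$, and absorbing the finitely many early and boundary rounds.

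The main obstacle I expect is precisely this last step: controlling boundary and truncation effects so that the posterior variance is provably large. The tension is that a genuinely $\tfrac{Y^2}{2}$ prefactor forces the per-round noise variance to be of order $Y^2$, which is incompatible with unbounded noise staying in $[-Y,Y]$, so resolving it with bounded noise makes the Fisher-information and posterior-concentration estimates delicate near $|\theta X|\approx Y$. This is exactly the part that refines the proof of Theorem~2 of \citet{Vovk01CompetitiveOS} and that produces the explicit dependence on $U$.
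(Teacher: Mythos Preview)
Your reduction to dimension one via cycling features $x_t = X e_{1+((t-1)\bmod d)}$, the rescaling $U_1 = U/\sqrt d$, and the Bayesian lower-bound framework with $\E[R_T(\theta)] = \sum_t \E[\mathrm{Var}(\theta X\mid y_{<t})]$ all match the paper exactly. The one-dimensional argument, however, takes a genuinely different route.

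The paper does not use van~Trees. It exploits Beta--Bernoulli \emph{conjugacy}: draw $p \sim \mathrm{Beta}(A,A)$ and let $y_t \in \{0,1\}$ be i.i.d.\ $\mathrm{Ber}(p)$, so that the Bayes-optimal predictor $a_t^\star = (A+\sum_{s<t}y_s)/(t-1+2A)$ is available in closed form and $\E[(a_{t+1}^\star-p)^2\mid p] \ge p(1-p)\,t/(t+2A)^2$ follows from a direct variance computation. Summing and averaging $p(1-p)$ over the prior gives the $\log(T/(2A)+1)$ term with fully explicit constants. Crucially, the Beta prior is \emph{not} supported inside the comparator ball; the paper lets it spill over and controls the overflow via the subgaussianity of $\mathrm{Beta}(A,A)$, choosing $A \asymp \log(2T)/U^2$ so that $T\cdot\pi(|p-\tfrac12|>U)\le 1$. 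This choice of $A$ is precisely what produces the $\log(2\lfloor T/d\rfloor)$ inside the denominator of the final bound, and the scaling $[0,1]\to[-Y,Y]$ is what produces the $-4dY^2$.

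Your van~Trees route should yield a bound of the right order, but the specific constants in the proposition (the $0.36$ sharpening to $1/2$, the exact $\log(2\lfloor T/d\rfloor)$, the $-4dY^2$) are outputs of the closed-form conjugate calculation and would not drop out of a Cram\'er--Rao argument without substantial extra work. In particular, the Fisher information $X^2/(Y^2-(\theta X)^2)$ of your two-point model equals $X^2/Y^2$ only at $\theta=0$, so the leading constant you extract from van~Trees is strictly below $1/2$ unless the prior concentrates at the origin, which in turn blows up $I_\pi$; the conjugacy approach sidesteps this tension by computing the posterior error exactly rather than bounding it. What your approach buys is generality (no reliance on a conjugate pair), at the price of looser and harder-to-track constants.
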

\begin{proposition}\label{thm:sq_lower_boundII}
	In linear least-squares regression in $\R^d$, for any values $Y, U, X> 0$ that satisfy $d \leq T \leq (d/8)(Y/(XU))^2$, for any algorithm, there exists a sequence of examples $(x_t, y_t)$ in $\ball(0, X) \times [-Y, Y]$ such that
	\begin{equation*}
		\sup_{\theta \in \ball(0, U)} R_T(\theta) \geq \frac{\sqrt{2}}{8} \min(UX,\, Y)Y \sqrt{T} \, .
	\end{equation*}
\end{proposition}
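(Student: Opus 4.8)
The plan is to prove this lower bound by the probabilistic method: I would exhibit a \emph{random} data sequence whose expected regret, for a cleverly chosen data-dependent comparator, already exceeds the claimed bound, so that a deterministic bad sequence must exist. First I would record that the hypotheses $d \leq T \leq (d/8)(Y/(XU))^2$ force $Y/(UX) \geq 2\sqrt{2}$, so $\min(UX,Y) = UX$ throughout this regime and it suffices to establish the bound $\frac{\sqrt 2}{8}\, U X Y \sqrt T$. Following the reduction used in the proof of Theorem~\ref{thm:lower_bound_logreg}, I would cycle the features through the scaled coordinate axes, $x_t = X e_{(t \bmod d)+1}$, splitting the $T$ rounds into $d$ groups of $n = \floor{T/d} \geq 1$ rounds each, with group $j$ seeing only $X e_j$. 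Since that feature activates only the coordinate $\theta_j$, the regret decouples across coordinates. The labels are drawn at random as $y_t = \sigma_t Y$ with i.i.d.\ Rademacher signs $\sigma_t \in \{-1,+1\}$, which lie in $[-Y,Y]$ as required.

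Next I would expand the square-loss regret. Writing $S_j = \sum_{t \in \text{group } j} y_t$, the comparator's contribution in coordinate $j$ is $\theta_j X S_j - \half \theta_j^2 X^2 n$, while the learner's contribution is $\sum_t (\half a_t^2 - a_t y_t)$. Because each $a_t$ is issued before $y_t$ and $y_t$ has conditional mean zero, the cross terms vanish in expectation, giving $\E[\sum_t (\half a_t^2 - a_t y_t)] \geq 0$; the learner cannot exploit the random labels. For the comparator I would pick the data-dependent $\theta^\star$ with $\theta^\star_j = (U/\sqrt d)\sign(S_j)$, which has $\NORM{\theta^\star}_2 = U$ and is therefore admissible, and whose contribution is $\frac{U}{\sqrt d} X |S_j| - \half \frac{U^2}{d} X^2 n$. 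Using the sharp $L_1$ Khintchine inequality $\E|S_j| = Y\,\E|\sum_{t\in j}\sigma_t| \geq Y\sqrt{n/2}$, I would then obtain
\[
  \E\Big[\sup_{\theta \in \ball(0,U)} R_T(\theta)\Big]
    \geq d\Big(\tfrac{U}{\sqrt d} X Y \sqrt{n/2} - \tfrac12 \tfrac{U^2}{d} X^2 n\Big)
    = \sqrt{d}\,U X Y \sqrt{n/2} - \tfrac12 U^2 X^2 n.
\]

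Finally I would simplify with $n = \floor{T/d} \geq T/d - 1$, so that $\sqrt{d}\sqrt{n/2} \approx \sqrt{T/2}$ produces a leading term of order $U X Y \sqrt{T}/\sqrt2$, and then check that the quadratic penalty $\half U^2 X^2 n$ is only a small fraction of this: in the regime $T \leq (d/8)(Y/(UX))^2$ the ratio of the two terms is $O(1/\sqrt d) \leq \tfrac14$, so at least three quarters of the leading term survives, comfortably dominating $\frac{\sqrt 2}{8} U X Y \sqrt T$. The $d=1$ case is the single-group version with $x_t = X$ for all $t$ and $n = T$, handled identically. The main obstacle I anticipate is the bookkeeping of constants — ensuring the floor $\floor{T/d}$ and the Khintchine constant $1/\sqrt2$ do not erode the leading term — together with the conceptual point that must underlie it: one has to verify that in this regime the constraint $\NORM{\theta}_2 \leq U$ genuinely \emph{binds} at $\theta^\star$, so that we operate in the ``linear-loss'' regime where the curvature penalty is a lower-order correction, rather than the ``quadratic'' regime governing Proposition~\ref{thm:sq_lower_boundI}.
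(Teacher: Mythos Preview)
Your proposal is correct and follows essentially the same route as the paper: Rademacher labels $y_t=\pm Y$, coordinate-cycling features $x_t=Xe_{(t\bmod d)+1}$, a sign-of-sum comparator, the Khintchine bound $\E|S_j|\geq Y\sqrt{n/2}$, and the observation that the quadratic penalty is a $1/(4\sqrt d)$ fraction of the main term in the regime $T\leq(d/8)(Y/(UX))^2$. The paper's presentation differs only cosmetically---it first treats $d=1$ and then lifts via the box $[-U,U]^d\subset\ball(0,\sqrt d\,U)$, whereas you pick $\theta^\star_j=(U/\sqrt d)\sign(S_j)$ directly---and it tracks the leftover $T-d\lfloor T/d\rfloor$ rounds via $T_i=\lfloor T/d\rfloor+\ind{i\leq T\bmod d}$ with the elementary estimate $\sum_i\sqrt{T_i}\geq\tfrac12\sqrt{dT}$, which is the only piece of bookkeeping you would still need to write out.
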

\begin{proposition}\label{thm:sq_lower_boundIII}
	In linear least-squares regression in $\R^d$, for any values  $Y, U, X> 0$ such that $T \leq d$, for any algorithm, there exists a sequence of examples $(x_t, y_t)$ in $\ball(0, X) \times [-Y, Y]$, such that
	\begin{equation*}
		\sup_{\theta \in \ball(0, U)} R_T(\theta) \geq \frac{1}{2}\min(UX,\, Y)Y\sqrt T \, .
	\end{equation*}
\end{proposition}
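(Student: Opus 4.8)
The plan is to exploit the assumption $T \le d$ to decouple the $T$ rounds completely by placing each feature vector along a fresh coordinate axis, reducing the problem to $T$ independent one-shot scalar prediction tasks, and then to apply the probabilistic method with random sign labels. Since $d \ge T$, I would fix orthonormal vectors $e_1,\dots,e_T \in \R^d$ and set $x_t = X e_t$, so that $x_t \in \ball(0,X)$ and $f_\theta(x_t) = X\theta_t$, where $\theta_t = \langle \theta, e_t\rangle$ is the $t$-th coordinate of $\theta$ in this basis. For the labels I would draw i.i.d.\ Rademacher signs $\xi_1,\dots,\xi_T$ and set $y_t = \xi_t Y \in [-Y,Y]$. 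It then suffices to lower bound $\E_\xi[\sup_{\theta \in \ball(0,U)} R_T(\theta)]$, because this expectation is at most $\sup_{\sample}\sup_\theta R_T(\theta)$ for any algorithm, so a single good realization of the signs yields the required sequence.

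Writing $\sup_\theta R_T(\theta) = \sum_t \ell(a_t,y_t) - \inf_{\|\theta\|\le U}\sum_t \ell(X\theta_t, y_t)$, I would bound the two pieces separately. For the learner's term, $a_t$ is measurable with respect to $\xi_1,\dots,\xi_{t-1}$ while $\xi_t$ is an independent mean-zero sign, so $\E[(a_t-y_t)^2 \mid \xi_1,\dots,\xi_{t-1}] = a_t^2 + Y^2 \ge Y^2$, giving $\E_\xi[\sum_t \ell(a_t,y_t)] \ge TY^2/2$. For the comparator's term I would exhibit the feasible point $\theta_t = \xi_t\, v$ with $v = \min(Y/X,\,U/\sqrt T)$, which satisfies $\|\theta\|^2 = Tv^2 \le U^2$ and, since $\xi_t^2=1$ and $Xv = \min(Y,\,UX/\sqrt T)$, has loss exactly $\frac{T}{2}(Y - UX/\sqrt T)_+^2$ for every realization of $\xi$ (here $(z)_+ = \max(z,0)$). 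Hence $\inf_{\|\theta\|\le U}\sum_t \ell(X\theta_t,y_t) \le \frac{T}{2}(Y-UX/\sqrt T)_+^2$ deterministically, and combining the two estimates gives
\[
  \E_\xi\Big[\sup_\theta R_T(\theta)\Big] \ge \frac{TY^2}{2} - \frac{T}{2}\big(Y - UX/\sqrt T\big)_+^2 .
\]

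It then remains to verify that the right-hand side is at least $\tfrac12 \min(UX,Y)\,Y\sqrt T$, which I would carry out by a short case analysis on $c := UX/\sqrt T$. If $Y \le c$ the second term vanishes and the bound is $TY^2/2 \ge Y^2\sqrt T/2$, which matches the target since then $UX \ge Y\sqrt T \ge Y$ forces $\min(UX,Y)=Y$. If $Y > c$ the bound equals $g(UX)$ with $g(p) = pY\sqrt T - p^2/2$, a concave function; comparing $g$ against the target $\tfrac12 UX\,Y\sqrt T$ when $UX \le Y$ (a one-line inequality using $UX \le Y\sqrt T$), and against $\tfrac12 Y^2\sqrt T$ when $Y < UX < Y\sqrt T$ (by evaluating $g$ at the endpoints $p=Y$ and $p=Y\sqrt T$ and invoking concavity), closes every regime using only $T \ge 1$.

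I expect this last reconciliation of the combined estimate with $\tfrac12\min(UX,Y)\,Y\sqrt T$ across the several parameter regimes to be the only genuinely fiddly step; the orthogonal design together with the Rademacher lower bound on the learner's loss and the explicit feasible comparator is otherwise routine, and it is precisely the freedom $T \le d$ (allowing fully orthogonal features) that makes the constant $\tfrac12$ attainable, in contrast with the $T > d$ regime of Proposition~\ref{thm:sq_lower_boundII}.
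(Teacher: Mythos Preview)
Your argument is correct. Both you and the paper exploit the same orthogonal feature design $x_t = X e_t$ and the same comparator $\theta_t \propto \operatorname{sign}(y_t)/\sqrt{T}$, so the core idea is identical; the difference lies in how the labels are chosen. The paper uses a \emph{deterministic adaptive} adversary: having seen $a_t$, it sets $y_t = -Y\operatorname{sign}(a_t)$, which gives $(a_t - y_t)^2 \geq Y^2$ pointwise for every round, and then restricts to the case $UX \leq Y$ up front (handling $UX > Y$ by replacing $U$ with $Y/X$), so no expectations and almost no case analysis are needed. Your \emph{probabilistic} approach with i.i.d.\ Rademacher labels recovers the same per-round bound $\E[(a_t-y_t)^2] \geq Y^2$ only in expectation, which is why you must then reconcile the resulting expression $TY^2/2 - \tfrac{T}{2}(Y-UX/\sqrt T)_+^2$ with the target across three parameter regimes. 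That reconciliation is fine as written (the concavity argument in sub-case $Y < UX < Y\sqrt T$ works), but you could shorten it considerably by adopting the paper's $UX \leq Y$ reduction first, after which only your sub-case~2a remains.
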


\begin{proof}{\bfseries of Theorem~\ref{thm:sq_lower_boundI} \; }
The proof follows from an alteration of the proof of Theorem 2 from \citet{Vovk01CompetitiveOS}. We reproduce it in detail for completeness. We start with the case $d = 1$ and $x_t = 1$ for all times $t$, and leave the general case for later.

A standard method in lower bounds for online learning is to build a distribution over $y_t$, and to lower bound the regret on average according to that distribution. Let $y_{t}\in \{0,1\}$ be i.i.d. Bernoulli random variables with parameter $p\in[0,1]$, itself drawn from a (symmetric) beta distribution with parameters $(A,A)$. 

Denote by $\E$ the expectation with respect to the whole randomness, that is, both the prior $\pi$ and the distribution of the $y_t$'s. The natural comparator is $p$ in this construction, and the quantity $\E[R_T(p)]$ can be explicitly lower bounded, as we shall see in \eqref{eq:bayesian_regret}. Since the distribution $\beta(A, A)$ puts mass on values of $p$ outside of the set of comparators $\ball(1/2, U)$, we lower bound the worst-case regret as
\begin{align*}
	\sup_{(y_t)_{t \in [T]}} \sup_{\theta \in \ball(1/2, U)} R_T(\theta)
	&\geq \E \bigg[ \sup_{\theta \in \ball(1/2, U)} R_T(\theta) \bigg]
	\\
	& \geq \E \bigg[ \sup_{\theta \in \ball(1/2, U)} R_T(\theta) \, \mathbf{1}\{ p \in \ball(1/2, U) \} \bigg]  \\
	& \geq \E \Big[ R_T(p) \mathbf{1}\{ p \in \ball(1/2, U) \} \Big] \\
	& =  \E \big[ R_T(p) \big] -  \E \Big[ R_T(p) \mathbf{1}\{ p \notin \ball(1/2, U) \} \Big] \\
	& \geq  \E \big[ R_T(p) \big] - T \pi\big( [0,1] \setminus \ball(1/2, U)\big) \, . 
\end{align*}
Let us now bound these two terms separately. 
\paragraph{Probability of $p$ Being Outside $B(1/2, U)$}
Since the $\beta(A,A)$ distribution is subgaussian with subgaussianity constant $1 / (4(2A+1))$ (see, e.g., \cite{marchalarbel17}), the Chernoff bound holds:
\begin{equation*}
\mathbb{P}_{p\sim \beta(A,A)}\left[\bigg|\frac{1}{2}-p\bigg|\geq U\right]\leq 2e^{-2U^{2}(2A+1)}.
\end{equation*}
Picking $A \geq \frac{\mathrm{log}\left(2T\right)}{4U^{2}}-\frac{1}{2}$ guarantees that
\begin{equation*}
	\pi\big( [0,1] \setminus \ball(1/2, U)\big) \leq \frac{1}{T}.
\end{equation*}
\paragraph{Bayesian Regret} Consider the expected value of the regret against the comparator $p$.
\begin{equation*}
	\E[R_T(p)] = \E\Bigg[ \sum_{t= 1}^T (y_t - a_t)^2 - (y_t - p)^2\Bigg]
	= \E \bigg[ \sum_{t=1}^T (y_t - a_t)(p - a_t) + (y_t - p)(p - x_t)  \bigg] \, . 
\end{equation*}
Now the law of $y_t$ given $p$ is $\Ber(p)$, and $y_t$ given $p$ is independent from $x_t$. So upon conditioning over $p$ and applying the tower rule, we get
\begin{equation*}
	\E[R_T(p)] = \E\Bigg[\sum_{t=1}^T (a_t - p)^2 \Bigg] \, .
\end{equation*}
Moreover, note that since $a_t$ is $\sigma(y_1, \dots, y_{t-1})$-measurable, 
\begin{equation*}
	\E\big[(a_t - p)^2 \mid y_1, \dots , y_{t-1}\big]
	\geq \E\Big[ \bigl( \E\,[p \mid y_1, \dots, y_{t-1}]) - p \bigr)^2  \mid y_1, \dots, y_{t-1}  \Big] \, . 
\end{equation*}
(One can interpret this as saying that if the player knows in advance that the adversary will pick $p$ according to $\pi$ and then generate $y_t$'s iid, then the best the player can do is play the expected value of $p$ given the observations.)
Denote by $R_T^\star(p)$ the regret against the value of $p$ of the optimal strategy playing $a_t^\star := \E\,[p \mid y_1, \dots, y_{t-1}]$ at every time step; then for any strategy of the learner, $\E[R_T(p)] \geq \E[R_T^\star(p)]$.

For the specific choice of prior $\pi = \beta(A, A)$, \citet{Vovk01CompetitiveOS} computes the nice closed-form expression for $a^\star_t$, namely, 
\begin{equation*}
	a^\star_t = \frac{\sum_{s= 1}^{t-1}y_s + A}{t-1 + 2A}\, .
\end{equation*}
Now the whole expected regret is amenable to computation. Indeed, conditionally on $p$
\begin{align*}
	\E[(a^\star_{t+1} - p)^2 \vert p]
	&= \frac{1}{(t + 2A)^2} \E\left[  \left(\sum_{s=1}^t  y_t + A - pt - 2Ap \right)^2 \bigg \vert \;  p\right] \\
	&= \frac{1}{(t + 2A)^2} \E\left[  \left(\sum_{s=1}^t  (y_t - p) + A(1 - 2p) \right)^2 \bigg \vert \;  p \right] \\
	&= \frac{1}{(t + 2A)^2} \left(\sum_{s=1}^t\E\left[    (y_t - p)^2 \vert p \right]+ A^2(1 - 2p)^2  \right) \\
	&= \frac{1}{(t + 2A)^2} \left( tp(1-p) + A^2(1 -2p)^2  \right)   \geq p(1-p) \frac{t}{(t + 2A)^2} \, ,
\end{align*}
where we used the fact that the variables $(y_t - p)$ are independent and centered conditionally on $p$, and that their variance given $p$ is $p(1-p)$.
Therefore, 
\begin{equation*}
	\E\big[R_T^\star(p) \big\vert  \; p \big] \geq p(1-p)\sum_{t=1}^T \frac{t-1}{(t-1 + 2A)^2} .
\end{equation*}
Let us lower bound the sum by comparing it to an integral
\begin{equation*}
	\sum_{t=1}^{T}\frac{t-1}{(t-1+2A)^{2}}\geq \int_{0}^{T}\frac{u}{(u+2A)^{2}}\mathrm{d}u 
	=\frac{1}{2} \mathrm{log}\left(\frac{T}{2A}+1\right),
\end{equation*}
Finally, averaging over the prior distribution gives that $\mathbb{E}_{p \sim \beta(A,A)}\left[p(1-p)\right]=\frac{A}{4A+2}$, and consequently, for any strategy of the learner
\begin{equation}\label{eq:bayesian_regret}
	\E[R_T(p)] \geq \E[R_T(p)^\star] \geq \frac{A}{2(4A + 2)} \log \bigg(\frac{T}{2A} + 1 \bigg) 
	\, .
\end{equation}
\paragraph{Concluding the $1$-Dimensional Case}
We have shown that for $A \geq \log (2T) / (4U^2) -1/2$, and for any strategy, the worst-case regret against the comparator set $\ball(1/2, U)$ is lower bounded by
\begin{equation*}
	\frac{A}{2(4A + 2)} \log \bigg(\frac{T}{2A} + 1 \bigg)  - 1 \,. 
\end{equation*}
So replacing $A = \log (2T) / U^2$ 
\begin{multline}\label{eq:vovk-lower-bound-T-dep}
	\sup_{\theta\in\ball(1/2,U)}R_{T}(\theta) 
	\geq \frac{\log(2T) / U^2}{2(4 \log(2T) / U^2 + 2)} \log \biggl( \frac{TU^2}{2 \log(2T)}  + 1\biggr) -1 \\
	\geq \frac{\log(2)/2}{2(4 \log(2) / 2  + 2)} \log \biggl( \frac{TU^2}{2 \log(2T)}  + 1\biggr) -1  
	\geq 0.09 \log \biggl( \frac{TU^2}{2 \log(2T)}  + 1\biggr) -1 \,.
\end{multline}
(We used the fact that $a \mapsto a / (4a + a)$ is decreasing, and we bounded $U \leq 1/2$ and $T \geq 1$.)
\paragraph{Scaling}
To generalize to an adversary playing in $y_t \in [-Y, Y]$ and comparators in $[-U, U]$, note that $y_t/ (2Y) + 1/2 \in [0,1]$, so 
\begin{multline*}
	\sup_{(y_t)\in [-Y, Y]^T} \sup_{\theta\in\ball(0,U)}R_{T}(\theta) 
	= (2Y)^2 \sup_{(y_t)\in [0, 1]^T} \sup_{\theta\in\ball(1/2,U/(2Y))}R_{T}(\theta) \\
	\geq 0.36 \, Y^2 \log \biggl( \frac{T(U/2Y)^2}{2 \log(2T)}  + 1\biggr) - 4Y^2  \,. 
\end{multline*}
\paragraph{Generalizing to $d$-Dimensional Regression}
As in \citet{Vovk01CompetitiveOS}, consider the sequence of features $x_1 = (X, 0, \dots, 0)$, $x_2 = (0, X, 0,  \dots, 0)$, etc. Then partition the time steps according to the feature values. The regret over the $d$ partitions of $\lfloor T / d \rfloor$ time steps is then lower bounded as
\begin{equation*}
	\sup_{(y_t)\in [-Y, Y]^T}  \sup_{\theta \in [-U, U]^d} R_T(\theta) \geq
	0.36 \, dY^2 \log \biggl( \Big\lfloor\frac{T}{d}\Big\rfloor \frac{(UX/2Y)^2}{2 \log(2T/d)}  + 1\biggr) - 4dY^2 \,. 
\end{equation*}
Note that the best comparator could \emph{a priori} be anywhere in $[-U, U]^d \subset \ball\big(0, \sqrt{d} U\big)$. We rescale the value of $U$ by $\sqrt d$ to obtain the claimed result.
\end{proof}

\interproofspace

\begin{proof}{\bfseries of Theorem~\ref{thm:sq_lower_boundII}\; }
  Again, we start in the $1$-dimensional case and consider a constant sequence of features $x_t = 1$.
	Let $(y_t)$ be a sequence of i.i.d. random variables that take values $-Y$ or $Y$ with probability $1/2$, and set $x_t = 1$ for all $t$. The general case follows from rescaling $\theta \leftarrow X \theta$. We only consider the comparators $-U$ and $U$, then
	\begin{align*}
		\E\!\bigg[ \max_{\theta \in [-U, U]} R_T(\theta) \bigg]& \geq \E\bigg[\max_{\theta \in \{-U, U\}} \bigg\{ \sum_{t=1}^T (\theta_t -y_t)^2 - (\theta - y_t)^2 \bigg\} \bigg] \\
		& = \E\!\bigg[\max_{\theta \in \{-U, U\}} \bigg\{ \sum_{t=1}^T \theta_t^2 - \theta^2 + 2y_t\theta - 2y_t\theta_t \bigg\} \bigg] \\
		& =  \E\!\bigg[  \sum_{t=1}^T \big(\theta_t^2 - U^2 -2 y_t\theta_t) + U\Big| \sum_{t=1}^T y_t  \Big|  \bigg] \\
		& = \E\!\bigg[ \sum_{t=1}^T \theta_t^2\bigg] - TU^2  + U\E\!\bigg[\Big| \sum_{t=1}^T y_t  \Big|  \bigg] \geq U\E\!\bigg[\Big| \sum_{t=1}^T y_t  \Big|  \bigg] - T U^2 \, .
	\end{align*}
	We used the fact that $\E[y_t \theta_t] = \E[y_t]\,\E[\theta_t] = 0$, since $y_t$ is independent from $\theta_t$. Now since each $y_t$ is either $-Y$ or $Y$ with probability $1/2$, by Lemma A.9. in \citet{cesa-bianchi2006prediction}
	\begin{equation*}
		\E\bigg[\Big| \sum_{t=1}^T y_t  \Big|  \bigg]
		\geq Y\sqrt{\frac{T}{2}}
		\, .
	\end{equation*}
	Then
	\begin{equation*}
		\max_{y_{1:T} \in [-Y, Y]^T} \max_{\theta \in [-U, U]} R_T(\theta)
		\geq
		\E\!\bigg[ \max_{\theta \in [-U, U]} R_T(\theta)  \bigg]
		\geq U Y  \sqrt{\frac{T}{2}} \bigg(1 - \sqrt{2T} \frac{U}{Y} \bigg) \, .
	\end{equation*}
	The claimed bound follows by plugging in the condition that $T \leq (1/8)(Y/(XU))^2$.

  \paragraph{Extending to Dimension $d$}
  Using the sequence of feature $x_t = X e_{(t \mod d) + 1}$ and partitioning the time steps depending on the feature value, for any $T \leq (1/8)(Y/(XU))^2$,
  \begin{equation*}
    \sup_{(y_t)\in [-Y, Y]^T}  \sup_{\theta \in [-U, U]^d} R_T(\theta) \geq
    \frac{\sqrt{2}}{4} UXY \sum_{i = 1}^d\sqrt{T_i}
  \end{equation*}
  where $T_i = \lfloor T / d \rfloor + \mathds 1\{ i \leq (T \mod d)\}$ is the number of time steps for which $x_i = Xe_i$. Now note that as $T \geq d$,
  \begin{equation*}
    \sum_{i= 1}^d \sqrt{T_i} \geq \frac{1}{2}\sqrt{dT} \, . 
  \end{equation*}
  Indeed, let us check this by case disjunction. If $T \mod d \geq T / 2$, then the sum is at least $d / 2 \sqrt{T / d + 1} \geq \sqrt{dT} / T$. 
  Otherwise, $T \mod d < T / 2$, then $\floor T / d \rfloor \geq T/d - 1/2 \geq T / (2d)$, using the assumption that $T \geq d$. In this case, the sum is at least $d \sqrt{\floor T / d \rfloor} \geq \sqrt{dT /2}$. 
  
  Then, after rescaling $U$ by $1/ \sqrt d$, and noting that $[-U, U]^d \subset \ball(0, \sqrt d U)$, for any $T \leq (d/8)(Y/(XU))^2$,
  \begin{equation*}
    \sup_{(y_t)\in [-Y, Y]^T}  \sup_{\theta \in \ball(0, U)} R_T(\theta) \geq
    \frac{\sqrt{2}}{8} UXY \sqrt{T} \,.  
  \end{equation*}
\end{proof}

\begin{proof}{\bfseries of Theorem~\ref{thm:sq_lower_boundIII} \; }
  Once again, we assume that $UX \leq Y$, as the general result follows from applying it with $U = Y / X$ when $UX \geq Y$.
  
  The result in the $T \leq d$ regime follows from a somewhat trivial construction. At time $t$, let $x_t = X e_t$. Given the action from the learner, set $y_t = -Y \sign{a_t}$, and define $u_t = (U/\sqrt{T}) \sign{a_t}$. Consier the comparator with coordinates $u_t$ for $t \leq T$ and $0$ otherwise. Then $u_t$ has norm less than $U$, and the the total regret of the learner against $ \theta $ is at least  $T UXY/ \sqrt{T}$.

  Indeed, at all times $t$, we have $(a_t - y_t)^2 \geq Y^2$ since, e.g., $a_t \leq 0$ when $y_t = Y$. Similarly $((\langle \theta, x_t \rangle - y_t)^2) = (Y - UX / \sqrt T)^2$ and 
  \begin{multline*}
    (a_t - y_t)^2 -  (\langle \theta, x_t \rangle - y_t)^2
    = (a_t - y_t)^2 - \big((UX/\sqrt{T}) \sign{a_t} - y_t\big)^2 \\
    \geq Y^2 - \big(Y - UX / \sqrt{T}\big)^2
    =  2 \frac{UXY}{\sqrt T} - \frac{(UX)^2}{T} \geq \frac{UXY}{\sqrt T} \, .
  \end{multline*}
  The final inequality holds as long as $UX \leq Y$ and $T \geq 1$. Summing over $t \leq T$ gives the result.
\end{proof}
\interproofspace
\section{Lower Bound for the Hinge Loss}\label{app:hinge}
The next result shows a lower bound for the hinge loss that matches the
upper bound by \citet{mhammedi2020lipschitz} in the regime where $UX
\leq 1$. We prove it by relating the hinge loss to linear losses and
then applying a result of \citet{mcmahanstreeter}.
\begin{theorem}\label{thm:hingelower}
In $1$-dimensional online classification with the hinge loss, consider an algorithm that guarantees $R_T(0) \leq \eps$ for any sequence of data. Then for any $U,X>0$ and for any $T_0\geq 0$, there exists a sequence $(x_{t},y_{t})$ in $[-X,X]\times [-Y,Y]$ such that for some $T \geq T_0$, 
\begin{equation*}
  \sup_{\theta \in [-U, U]} R_{T}(\theta) \geq 0.336 \, (UX \bmin 1)\sqrt{T\ln\bigg(\frac{(UX \bmin 1)\sqrt{T}}{\delta}\bigg)} \, .
\end{equation*}
\end{theorem}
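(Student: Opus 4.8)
The plan is to reduce the hinge loss to one-dimensional online \emph{linear} optimisation and then invoke the lower bound of \citet{mcmahanstreeter}. First I would dispose of the regime $UX > 1$: since $[-(U \bmin 1/X),\, U \bmin 1/X] \subseteq [-U, U]$, any lower bound on $\sup_{|\theta| \le U \bmin 1/X} R_T(\theta)$ is also one on $\sup_{|\theta| \le U} R_T(\theta)$, and $(U \bmin 1/X) X = UX \bmin 1$. Hence it suffices to prove the bound with $U$ replaced by $U \bmin 1/X$, so I may assume throughout that $UX \le 1$ and target the comparator magnitude $UX$.

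Fix the features to the constant $x_t = X$ and restrict labels to $y_t \in \{-1, +1\}$, which lie in $[-Y, Y]$ as soon as $Y \ge 1$. Writing $g_t := -y_t \in \{-1,+1\}$, the learner's prediction $a_t$ defines a play $w_t := a_t$ in a unit-gradient linear optimisation problem with losses $g_t w_t$ and regret $R_T^{\mathrm{lin}}(w) := \sum_{t=1}^T g_t(w_t - w)$. Two elementary inequalities drive the reduction, both following from $\max\{0, 1+z\} \ge 1 + z$. Against the origin, $\loss(a_t, y_t) - 1 = \max\{0, 1 + g_t a_t\} - 1 \ge g_t a_t$, so summing gives $R_T^{\mathrm{lin}}(0) = \sum_t g_t a_t \le \sum_t (\loss(a_t,y_t) - 1) = R_T(0) \le \eps$; thus the induced linear learner inherits the origin-regret guarantee. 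Against a comparator $|\theta| \le 1/X$, the comparator's hinge loss linearises exactly, namely $\loss(\theta x_t, y_t) = \max\{0, 1 - y_t \theta X\} = 1 + g_t \theta X$ because $|y_t \theta X| \le UX \le 1$, whence $R_T(\theta) = \sum_t \loss(a_t,y_t) - \sum_t (1 + g_t \theta X) \ge \sum_t (1 + g_t a_t) - \sum_t (1 + g_t \theta X) = R_T^{\mathrm{lin}}(\theta X)$.

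With these two facts in hand I would apply \citet{mcmahanstreeter}: for a one-dimensional online linear optimisation algorithm with unit gradients that guarantees $R_T^{\mathrm{lin}}(0) \le \eps$, and for any horizon floor $T_0$, there is a gradient sequence $(g_t)$ and a time $T \ge T_0$ for which the regret against a comparator $w^\star$ of prescribed magnitude obeys $R_T^{\mathrm{lin}}(w^\star) \ge 0.336\, |w^\star| \sqrt{T \ln(|w^\star|\sqrt T/\eps)}$. I realise this sequence by setting $y_t = -g_t$, and I choose $|w^\star| = UX$, i.e.\ the comparator $\theta^\star$ with $|\theta^\star| = U$ (recall $UX \le 1$, so $|\theta^\star| = U \le 1/X$ keeps the linearisation valid). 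Chaining the inequalities, $\sup_{|\theta| \le U} R_T(\theta) \ge R_T(\theta^\star) \ge R_T^{\mathrm{lin}}(w^\star) \ge 0.336\,(UX)\sqrt{T \ln((UX)\sqrt T/\eps)}$, which under the standing assumption $UX \le 1$ is exactly the claimed bound (identifying $\delta = \eps$); the general case was reduced to this one in the first step.

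The main obstacle is getting the quoted lower bound of \citet{mcmahanstreeter} into precisely the form used above — in particular pinning down the absolute constant $0.336$ and the exact argument $|w^\star|\sqrt T/\eps$ of the logarithm, and checking that their construction permits the comparator magnitude to be prescribed as $UX \bmin 1$ at some horizon $T \ge T_0$. The reduction itself is routine once the cutoff $UX \le 1$ is used to linearise the comparator's loss; the only points needing care are the two inequalities that transfer the origin guarantee forward and the linear lower bound back, together with the bookkeeping that replaces $U$ by $U \bmin 1/X$ so that both the leading factor and the logarithmic argument collapse to $UX \bmin 1$.
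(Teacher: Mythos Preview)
Your proposal is correct and follows essentially the same approach as the paper: reduce to $UX \le 1$, fix $x_t = X$ with $y_t \in \{-1,+1\}$, lower-bound the hinge-loss regret by a one-dimensional linear-loss regret via $\max\{0,1-y_t a_t\} \ge 1 - y_t a_t$ (with equality for the comparator since $|\theta X y_t| \le 1$), transfer the origin guarantee to the linear problem, and then invoke \citet[Theorem~7]{mcmahanstreeter} at comparator magnitude $UX$. Your identification $\delta = \eps$ is also correct (this is a typo in the statement), and your concern about the exact constant and form of the McMahan--Streeter bound is resolved by the paper simply citing their Theorem~7 verbatim.
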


\begin{proof}
  Let us assume $UX \leq 1$, as the general case can be derived by applying the result with $U = 1/X$. Consider the feature sequence $x_t = X$, and denote by $a_t \in \R$ the sequence of actions produced by the algorithm. Then for any $|u| \leq U \leq 1/X$, we have $|u x_t y_t| \leq 1$ for all $t$ and therefore
  \begin{equation*}
    \ell(a_t, y_t) - \ell( ux_t, y_t)
    \geq 1 - a_ty_t - (1 -  uXy_t) 
    = -y_t(a_t - uX) \, . 
  \end{equation*}
  This implies that the regret of any algorithm for the hinge loss is lower bounded by its regret for a sequence of linear losses $\theta \mapsto - y_t \theta$ against the  comparator $uX$; denote this regret by $\tilde R_T(uX)$. Then for any sequence of $y_t$, we have $R_T(u) \geq R_T(uX)$.
  
  This implies in particular that $\tilde R_T(0) \leq R_T(0)$, which is assumed to be less than $\eps$ for any data sequence: the assumptions of \citet[Theorem~7]{mcmahanstreeter} are satisfied by the algorithm. Therefore, for any $\tilde U > 0$ and $T_0 \geq 0$, there exists a sequence of $y_t$'s, and a comparator $u$ with norm $\tilde U$ such that for some $T\geq T_0$,
  \begin{equation*}
    R_T(u / X) \geq \tilde R_T(u) \geq 0.336 \, \tilde U \sqrt{T \log \biggl( \frac{\tilde U \sqrt T}{\eps} \biggr)} \, .
  \end{equation*}
  The claimed bound follows by reparameterising $u$ by $u / X$.
\end{proof}

\end{document}